\newcommand{\name}{\textsc{Aug-PE}\xspace{}}
\newcommand{\pename}{\textsc{PE}\xspace{}}
\newcommand{\ftgenerator}{\textsc{DP-FT-Generator}\xspace{}}
\newcommand{\ftdownstream}{\textsc{DP-FT-Downstream}\xspace{}}
\newcommand{\yelprating}{Rating}
\newcommand{\yelpcategory}{Category}
\newcommand{\openreviewrecom}{Rating}
\newcommand{\openreviewarea}{Area}
\newcommand{\berttiny}{$\text{BERT}_{\text{Tiny}}$}
\newcommand{\bertmini}{$\text{BERT}_{\text{Mini}}$}
\newcommand{\bertsmall}{$\text{BERT}_{\text{Small}}$}
\newcommand{\robertabase}{RoBERTa-base}
\newcommand{\llama}{LLaMA}
\newcommand{\llamatwosevenb}{LLaMA-2-7B}
\newcommand{\llamatwo}{LLaMA-2}
\newcommand{\gptfour}{GPT-4}
\newcommand{\gptthree}{GPT-3}
\newcommand{\gpttwo}{GPT-2}
\newcommand{\gpttwox}{GPT-2-series}
\newcommand{\gpttwom}{GPT-2-Medium}
\newcommand{\gpttwol}{GPT-2-Large}
\newcommand{\chatgpt}{GPT-3.5}
\newcommand{\gptthreepointfive}{GPT-3.5}
\newcommand{\claude}{Claude}
\newcommand{\bard}{Bard}
\newtcolorbox{mybox}{
    colback=gray!20, %
    colframe=black, %
    arc=1mm, %
    boxrule=1pt, %
    left=1mm, %
    right=1mm, %
    top=1mm, %
    bottom=1mm %
}
\colorlet{shadecolor}{gray!20}
\definecolor{babyblueeyes}{rgb}{0.63, 0.79, 0.95}
\definecolor{babyblue}{rgb}{0.54, 0.81, 0.94}
\definecolor{bluegray}{rgb}{0.4, 0.6, 0.8}
\definecolor{cadmiumgreen}{rgb}{0.0, 0.42, 0.24}
\definecolor{camouflagegreen}{rgb}{0.47, 0.53, 0.42}
\definecolor{darkseagreen}{rgb}{0.56, 0.74, 0.56}
\definecolor{lightgray}{RGB}{239,240,241}
\newcommand*{\tikzmk}[1]{\tikz[remember picture,overlay,] \node (#1) {};\ignorespaces}
\newcommand{\boxit}[1]{\tikz[remember picture,overlay]
{\node[xshift=-0pt,yshift=-0pt,fill=#1,opacity=.15,fit={(A)($(B)+(0.9\linewidth,.8\baselineskip)$)}] {};}\ignorespaces}      
\colorlet{mypink}{red!30}
\colorlet{myblue}{cyan!50}
\colorlet{mygray}{gray!60}
\newcommand{\highlightbox}[1]{\colorbox[RGB]{239,240,241}{#1}}
\definecolor{c0}{cmyk}{1,0.3968,0,0.2588} 
\definecolor{c1}{cmyk}{0,0.6175,0.8848,0.1490} 
\definecolor{c2}{cmyk}{0.1127,0.6690,0,0.4431} 
\definecolor{c3}{cmyk}{0.3081,0,0.7209,0.3255} 
\newtcbox{\hlprimary}{on line,colback=c0!10,colframe=white,size=fbox,arc=3pt, box align=base,before upper=\strut, top=-2pt, bottom=-4pt, left=-1pt, right=-1pt, boxrule=0pt}
\newtcbox{\hlprimarytab}{on line, box align=base, colback=c0!10,colframe=white,size=fbox,arc=3pt, before upper=\strut, top=-2pt, bottom=-4pt, left=-2pt, right=-2pt, boxrule=0pt}
\newtcbox{\hlsecondary}{on line,colback=c1!10,colframe=white,size=fbox,arc=3pt, box align=base,before upper=\strut, top=-2pt, bottom=-4pt, left=-1pt, right=-1pt, boxrule=0pt}
\newtcbox{\hlsecondarytab}{on line, box align=base, colback=c1!20,colframe=white,size=fbox,arc=3pt, before upper=\strut, top=-2pt, bottom=-4pt, left=-2pt, right=-2pt, boxrule=0pt}
\newtcolorbox{hlmultiline}{on line,colback=decentgrey!75,colframe=white,size=fbox,arc=3pt, box align=base, top=0pt, bottom=2pt, boxrule=0pt, before=\adjustbox{valign=c}\bgroup, after=\egroup, before upper=\strut}
\newcolumntype{Y}{>{\centering\arraybackslash}X}
\newcolumntype{Z}{>{\raggedleft\arraybackslash}X}
\newcommand{\uashifted}{{\tiny$\uparrow$}}
\newcommand{\ua}[1]{{\scriptsize\hlsecondarytab{\uashifted{#1}}}}
\definecolor{c4}{cmyk}{0.6765,0.2017,0,0.0667} 
\definecolor{c5}{cmyk}{0,0.8765,0.7099,0.3647} 
\definecolor{darkgrey}{RGB}{149,149,149}
\definecolor{decentgrey}{RGB}{242,242,242}
\theoremstyle{plain}
\newtheorem{theorem}{Theorem}
\crefname{section}{\S}{\S}
\crefname{table}{Tb.}{Tbs.}
\crefname{appendix}{App.}{Apps.}
\Crefname{theorem}{Thm.}{Thms.}
\Crefname{proposition}{Prop.}{Props.}
\crefname{algorithm}{Alg.}{Algs.}
\Crefname{assumption}{Asm.}{Asms.}
\crefname{mechanism}{Mech.}{Mechs.}
\Crefname{definition}{Def.}{Def.}
\newcommand{\red}[1]{\textcolor{red}{#1}}
\newcommand{\green}[1]{\textcolor{OliveGreen}{#1}}
\newcommand{\orange}[1]{\textcolor{orange}{#1}}
\newcommand{\blue}[1]{\textcolor{cyan}{#1}}
\definecolor{mygray}{gray}{0.6}
\newcommand{\chulin}[1]{\textcolor{black}{#1}}
\newcommand{\myparatightestn}[1]{ \noindent\textbf{{#1}}}
\newcounter{packednmbr}
\NewDocumentCommand{\codeword}{v}{%
\texttt{\textcolor{blue}{#1}}%
}
\newcommand{\size}[2]{{\fontsize{#1}{0}\selectfont#2}}
\newcommand{\randomsampleapiname}{\size{9}{\textsf{RANDOM\_API}}}
\newcommand{\samplevariationapiname}{\size{9}{\textsf{VARIATION\_API}}}
\newcommand{\samplevariationapit}[1]
\newcommand{\dpvotingfunctionname}{\size{9}{\textsf{DP\_NN\_HISTOGRAM}}}
\newcommand{\embeddingnetworkname}{\Phi}
\newcommand{\priv}{\mathrm{priv}}
\newcommand{\syn}{\mathrm{syn}}
\newcommand{\yelp}{Yelp}
\newcommand{\openreview}{OpenReview}
\newcommand{\pubmed}{PubMed}
\def\calD{{\mathcal{D}}}
\newcommand{\wbox}[1]{\setlength{\fboxsep}{1pt}\colorbox{yellow!50}{#1}}
\icmltitlerunning{Differentially Private Synthetic Data via Foundation Model APIs 2: Text}
\begin{document}

\twocolumn[
\icmltitle{Differentially Private Synthetic Data via Foundation
Model APIs 2: Text}

\icmlsetsymbol{equal}{*}

\begin{icmlauthorlist}
\icmlauthor{Chulin Xie}{uiuc}
\icmlauthor{Zinan Lin}{msr}
\icmlauthor{Arturs Backurs}{msr}
\icmlauthor{Sivakanth Gopi}{msr}
\icmlauthor{Da Yu}{sys}
\icmlauthor{Huseyin Inan}{msr}
\icmlauthor{Harsha Nori}{msr}
\icmlauthor{Haotian Jiang}{msr}
\icmlauthor{Huishuai Zhang}{msr}
\icmlauthor{Yin Tat Lee}{msr}
\icmlauthor{Bo Li}{uiuc,uchi}
\icmlauthor{Sergey Yekhanin}{msr}
\end{icmlauthorlist}

\icmlaffiliation{uiuc}{University of Illinois Urbana-Champaign}
\icmlaffiliation{uchi}{University of Chicago}
\icmlaffiliation{msr}{Microsoft Research}
\icmlaffiliation{sys}{Sun Yat-sen University}

\center {\footnotesize 
\texttt{chulinx2@illinois.edu},
\texttt{\{zinanlin,arturs.backurs,sivakanth.gopi,huseyin.inan, hanori,haotianjiang,huishuai.zhang,yintatlee,yekhanin\}@microsoft.com}, 
\texttt{yuda3@mail2.sysu.edu.cn},~\texttt{bol@uchicago.edu} 
}

\icmlkeywords{Machine Learning, ICML}

\vskip 0.3in
]

\printAffiliationsAndNotice{}  %

\begin{abstract}

Text data has become extremely valuable due to the emergence of machine learning algorithms that learn from it. A lot of high-quality text data generated in the real world is private and therefore cannot be shared or used freely due to privacy concerns.
Generating synthetic replicas of private text data with a formal privacy guarantee, i.e., differential privacy (DP), offers a promising and scalable solution. 
However, existing methods necessitate DP finetuning of large language models (LLMs) on private data to generate DP synthetic data. This approach is not viable for proprietary LLMs (e.g., \chatgpt{}) and also demands considerable computational resources for open-source LLMs.  
\citet{lin2023differentially} recently introduced the \emph{Private Evolution} (PE) algorithm to generate DP synthetic images with only API access to diffusion models.
In this work, \chulin{we propose an augmented  PE algorithm, named  \name{}, that applies to the complex setting of text}. We use API access to an LLM and generate DP synthetic text without any model training. We conduct comprehensive experiments on three benchmark datasets. Our results demonstrate that \name{}  produces DP synthetic text that yields competitive utility with the SOTA DP finetuning baselines.
This underscores the feasibility of relying solely on  API access of LLMs to produce high-quality DP synthetic texts, thereby facilitating more accessible routes to privacy-preserving LLM applications. Our code and data are available at \href{https://github.com/AI-secure/aug-pe}{https://github.com/AI-secure/aug-pe}.
\end{abstract}

\section{Introduction}
With recent advances in natural language processing (NLP), text-based applications have greatly facilitated our lives. These include AI-assisted medical record summaries \cite{rumshisky2016predicting}, email and document autocomplete tools \cite{voytovich2022natural,microsoftprivacy2023}, and personalized chatbots \cite{chew2022use}. However, all these applications (among others) rely on collecting private text data from users to train LLMs, which raises serious privacy concerns as LLMs may memorize and leak sensitive information about users \cite{carlini2021extracting,lukas2023analyzing,wang2023decodingtrust}. Differentially private synthetic text is a promising and actively studied solution \cite{putta2022differentially,bommasani2019towards}. It aims to create a new text dataset with similar characteristics to the original private data while ensuring privacy by protecting sensitive information in each sample (known as Differential Privacy (DP) \cite{dwork2014algorithmic}). The DP synthetic text can then be used in developing any downstream NLP system without adding extra privacy risks. It also allows the safe sharing of private data more broadly. For example, hospitals can share their private medical data for research purposes by creating a DP synthetic version of their data.
\begin{figure*}
\centering
    \includegraphics[width=1\linewidth]{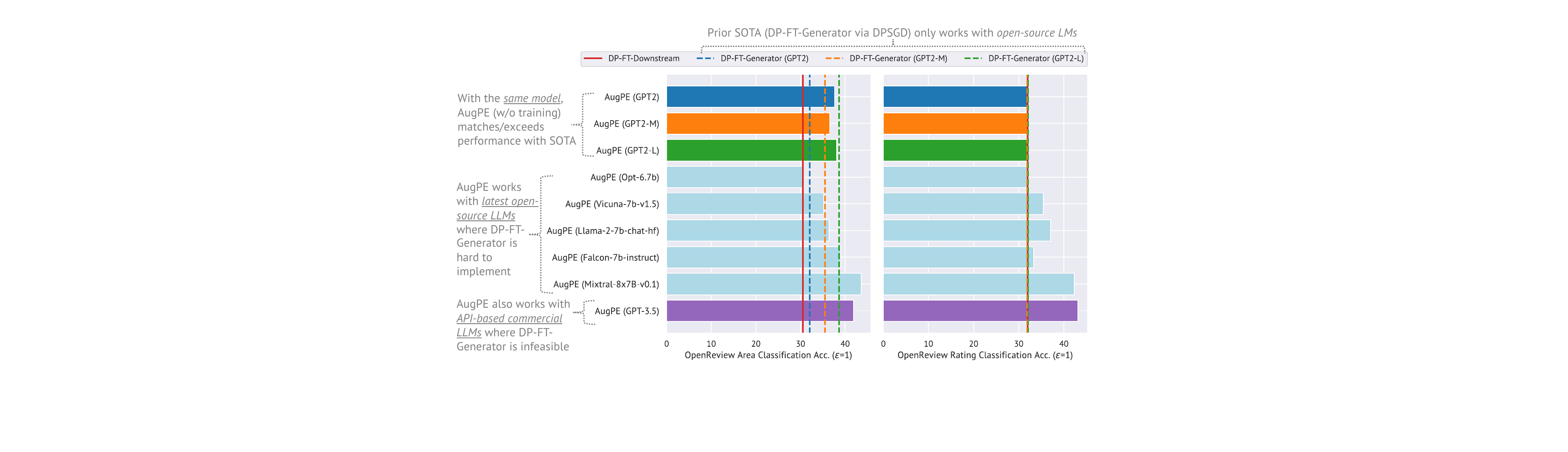} 
     \vspace{-6mm}
        \caption{\small Instead of finetuning LLMs with DP-SGD to generate synthetic text,  \name{} only requires inference APIs of LLMs. \name{} works with the latest open-source LLMs and API-based LLMs to generate DP synthetic text with improved utility on \openreview{} dataset, where DP-SGD finetuning is either hard to implement or infeasible.}
    \label{fig:compare}
    \vspace{-5mm}
\end{figure*}

The state-of-the-art DP synthetic text approach is to \emph{finetune pretrained generative language models (LMs) on private data} with DP-SGD (a DP variant of SGD~\cite{abadi2016deep})~\cite{yue2022synthetic,kurakin2023harnessing,mattern2022differentially} (short-handed as \emph{DP finetune generator}; see \cref{fig:compare}). Unlike non-DP ML applications, which have been greatly advanced by powerful LLMs such as \gptfour{} \cite{openai2023gpt4} and \llama{} \cite{touvron2023llama,touvron2023llama2} in a short time after they are released, the state-of-the-art DP synthetic text approaches are unfortunately still based on \gpttwo{}.\footnote{The 175 billion-parameter \gptthree{} has also been used for DP synthetic text \cite{he2022exploring}. However, the solution is not publicly accessible as \gptthree{} is proprietary.} The reasons are: (1) Many powerful LLMs such as \gptfour{}, \claude{}, and \bard{} are only accessible through APIs. DP finetuning them is not feasible.\footnote{Although standard finetuning APIs are provided for some of the models \cite{gpt35finetune}, DP finetuning requires a special implementation and no model provides this custom API to date.} (2) Even though some LLMs (e.g., LLaMA) are open-source, finetuning them with DP is resource-intensive and non-trivial to implement due to the need to calculate \textit{per-sample} gradients (see \cref{sec:background}).

A recent DP synthetic data framework called Private Evolution (\pename{}) \cite{lin2023differentially} offers a new opportunity to circumvent these challenges by only requiring API access to foundation models, without needing any model training. The high-level idea is to first draw random samples from a foundation model, and then iteratively improve them by selecting (with DP) the most similar ones to the private dataset and querying foundation models to generate more of such samples. \pename{} shows promising results on \emph{images} by leveraging pretrained Diffusion Models~\cite{rombach2022high}: in certain cases, \pename{} achieves an even better privacy-utility trade-off than DP finetuned generators \cite{lin2023differentially}.

However, extending \pename{} to text is highly non-trivial. 
\pename{} requires APIs that generate random samples and variations of a given sample, which need to be \textit{redesigned} for text. \chulin{In particular, unlike generating image variants in the continuous pixel space where diversity 
can be easily manipulated using existing model hyperparameters (e.g., guidance scale in diffusion model~\cite{ho2021classifier}), texts operate in a discrete space, making it challenging to effectively \textit{control} the generation diversity.  In addition, in contrast to images with fixed dimensionality, text data exhibit varied \textit{lengths} which adds another layer of complexity.  To this end, we propose an augmented \pename{} algorithm (\name{}) with \textit{new \textbf{generation} and \textbf{selection} techniques} that allow us to i) elicit a larger set of more diverse and higher-quality texts from LLMs with appropriate sequence length and ii) effectively select the most relevant texts.}
Our contributions are: %
\begin{itemize}[noitemsep,leftmargin=*]
    \vspace{-4mm}
    \item %
    We propose \name{} for high-quality DP synthetic text generation leveraging API access to powerful LLMs. This includes both a practical instantiation of \pename{} on texts and fundamental algorithmic innovations that may benefit future applications of \pename{}. %
    \item  We conduct  comprehensive evalutions of \name{} on \yelp{}, \openreview{} (ICLR 2023), and \pubmed{} (Aug 2023) datasets with various LLMs, including  \gpttwo{}-series models,  \gptthreepointfive, and  open-source LLMs. We
    show that under \emph{the same pretrained LM} (\gpttwo{}-series) and privacy budget $\epsilon=4,2,1$, \name{} can generate DP synthetic text that achieves comparable or even better performance than finetuning baselines in some cases, in terms of downstream task utility and similarity between synthetic and real samples. 
    Leveraging \textit{more powerful LLMs} such as \gptthreepointfive{} (where DP finetuning is not applicable) and five open-source LLMs (where DP finetuning is hard to implement), the performance of \name{} can be significantly improved. 
    Additionally, \name{} can be more computationally efficient than DP finetuning by requiring LLM \emph{inference} APIs only. 
    \item We explore the properties of \name{} including its text length distribution,  its compatibility with stronger LLMs as data generators and downstream models, and its behaviors under data scaling, to provide insights for future development of \pename{}.
\end{itemize}

\vspace{-5mm}
\section{Background}
\label{sec:background}

\myparatightestn{Differential Privacy (DP).} 
$(\epsilon, \delta)$-DP ensures that the output of a randomized mechanism $\mathcal{M}$ is close regardless of whether an individual data record is included in the input or not. Specifically, given any pair of two adjacent datasets $\calD, \calD'$ (i.e., adding or removing one sample), any possible output set $E$, it holds that $ \operatorname{Pr}[\mathcal{M}(\calD) \in E] \leq e^{\epsilon} \operatorname{Pr}\left[\mathcal{M}\left(\calD^{\prime}\right) \in E\right]+\delta.$
Moreover, arbitrary post-processing of the output of an $(\epsilon,\delta)$-DP mechanism does not incur additional privacy loss, based on the \textit{post-processing property} of DP~\cite{dwork2014algorithmic}.

\myparatightestn{DP synthetic text.}
To guarantee DP for private training data, one method involves using DP-SGD~\cite{abadi2016deep} during model training for specific NLP tasks~\cite{yu2022differentially,li2021large}. Alternatively, one can finetune pretrained generative language models, such as GPT-2, with private data using DP-SGD and then generate synthetic text datasets~\cite{putta2022differentially,bommasani2019towards} (\cref{fig:compare}). Such DP synthetic texts can be employed in an arbitrary number of non-privately trained downstream tasks without increasing privacy loss. Studies by \citet{yue2022synthetic,mattern2022differentially,kurakin2023harnessing} indicate that training downstream models on DP synthetic text yields performance akin to directly training them on real data with DP, highlighting the good quality of synthetic data.

However, given that state-of-the-art LLMs (e.g., \gptfour{}, \claude{}, \gptthreepointfive{}) do not provide model weights, DP finetuning them is infeasible.  Even for open-source LLMs (e.g., \llama{} \cite{touvron2023llama,touvron2023llama2}), it is resource-intenstive to perform finetuning \cite{malladi2023fine}. 
Finetuning with DP-SGD is even harder due to the well-known challenges of \textit{per-sample gradient} calculations for clipping to guarantee DP. 
Even with optimization techniques \cite{malladi2023fine,he2022exploring}, DP finetuning is still memory and computationally intensive due to large batch sizes and long training iterations required to reach a good fidelity-privacy trade-off \cite{anil2021large}. 
Here, we study an API-based method for DP synthetic text generation to overcome these challenges, \chulin{which only requires model inference and is applicable no matter whether the LLM is open-sourced or not.}

Additionally, there is a line of work on text-to-text privatization techniques, which provide different privacy guarantees than DP, such as word-level metric DP or sample-level local DP. We defer more discussion and comparison to  \cref{app:text_to_text_privatization}.

\section{Method}
\subsection{Preliminaries on Private Evolution (PE)}
\pename{} is recently proposed as an alternative to DP finetuning for DP synthetic data generation \cite{lin2023differentially} by merely requiring APIs of pretrained models, and thus is easier to implement and deploy and can leverage API-based models. The original \pename{} algorithm (for unconditional generation)\footnote{The conditional version of PE is running \cref{algo} for the private samples from each class/label separately; see \citet{lin2023differentially}.} is the $L=1$ %
case in \cref{algo}. 
\pename{} works by first calling \randomsampleapiname{} that generates random samples from the foundation model (\cref{line:random}), and then iteratively: (1) using private samples to vote for their nearest synthetic samples (under embedding model $\embeddingnetworkname{}$) to construct a \dpvotingfunctionname{} (\cref{line:gethistogram}), (2) drawing samples according to the histogram (\cref{line:drawfromhist}), and (3) passing those samples through \samplevariationapiname{} which generates new samples that are similar to the given one (\cref{line:variation}), e.g., images with a similar object.  %

 While the PE framework is general across modalities, its core components including $\embeddingnetworkname{}$ (the embedding model), \randomsampleapiname{} (API for generating random samples from the pretrained model), and \samplevariationapiname{} (API for generating new samples that are similar to the given one) require domain-specific designs, and the original paper \cite{lin2023differentially} only explores their implementation for images. Compared to images, text introduces unique challenges. For example, unlike images which have a fixed dimensionality, the length of text can vary. %
 In addition, the original \pename{} algorithm yields unsatisfactory text quality.
In the following, we explore our design choices for each component and propose our augmented version on text,  \name{} (shown in \cref{algo} and \cref{fig:alg_overview}) with new algorithmic techniques to increase the diversity and quality of text generation.

\begin{figure}
    \centering
    \includegraphics[width=1.\linewidth]{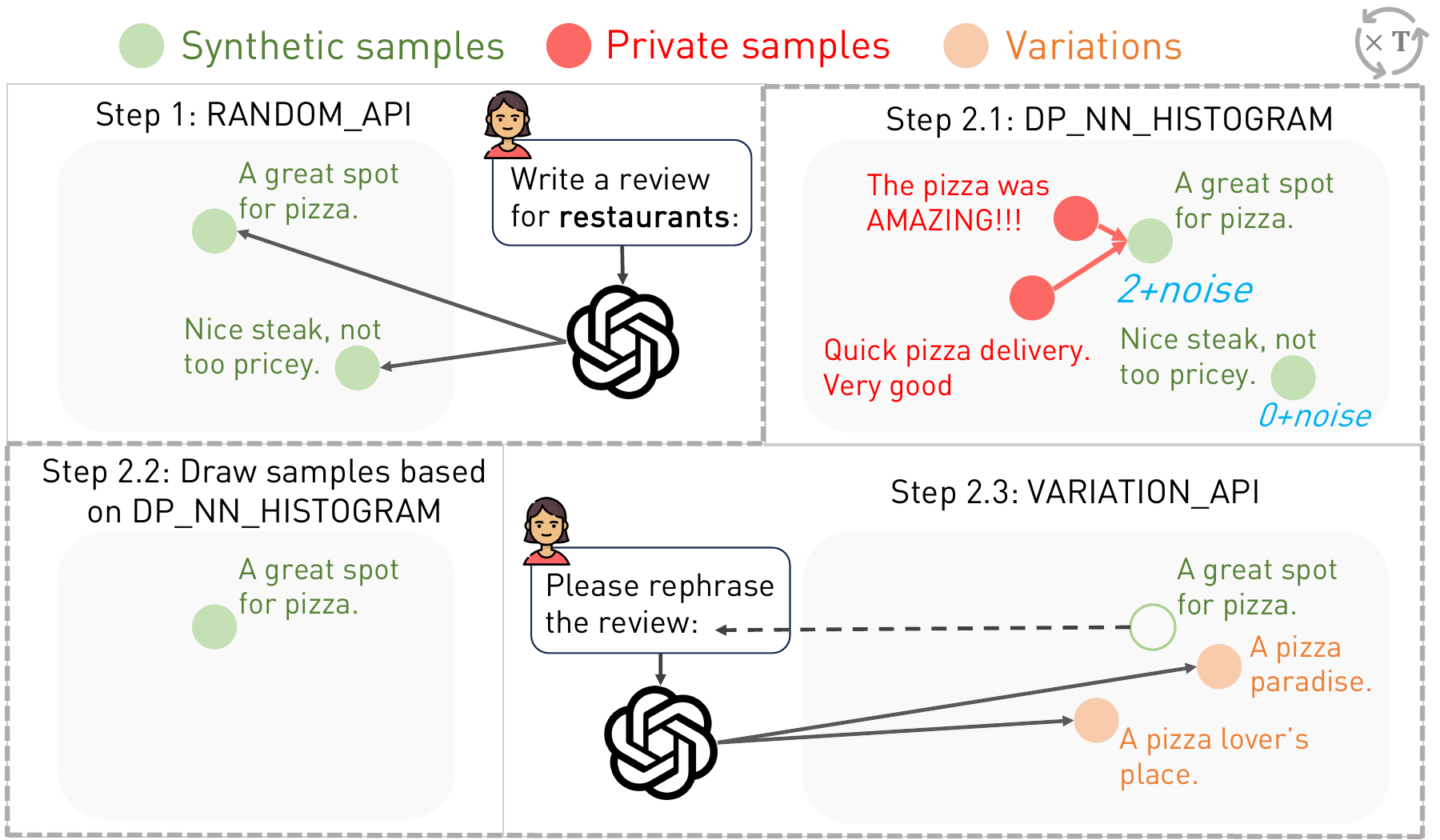}
    \vspace{-6mm}
    \caption{\small Overview of \name{}. We use two private \& synthetic samples (reviews for the ``restaurant'' class) for illustration. %
    \textbf{Step 1} (\randomsampleapiname{}, \cref{line:random}): we use prompts to generate \green{random samples} from the LLM. \textbf{Step 2}: we iteratively go through steps 2.1-2.3 to refine the \green{synthetic samples} towards the \red{private samples}. \textbf{Step 2.1} (\cref{line:gethistogram}): each \red{private sample} votes for their closet \green{synthetic sample} (using self-embedding~\cref{line:selfemb} or mean embedding \cref{line:meanemb}) in the embedding space induced by embedding model $\embeddingnetworkname{}$. ``A great spot for pizza'' gets 2 votes, and the other sample gets 0 votes. We then add Gaussian noise to the votes to ensure DP. This gives us the \blue{DP Nearest Neighbor Histogram} (\dpvotingfunctionname{}). \textbf{Step 2.2}: we resample the generated texts according to the histogram. We assume that only ``A great spot for pizza'' remains. \textbf{Step 2.3} (\samplevariationapiname{}): we use prompts to ask the LLM to generate \orange{new similar samples}, which are the initial \green{synthetic samples} in the next iteration.
    The prompts are simplified for illustration; see \cref{app:exp-details} for the complete prompts. 
  }
    \label{fig:alg_overview}
    \vspace{-4mm}
\end{figure}

\vspace{-1mm}
\subsection{\name{} Design}
\label{sec:pe_design}
\myparatightestn{\randomsampleapiname{}.} Given the strong instruction-following capability of LLMs, we consider directly using prompts to generate samples (step 1 in \cref{fig:alg_overview}). Following \citet{yue2022synthetic}, we assume that class labels are non-private. Therefore, we put class label in the prompt (e.g., ``restaurant'' in \cref{fig:alg_overview}).  
To encourage diverse generation, we propose a \emph{pseudo-class} approach, where we generate a list of subcategories for each class from \chatgpt{} and randomly sample one subcategory as the keyword to put in the prompt for each generation (e.g., \textit{Steakhouse},
\textit{Bistros} for restaurants).

\myparatightestn{\samplevariationapiname{}} 
 takes %
 a sample as input and outputs its variations.\footnote{While the function processes each sample independently, for notation simplicity, we input an entire dataset to \samplevariationapiname{}, which outputs corresponding variations for each sample within it.}
Unlike image diffusion models used in \citet{lin2023differentially}, 
text models usually do not provide off-the-shelf variation APIs. Again, we leverage the instruction-following capability of LLMs to implement this via prompting.
We propose two variation methods: \textit{paraphrasing} and \textit{fill-in-the-blanks}.
For \textit{paraphrasing}, we use the prompt ``Please rephrase the below sentences: $\{\mathrm{input}\}$''. 
For \textit{fill-in-the-blanks}, we mask $p\%$ tokens of $\mathrm{input}$ as blanks, resulting in $\mathrm{masked\_input}$, and use ``Please fill in the blanks for the below sentences: $\{\mathrm{masked\_input}\}$'' as the prompt.
Given the in-context learning ability of recent LLMs, we provide \textit{few-shot demonstrations} to improve the generation quality. %
To add diversity to the generated variations, we create \textit{tone} candidates (e.g.,  ``in a creative way'', ``in a professional style''), randomly subsample one tone, and add such phrase into the prompt for each generation.

    \begin{minipage}{0.5\textwidth}
    \vspace{-5mm}
      \begin{algorithm}[H]
      \scriptsize
        \caption{\footnotesize \highlightbox{Augmented} Private Evolution (\name)}
\label{algo}
        \KwIn{ private dataset $S_{\mathrm{pri}}$, noise multiplier $\sigma$, text embedding model  $\Phi$, number of synthetic samples  $N_\mathrm{syn}$, $K$, $L$}
    \KwOut{Synthetic text dataset ${S_\mathrm{syn}}_{T}$}
     \SetKwFunction{funone}{DP\_NN\_HISTOGRAM}
     $E_\mathrm{pri} = \Phi (S_{\mathrm{pri}})$\label{line:priemb}\;  

    $S_0 \gets  $\highlightbox{$\mathrm{RANDOM\_API}$} ($N_\mathrm{syn}$\highlightbox{$*\mathrm{L}$})\label{line:random}\;
    
    \For{iteration  $t=0$ \KwTo $T-1$}
        {  
           
           \tcp{\scriptsize  embedding calculation for  synthetic samples }
             \tikzmk{A}
             \uIf{  $K==0$}{$E_t=\Phi (S_t) $  \label{line:selfemb}}
               \ElseIf{ $K>0$}{
                $S_t^k \gets $\highlightbox{$\mathrm{VARIATION\_API}$} $ (S_t)$ for $k =1,2 \ldots, K$
                
           $E_t= \frac{1}{K}\sum_{k=1}^K   \Phi (S_t^k)$ \label{line:meanemb}
           }
          \tcp{\scriptsize DP histogram calculation} 
           $\mathrm{Histogram}_t \gets  \funone(E_t,E_\mathrm{pri}, \sigma )$\label{line:gethistogram}\;
           
           $P_t \gets $  $\mathrm{Histogram}_t$ / sum ($\mathrm{Histogram}_t$)\;

            \tcp{\scriptsize synthetic sample selection and generation} 
           \uIf{$L==1 $}{
                $S_t'\gets$ draw $N_\mathrm{syn}$ samples with replacement from $S_t$ with probability $P_t$\label{line:drawfromhist}

                 $S_{t+1} \gets $\highlightbox{$\mathrm{VARIATION\_API}$}$ (S_t')$   \label{line:variation}

                save dataset ${S_\mathrm{syn}}_{t+1} \gets S_{t+1}$ 
            }
           \tikzmk{A}
           \ElseIf{$L>1$}{ 
                    $S_t'\gets$  \textbf{rank samples} by probabilities  $P_t$ and draw top $N_\mathrm{syn}$ samples \label{line:directuse}

                    save dataset   ${S_\mathrm{syn}}_{t+1} \gets S_t'$ \label{line:savestepinllargerthanone}

                      $S_{t+1}^{j} \gets \mathrm{VARIATION\_API} (S_t') $ for \textbf{$\mathbf{j=1, 2 \ldots, L-1}$  } \label{line:variationexpanded}

                   $S_{t+1} \gets [S_{t+1}^1, ..., S_{t+1}^{L-1}, \mathbf{S_t'}]$ \label{line:expandedsample}
               }   
           \tikzmk{B}\boxit{mygray} 
        }
    \KwRet{ ${S_\mathrm{syn}}_{T}$}

   \SetKwProg{fooo}{Procedure}{}{}
    \fooo{\funone{$E_\mathrm{syn},E_\mathrm{pri}, \sigma$}}{
    \KwIn{ 
    synthetic embedding set $E_\mathrm{syn} =\{e_j\}_{j=1}^{n}$,
    private embedding set $E_\mathrm{pri}$, 
    noise level $\sigma$, distance function $d(\cdot,\cdot)$}
    
     $\mathrm{Histogram} \gets [0,...,0]$  %
     
     \For{ $e_{\mathrm{pri}} \in E_\mathrm{pri}$ }
            { 
            $i= \arg\min_{j \in [n]} d(e_{\mathrm{pri}}, e_j ) $;
            
            $\mathrm{Histogram}[i] \gets \mathrm{Histogram}[i] +1$
           }
      $\mathrm{Histogram} \gets \mathrm{Histogram}+  \mathcal{N}(0, \sigma^2 I_{n})$ \label{line:addnoise}
      
       \KwRet{$\mathrm{Histogram}$}
    }
      \end{algorithm}
    \end{minipage}

\myparatightestn{Adaptive text lengths in \samplevariationapiname{}.} 
The distribution of text length in real-world datasets is usually fat-tailed: most samples are short while a few are long (\cref{fig:length-compare}). In DP-finetuning-based approaches, to faithfully capture long texts, we need to set a large max token length (denoted by $\mathrm{max\_token}$). However, this would significantly increase the computation cost. Prior work \cite{yue2022synthetic} circumvents this problem by setting a \emph{small} $\mathrm{max\_token}$ at the cost of the capability to generate long texts.
\name{} faces the same challenge. 
Since APIs usually charge by token usage, a high $\mathrm{max\_token}$ raises costs (as generated text can exceed needs), while a low $\mathrm{max\_token}$ sacrifices fidelity. 

To address the challenge, we leverage \pename{} to learn text lengths automatically by adjusting \emph{per-sample} $\mathrm{max\_token}$ adaptively. Specifically, in  \samplevariationapiname{}, we add %
``with $\{\mathrm{targeted\_word}\}$ words'' in the prompt to specify the desired word count in the %
generation. 
$\mathrm{targeted\_word}$ is modified by
setting $\mathrm{targeted\_word} = \max\{\mathrm{original\_word} +\mathcal{N}(0, \sigma_{word}^2), \mathrm{\min\_word}\}$ 
where $ \mathrm{original\_word}$ is the %
word count of
$\mathrm{input}$, $\sigma_{word}^2$ is  Gaussian noise variance
and $\mathrm{\min\_word}$ is a minimal targeted word ensuring useful generations. 
We set $\mathrm{max\_token}=\lfloor \mathrm{targeted\_word}* \mathrm{w2t\_ratio} \rfloor$ for LLM API calls where $\mathrm{w2t\_ratio}$ is the approximate number of tokens per word%
~\cite{openaitoken2word}.

\myparatightestn{Embeddings calculation and DP nearest neighbor histogram.}
We use off-the-shelf text embedding models $\Phi$ %
to calculate the embedding of private/synthetic samples. 
Notably, the embedding of synthetic samples can be defined either by their self-embedding (when $K=0$) or the averaged embedding from $K$ variations (when $K>0$).
After calculating embeddings, %
each private sample votes for its nearest synthetic sample in the embedding distance, which results in the $\mathrm{Histogram}_t$ for synthetic samples. 
As the voting utilizes private samples, we add Gaussian noise $\mathcal{N}(0, \sigma^2)$ to each bin of $\mathrm{Histogram}_t$ to ensure DP.

\myparatightestn{Sample selection and generation.}
\name{} introduces significant enhancements over the original \pename{}  for generating more diverse samples and selecting/retaining high-quality samples. Specifically, to \emph{enhance sample diversity}, we propose  the following methods:
(1) The random sampling based on the histogram probability $P_t$ (\cref{line:drawfromhist}) in original \pename{} results in repeated samples, causing performance degradation for $S_t'$. To mitigate this, \name{} ranks synthetic samples according to their probability and selects only the top $N_\mathrm{syn}$ samples, enhancing the diversity without sample redundancy (\cref{line:directuse}).
(2) Instead of a single variation, \name{} generates $L-1$ variations for each selected sample in  $S_t'$, creating a larger and more diverse synthetic dataset $S_{t+1}$ for subsequent iterations (\cref{line:variationexpanded}).
(3) We modify the size of the initial dataset to be $L$ times larger than $N_{\mathrm{syn}}$, matching the expanded size of $S_{t+1}$ (\cref{line:random}).
To \emph{select/retain high-quality samples}, we propose the following methods:
(1) The selected samples $S_t'$ are also included in the next iteration's dataset $S_{t+1}$, increasing the likelihood of retaining high-quality synthetic candidates (\cref{line:expandedsample}). 
(2) For LLMs, we find that when the variation API produces samples with large variations, the averaged embedding from the variations is not representative of the actual sample. Therefore, we use $K=0$ so the nearest neighbor voting is performed on the self-embedding of synthetic samples and we directly use those selected, good samples as algorithm's output ${S_\mathrm{syn}}_{t+1} \gets S_t'$ (\cref{line:savestepinllargerthanone}).  

In practice, we use  $\{K=\text{\#variations}, L=1\}$ as original \pename{}, and $\{K=0, L=\text{\#variations}+1\}$ as \name{}, so that the number of API calls for generating variations (i.e., \#variations)  are kept the same for fair comparisons. 

These enhancements position \name{} as a more effective method to generate diverse and high-quality synthetic \textit{text}.

\textbf{Privacy analysis of \name{}}
follows original \pename{} and we provide detailed privacy analysis in \cref{app:privacy-analysis}. Specifically,  since each private sample only contributes 1 vote for one bin in the histogram (i.e., nearest synthetic sample), the sensitivity is 1. The histograms are privatized by adding Gaussian noise. The adaptive DP composition theorem~\cite{dong2019gaussian} is applied to track the privacy loss across $T$ iterations.

\begin{table*}[h]
    \caption{\small Evaluation on downstream model accuracy of three methods along 4 data generators. The highest accuracy across all methods (\colorbox{lightgray!70}{obtained by \name{}})  is \textbf{bolded} (\underline{underlined}). (i) Compared to \ftgenerator{}, 
    in some cases, downstream accuracy of \name{} is higher  (\ua{})  under the same size of \gpttwo-series data generator. Leveraging the inherent knowledge within stronger LLM, \gptthreepointfive, \name{} can achieve higher accuracy, especially on challenging datasets \openreview{} and \pubmed{}, outperforming \ftgenerator{} by a notable margin. (ii) Compared to traditional method \ftdownstream{}, \name{} can also obtain higher accuracy under DP.
}
\vspace{-2mm}
\label{tb:main_result}
\resizebox{\linewidth}{!}{%
\begin{tabular}{llllll|ll|ll|llll}\toprule
\textbf{Dataset}  & \textbf{Method} & \textbf{Data Type (Size)} &  \textbf{Data Generator} & \multicolumn{2}{c}{$\epsilon=\infty$} & \multicolumn{2}{c}{$\epsilon=4$} & \multicolumn{2}{c}{$\epsilon=2$} & \multicolumn{2}{c}{$\epsilon=1$}    \\ \midrule
 \multirow{12}{*}{\yelp{}} &   &  &  &  \yelprating & \yelpcategory  &  \yelprating & \yelpcategory   &  \yelprating & \yelpcategory    &  \yelprating & \yelpcategory   \\
 \cmidrule(lr){5-12}
 & \multirow{2}{*}{\ftdownstream{}} & Original (1939290 / full data) & \multirow{2}{*}{-}  &  \textbf{76.0} & \textbf{81.6} & 67.5 & 72.8 & 67.2 & 72.0 & 66.8 & 71.8 \\
&  &  Original (5000)  &      &   70.5 & 75.1 & 44.8 & 61.8 & 44.8 & 61.8 & 44.8 & 61.8 \\
\cmidrule(lr){2-12}
&  \multirow{3}{*}{\ftgenerator{}} &  \multirow{3}{*}{Synthetic (5000)}  & \gpttwo&  70.3 & {75.9} & 68.2 & 74.1 & 67.2 & 73.1 & 66.4 & 73.9  \\
&  & &   \gpttwom   & 70.0 & 75.0 & \textbf{69.0} & 74.6 & 67.8 & 74.3 & 67.4 & 74.1  \\
& &  &  \gpttwol  & 70.4 & 75.4 & 68.7 & 74.2 & \textbf{69.8} & \textbf{75.1} & \textbf{68.7} & 74.6  \\
\cmidrule(lr){2-12}
\rowcolor{lightgray!60}
\cellcolor{white}&  &  &  \gpttwo  &  67.5 & 74.8 & 66.4 & \underline{\textbf{74.9}} \ua{} & 67.1 & 74.7 \ua{} & 66.9 \ua{} & 74.4 \ua{} \\
\rowcolor{lightgray!60}
\cellcolor{white}&  & &  \gpttwom   & 67.5 & \underline{74.9} & 66.8 & 74.6 & 67.7 & \underline{74.7} \ua{} & 67.3 & 74.6 \ua{} \\
\rowcolor{lightgray!60}
\cellcolor{white}& &  & \gpttwol &  67.5 & 74.5 & 67.3 & 74.4 \ua{}  & 65.8 & 74.1 & 66.5 & \underline{\textbf{75.0}} \ua{} \\ 
\rowcolor{lightgray!60}
\cellcolor{white}&  \multirow{-4}{*}{\name} &  \multirow{-4}{*}{Synthetic (5000)}  &  \gptthreepointfive    &   \underline{68.4} & 74.1 & \underline{68.1} & 74.0 & \underline{67.8} & 74.3 & \underline{67.9} & 74.0\\ 
\midrule\midrule
\multirow{12}{*}{\openreview{}} &  &  &  &  \openreviewarea & \openreviewrecom  &  \openreviewarea & \openreviewrecom   &  \openreviewarea & \openreviewrecom   &  \openreviewarea & \openreviewrecom \\
\cmidrule(lr){5-12}
& \multirow{2}{*}{\ftdownstream{}} & Original (8396 / full data) & \multirow{2}{*}{-}  & \textbf{65.1} & \textbf{50.8} & 30.5 & 32.0 & 30.5 & 32.0 & 30.5 & 32.0 \\
&  & Original (2000)  &   &  55.3 &  47.8 & 30.5 & 32.0 & 30.4 & 25.5 & 6.3 & 19.8 \\ 
\cmidrule(lr){2-12}
&  \multirow{3}{*}{\ftgenerator{}} & \multirow{3}{*}{Synthetic (2000)} & \gpttwo  & 47.5 & 32.0 & 32.1 & 32.0 & 31.9 & 32.0 & 32.1 & 32.0 \\
&  & & \gpttwom   & 49.7 & 36.5 & 40.3 & 32.0 & 33.5 & 31.9 & 35.5 & 31.9 \\
&  & & \gpttwol & 48.3 & 42.9 & 38.9 & 33.7 & 40.4 & 33.6 & 38.6 & 32.1 \\
\cmidrule(lr){2-12}
    \rowcolor{lightgray!60}
\cellcolor{white} &  &  & \gpttwo  & 42.4 & 32.1 \ua{}  & 39.9 \ua{} & 32.1 \ua{} & 38.8 \ua{} & 32.1 \ua{} & 37.6 \ua{}  & 32.0 \\
    \rowcolor{lightgray!60}
\cellcolor{white}&  &  & \gpttwom   & 41.0 & 32.3 & 36.9 & 32.0 & 36.0 \ua{} & 32.0 \ua{} & 36.6 \ua{} & 32.1 \ua{} \\
    \rowcolor{lightgray!60}
\cellcolor{white}&  &  & \gpttwol  & 42.1 & 32.1 & 38.8 & 32.0 & 38.4  & 32.0 & 38.1 & 32.0 \\
    \rowcolor{lightgray!60}
\cellcolor{white}&   \multirow{-4}{*}{\name} & \multirow{-4}{*}{Synthetic (2000)}  & \gptthreepointfive      &  \underline{45.4}  & \underline{43.5} & \underline{\textbf{43.5}} & \underline{\textbf{44.6}} & \underline{\textbf{42.8}} & \underline{\textbf{44.5}} & \underline{\textbf{41.9}} & \underline{\textbf{43.1}}  \\
\midrule\midrule
\multirow{12}{*}{\pubmed{}} &   &   &  &  \bertmini & \bertsmall   &  \bertmini & \bertsmall   &  \bertmini & \bertsmall    &  \bertmini & \bertsmall \\\cmidrule(lr){5-12}
& \multirow{2}{*}{\ftdownstream{}} & \multirow{1}{*}{Original (75316 / full data)} & \multirow{2}{*}{-}  & \textbf{43.5} & \textbf{47.6} & \textbf{30.7}  & \textbf{34.1} & 28.9 & \textbf{32.5} & 26.7  & 30.4 \\
& & \multirow{1}{*}{Original (2000)} &  & 33.5 & 34.6  & 2.2 & 1.1 &  1.8 & 0.8 &  1.4 & 0.6 \\
\cmidrule(lr){2-12}
&  \multirow{3}{*}{\ftgenerator{}} & \multirow{3}{*}{Synthetic (2000)} &  \gpttwo & 30.2 & 32.4 & 27.8 & 29.7 & 27.6 & 29.3 & 27.2 & 29.2  \\
& & &  \gpttwom &31.0 & 33.1 & 28.4 & 30.2 & 28.1 & 30.0 & 27.8 & 29.8  \\
& &  & \gpttwol & 31.0 & 33.1 & 29.2 & 31.2 & 29.2 & 31.1 & 28.9 & 31.1 \\
\cmidrule(lr){2-12}
    \rowcolor{lightgray!60}
\cellcolor{white}&  &  & \gpttwo & 24.5 & 26.7 & 24.7 & 27.0 & 24.7 & 26.9 & 24.3 & 26.5  \\
    \rowcolor{lightgray!60}
\cellcolor{white}& & & \gpttwom & 25.5 & 27.7 & 25.4 & 27.6 & 25.1 & 27.4 & 24.9 & 27.0 \\
    \rowcolor{lightgray!60}
\cellcolor{white}& & & \gpttwol & 25.7 & 28.0 & 25.8 & 27.9 & 25.5 & 27.7 & 25.1 & 27.2  \\
\rowcolor{lightgray!60}
\cellcolor{white} & \multirow{-4}{*}{\name} & \multirow{-4}{*}{Synthetic (2000)}  &  \gptthreepointfive     & \underline{30.4} & \underline{32.7} &  \underline{{30.3}} & \underline{{32.5}} &  \underline{\textbf{30.2}} &  \underline{\textbf{32.5}}  & \underline{\textbf{30.1}} & \underline{\textbf{32.4}}    \\
 \bottomrule
\end{tabular}
}
\vspace{-3mm}
\end{table*}

\section{Experiments}
\myparatightestn{Datasets.} We evaluate \name{} on three datasets: \yelp{} Review~\cite{yelpdataset}, \openreview{}, and \pubmed{} abstracts.
We use \yelp{}, a public benchmark providing reviews on businesses, following the choice in prior work for DP synthetic text~\cite{yu2022differentially}.
To mitigate the concerns that existing benchmarks are potentially used at LLM's pretraining stage, we crawl the latest reviews for ICLR 2023 submissions from \openreview{} website\footnote{https://openreview.net/group?id=ICLR.cc/2023/Conference} to construct a new dataset, where the reviews are made public after recent LLMs are trained.  We also use \pubmed{} with abstracts of medical papers\footnote{https://www.ncbi.nlm.nih.gov/} crawled by \citet{yu2023training} from 2023/08/01 to 2023/08/07 after recent LLMs are trained. %
Notably, texts from \yelp{} are mainly in styles of daily conversation, while
the other two datasets require domain-specific knowledge about machine learning or biomedical literature when generating DP synthetic replicas. 
For conditional generation, we use below attributes as labels: the review ratings and business category for \yelp{}, and the review recommendation and area for \openreview{}.
For \pubmed{}, we use unconditional generation.

\myparatightestn{Models.} 
For data generators, we use 
 \gpttwo{}~\cite{radford2019language}, \gpttwom{}, \gpttwol{},  \chatgpt{}~\cite{chatgpt}, and non-GPT based LLMs
including four 7b-sized models -- OPT~\cite{zhang2022opt}, Vicuna~\cite{zheng2023judging}, Falcon~\cite{falcon40b}, \llamatwo{} -- as well as one Mixture-of-Expert model Mixtral-8x7B~\cite{mistral}. 
For embedding models, we use sentence-transformer~\cite{reimers2019sentence}. %
\emph{We study more types of embedding models as ablation study in \cref{sec:exp_understand_property}.}

\myparatightestn{Metrics.}
We evaluate synthetic texts regarding (i) accuracy on downstream tasks, and (ii) similarity between real and synthetic data. 

\textit{Downstream tasks:} we finetune downstream models on the synthetic text and evaluate their accuracy on the real test dataset. 
We pick two representative use cases: using DP synthetic text to train DP text classifiers~\cite{yue2022synthetic} and to train efficient DP lanaguge models~\cite{yu2023training}. 
Specifically, we finetune \robertabase{}~\cite{liu2019roberta} as text classifiers to classify review ratings and business categories for \yelp{}, and to classify review recommendations and areas for \openreview{}. 
For \pubmed{}, we finetune 
\bertmini{}/\bertsmall{}~\cite{turc2019well}\footnote{Following \cite{yu2023training}, we apply a causal
language modeling mask that restricts each token to only attend
to its preceding tokens.} on synthetic text and evaluate their next-word prediction accuracy. 
\emph{We study more types downstream models as ablation study in \cref{sec:exp_understand_property}.}

\textit{Similarity between real and synthetic data:} we quantitively compare 
(a) \textit{embedding distribution distance} (i.e., Fréchet Inception Distance ({FID})~\cite{heusel2017gans},
{Precision}, {Recall}, {F1} score~\cite{kynkaanniemi2019improved},
MAUVE score~\cite{pillutla2021mauve},
{KL} and 
{TV} divergences~\cite{chung1989measures}) and qualitatively compare  (b) \textit{text length distribution difference}~\cite{yue2022synthetic}.

\myparatightestn{Baselines.}
We consider two SOTA baselines {involving DP finetuning}: (1) \ftdownstream{}~\cite{yu2022differentially,li2022large}: finetuning downstream model on real data with DP-SGD. Note that this baseline is not a competitor to our method, since \textit{our goal is to generate DP synthetic data and not merely train a downstream model}. (2) \ftgenerator{}~\cite{yue2022synthetic}: finetuning generator (e.g., \gpttwo{}) with DP-SGD (note that we cannot finetune closed-source \chatgpt{}) and using synthetic texts to finetune downstream model {with non-private SGD.}

We defer more details about the setups, hyperparameters and metrics to \cref{app:exp-details}.

\subsection{Understanding the Performance of \name{}}
Here, we analyze the performance of \name{} by answering five research questions about its utility, efficiency, and robustness against empirical privacy attacks under DP compared to DP-finetuning-based baselines.

\textit{RQ1: Can DP synthetic texts generated from \name{} outperform  those from \ftgenerator{}? }
\textbf{DP synthetic texts from \name{} can have comparable privacy-utility trade-off to those from \ftgenerator{} using the same generator, while outperforming it using the stronger generator  \gptthreepointfive{}.} 
The downstream model accuracy of different methods along 4 generators on different benchmark datasets is shown in \cref{tb:main_result}. 
\textbf{(1)} When using the same LM (\gpttwo{}-series) as the generator for fair comparisons, DP synthetic texts from \name{} demonstrate competitive or even better  (\ua{}) utility than \ftgenerator{} on \yelp{} and \openreview{}.
However, \name{} underperforms \ftgenerator{} on \pubmed{}. This is expected because \name{} relies on the knowledge within LLMs to generate high-quality texts without domain-specific finetuning, while \gpttwo{}-series models might have limited exposure to biomedical literature~\cite{radford2019language}. 
\textbf{(2)} \name{} only requires API access,  making it possible to use closed-source LLM such as \gptthreepointfive{} for generating DP synthetic text. 
The results of \gptthreepointfive{} outperform not only \name{}  \gpttwo{}-series, but also \ftgenerator{} \gpttwo{}-series by a significant margin,  especially on challenging datasets such as \openreview{} and \pubmed{}.  It shows that \name{} can effectively leverage the inherent knowledge (e.g., medical knowledge, sentiment of reviews, research areas about machine learning) in stronger LLMs to generate higher-quality DP synthetic texts.
\textbf{(3)} In addition to downstream utility, we measure the \textit{embedding distribution distance} between real and synthetic samples. 
The results in \cref{app:emb_distance} show that \name{} can obtain similar and even lower distances (reflected by FID, TV divergence, Recall, F1, and MAUVE scores, etc.) compared to \ftgenerator{}. 
{\textbf{(4)} Some methods consistently show a 32.0 accuracy for \openreviewrecom{} and 30.5 for \openreviewarea{} classification, due to the failure of the downstream \robertabase{} model under DP, always outputting majority class (see \cref{app:exp-details} for label distributions).}

\begin{figure*}[ht]
\vspace{-2mm}
\begin{minipage}{0.25\textwidth}
      \centering \includegraphics[width=1.05\linewidth]{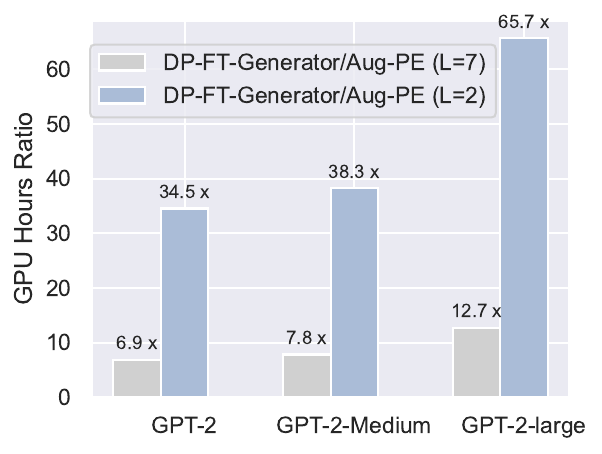} 
          \vspace{-8mm}
    \caption{\small Efficiency comparison between \ftgenerator{} and \name{} on \yelp{} for generating 100k synthetic samples ($\epsilon=1$) 
    }
    \label{fig:gpu-hour-yelp}
   \end{minipage}\hfill
\begin{minipage}{0.25\textwidth}
      \centering\includegraphics[width=1.03\linewidth]{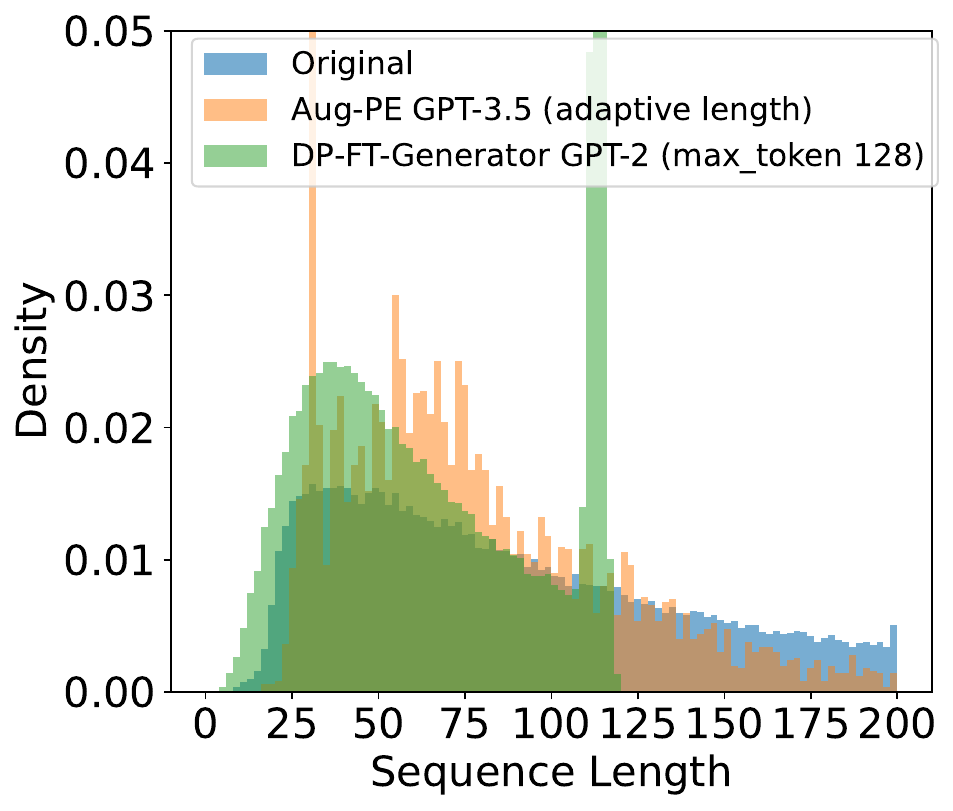}
     \vspace{-8mm}
   \caption{\small \chatgpt{} with adaptive text length achieves a comparable text length distribution to the original data on \yelp{}.
   }
   \label{fig:length-compare}
   \end{minipage}\hfill
   \begin{minipage}{0.23\textwidth}
     \centering\includegraphics[width=1.05\linewidth]{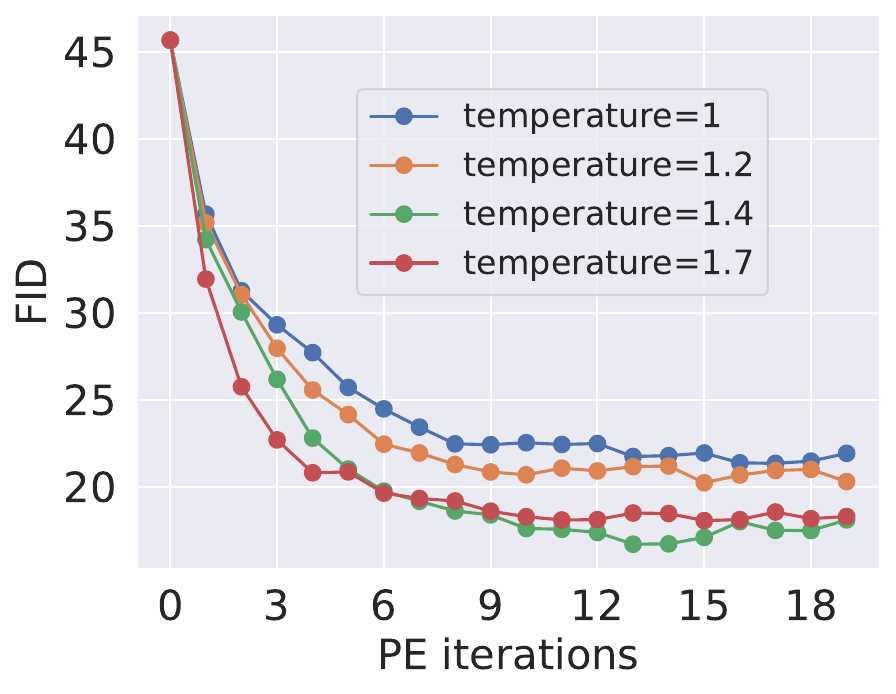}
     \vspace{-8mm}
    \caption{\small Larger temperature for  \gptthreepointfive{}  leads more diverse generation on \yelp{}  with a lower FID score.}
    \label{fig:yelp-gpt35-temp}
   \end{minipage}\hfill
   \begin{minipage}{0.23\textwidth}
    \centering\includegraphics[width=1.1\linewidth]{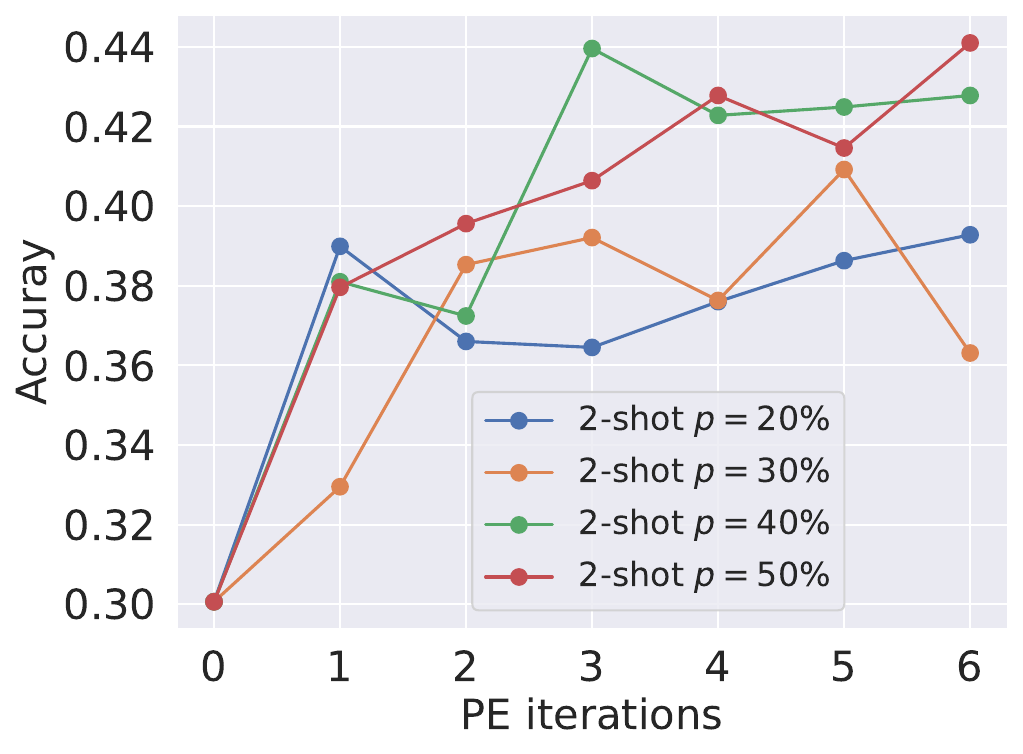}
    \vspace{-8mm}
    \caption{\small Fill-in-the-blanks with a larger mask probability $p\%$ for \gptthreepointfive{} leads to more diverse generation and higher utility on \openreview{}.}  %
    \label{fig:openreview_mask_prob}
   \end{minipage}
   \vspace{-6mm}
\end{figure*}

\textit{RQ2: Can DP synthetic texts from \name{}  be a better choice than \ftdownstream{} on real data with DP?}
\textbf{\name{} obtains comparable and higher accuracy than \ftdownstream{} under DP.}
\textbf{(1)} \cref{tb:main_result} shows that under $\epsilon=2,1$  on \pubmed, \name{} \gptthreepointfive{} with a smaller synthetic dataset size (2k) is sufficient to produce better downstream models compared to models directly trained with DP on the original data of the full (75k) or same size (2k). Similar conclusions hold for other two datasets, and the advantages of \name{} on \openreview{} are evident across all generators. 
\textbf{(2)} \ftdownstream{} performs fairly poor when the data size is small (e.g., 2k on \pubmed{} and \openreview{}), indicating that LMs finetuned with DP-SGD is unable to learn meaningful information under DP noises when samples are limited \citep{yu2021large,li2022large,bu2022differentially}.  In contrast, postprocessing property of DP allows us to train downstream tasks on DP synthetic text (with any size) via normal training techniques, without incurring additional privacy loss, potentially leading to a better downstream model than \ftdownstream{}.

\textit{RQ 3: How does \name{} perform across different privacy budget $\epsilon$?}
\textbf{(1)} \cref{tb:main_result} shows that \name{} in general achieves better performance as $\epsilon$ increases from $1,2,4$ to $\infty$, suggesting that \name{} scales well with the privacy budget $\epsilon$.
\textbf{(2)}
On \openreview{}, from $\epsilon=\infty \rightarrow 1$, the rating classification accuracy obtained from \ftgenerator{} \gpttwol{} generated text  drops  from $48.3\rightarrow 38.6$, and \ftdownstream{} on full training data drops from $65.1\rightarrow 30.5$, 
while the accuracy of \name{} \gptthreepointfive{} exhibits marginal drop  $45.4\rightarrow 43.1$. 
It suggests that in some cases, \textbf{the performance of \name{} (paired with powerful generator) can be more robust under DP noise than FT baselines}. 
The reason could be that LMs are vulnerable to the perturbations introduced in model parameters through DP-SGD, whereas \name{} strategically adds noise to the histogram votes, effectively preserving the utility. %

\begin{table*}[h]
\centering
\vspace{-2mm}
\caption{\small Using powerful LLMs as data generators leads to improved downstream accuracy on three datasets.}
\label{tab:effect_data_generator}
\vspace{-2mm}
\resizebox{0.8\textwidth}{!}{%
\begin{tabular}{l|cc|cc|cc|cc|cc|cc}
\toprule
& \multicolumn{4}{c|}{\yelp{}} & \multicolumn{4}{c|}{\openreview{}} & \multicolumn{4}{c}{\pubmed{}} \\
\cmidrule(lr){2-5}  \cmidrule(lr){6-9} \cmidrule(lr){10-13} 
& \multicolumn{2}{c}{\(\epsilon=\infty\)} & \multicolumn{2}{c|}{\(\epsilon=1\)} & \multicolumn{2}{c}{\(\epsilon=\infty\)} & \multicolumn{2}{c|}{\(\epsilon=1\)} & \multicolumn{2}{c}{\(\epsilon=\infty\)} & \multicolumn{2}{c}{\(\epsilon=1\)} \\
\cmidrule(lr){2-13} 
LLM & \yelprating & \yelpcategory & \yelprating & \yelpcategory & \openreviewarea & \openreviewrecom & \openreviewarea & \openreviewrecom & \bertmini & \bertsmall & \bertmini & \bertsmall \\
\midrule
\gpttwo & 67.5 & 74.8 & 66.9 & 74.4 & 42.4 & 32.1 & 37.6 & 32.0 & 24.5 & 26.7 & 24.3 & 26.5 \\
\gpttwom & 67.5 & 74.9 & 67.4 & 74.6 & 41.0 & 32.3 & 36.6 & 32.1 & 25.5 & 27.7 & 24.9 & 27.0 \\
\gpttwol & 67.5 & 74.5 & 66.6 & 75.0 & 42.1 & 32.1 & 38.1 & 32.0 & 25.7 & 27.9 & 25.1 & 27.2 \\
\midrule
Opt-6.7b & 68.7 & \textbf{75.3} & 67.7 & \textbf{75.3} & 43.6 & 32.2 & 30.5 & 32.1 & 26.5 & 28.6 & 25.8 & 27.9 \\
Vicuna-7b-v1.5 & \textbf{68.8} & 74.1 & 67.2 & 74.9 & 42.9 & 35.7 & 35.2 & 35.4 & 24.6 & 26.9 & 23.1 & 24.9 \\
Falcon-7b-instruct & 67.4 & 74.9 & 67.3 & 74.2 & 38.6 & 32.6 & 39.0 & 33.3 & 22.3 & 24.4 & 22.4 & 24.5 \\
Llama-2-7b-chat-hf & 68.6 & 74.9 & \textbf{68.0} & 75.1 & 45.5 & 38.5 & 36.4 & 37.0 & 25.8 & 28.4 & 24.8 & 27.5 \\
Mixtral-8x7B-v0.1 & 68.2 & 74.6 & 67.6 & 74.6 & \textbf{45.9} & 41.8 & \textbf{43.6} & 42.3 & 24.9 & 27.6 & 24.5 & 27.1 \\
\gptthreepointfive & 68.4 & 74.1 & 67.9  & 74.0 & 45.4 & \textbf{43.5} & 41.9 & \textbf{43.1} & \textbf{30.4} & \textbf{32.7} & \textbf{30.1} & \textbf{32.4} \\
\bottomrule
\end{tabular}
}
\vspace{-3mm}
\end{table*}

\textit{RQ 4:  Compared to \ftgenerator{}, how efficient the API-access-based \name{} is in terms of GPU hours? }
\textbf{With inference API access, \name{} is more efficient than \ftgenerator{} that requires DP-SGD finetuning}.   \textbf{(1)} As shown in \cref{fig:gpu-hour-yelp},  to generate 100k synthetic samples on \yelp{} under $\epsilon=1$, given the same generator  \gpttwol{}, \name{} $L=7$ provides 12.7x speedup and $L=2$ further provides 65.7x speedup.
\textbf{(2)}  The running time of \name{} is mainly scaled with \# API calls, which is associated with the number of variations $L-1$ in \cref{line:variationexpanded}.
\textbf{(3)} The bottleneck of \ftgenerator{} is DP-SGD finetuning:  it takes 1764 GPU hours on 32G NVIDIA V100 to finetune \gpttwol{} on \yelp{} and 7 hours to generate 100k samples, while \name{} $L=2$ ($L=7$) only requires 27 hours (139 hours). 
It highlights the computational expense of DP-SGD training, particularly for training LLMs, and underscores the efficiency of the API-based DP algorithm \name{}. A detailed breakdown of the GPU hours for each setting is in Appendix \cref{tab:gpu_hours}.
{\textbf{(4)} We use half precision (FP16) for LLM inference in \name{}. With the emerging efficient inference techniques (e.g., \citet{liu2023deja}), \name{} runtime can be further optimized.}

\textit{RQ 5: How robust \name{} is under empircal privacy attacks compared to  DP-finetuning-based baselines?} 
We perform state-of-the-art text membership inference attacks (MIAs) against the finetuned downstream models on \pubmed{} dataset. We consider three types of MIAs and report the AUC score: 
(1) \textit{PPL} thresholds perplexity to predict membership~\cite{carlini2021extracting}; 
(2) \textit{REFER} computes the ratio of the log perplexity of the tested model against a reference model~\cite{carlini2021extracting};
(3) \textit{LIRA} uses the ratio of likelihood~\cite{carlini2022membership} and we use the pre-trained model as a reference following~\cite{mattern2023membership}.
The results in \cref{tab:mia_auc} show that \textbf{\name{} generally exhibits lower AUC scores under MIAs} compared to \ftgenerator{} and \ftdownstream{}. This indicates a higher robustness to empirical privacy attacks, potentially due to the synthetic nature of the data used for downstream model finetuning, which inherently reduces the risk of overfitting to real private data. We defer the details to \cref{app:mia_attack}.

\begin{table}[ht]
\centering
\vspace{-2mm}
\caption{\name{} generally yields lower AUC scores against membership inference attacks on \pubmed{}  than \ftgenerator{} and \ftdownstream{}, indicating a higher robustness to empirical privacy attacks.}
\vspace{-2mm}
\label{tab:mia_auc}
\setlength{\tabcolsep}{3pt}
\resizebox{\columnwidth}{!}{%
\begin{tabular}{ l | c  |c c c|c c c|c c c|c }
\toprule
\textbf{Method} & \textbf{Generator} & \multicolumn{3}{c|}{\textbf{AUC} ($\epsilon=\infty$)} & \multicolumn{3}{c|}{\textbf{AUC} ($\epsilon=4$)} & \multicolumn{3}{c|}{\textbf{AUC} ($\epsilon=2$)} & \multirow{2}{*}{\textbf{Avg}} \\
\cmidrule{3-5} \cmidrule{6-8} \cmidrule{9-11}
& & PPL & REFER & LIRA & PPL & REFER & LIRA & PPL & REFER & LIRA  \\
\midrule
\ftdownstream{} & / & 77.60 & 74.93 & 65.05 & \textbf{49.32} & 54.58 & 56.82 & \textbf{48.96} & 50.41 & 51.56 & 58.80 \\ \midrule
\ftgenerator{} & {\gpttwo} & {55.50} & {51.35} & {51.97} & {53.90} & {51.12} & {51.84} & {53.31} & {50.81} & {51.61} & {52.38} \\
\ftgenerator{} & {GPT2-M} & {54.91} & {51.25} & {51.88} & {54.72} & {51.13} & {51.76} & {54.58} & {51.28} & {51.85} & {52.60} \\
\ftgenerator{} & {GPT2-L} & {54.81} & {51.22} & {51.86} & {54.56} & {50.81} & {51.64} & {55.05} & {51.01} & {51.69} & {52.52} \\\midrule
\name{} & \gpttwo & 50.08 & 50.92 & 51.66 & 50.10 & 50.97 & 51.70 & 49.94 & 50.85 & 51.64 & 50.87 \\
\name{} & GPT2-M & 49.85 & 50.73 & 51.57 & 50.10 & 50.95 & 51.69 & 49.73 & 50.65 & 51.51 & 50.75 \\
\name{} & GPT2-L & \textbf{49.43} & 50.40 & 51.40 & 49.61 & \textbf{50.56} & 51.48 & 49.66 & 50.60 & 51.49 & \textbf{50.51} \\
\name{} & \gptthreepointfive & 52.23 & \textbf{49.67} & \textbf{50.85} & 52.68 & 49.84 & \textbf{50.93} & 52.77 & \textbf{49.77} & \textbf{50.85} & 51.07 \\
\bottomrule
\end{tabular}
}
\vspace{-2mm}
\end{table}

\vspace{-2mm}
\subsection{Understanding the Properties of \name{}}
\label{sec:exp_understand_property}
Here we study properties of \name{} including text lengths, its compatibility with stronger data generators and downstream models, and its behaviors under data scaling. 

\textit{RQ 6: Can \name{} produce sentence length distributions similar to real data?}
\textbf{\name{} produces favorable text length distributions.}
From \cref{fig:length-compare}, we see that the text length distribution of synthetic samples produced from \chatgpt{} through \name{} is close to the distribution of the original \yelp{} data, highlighting the effectiveness of our adaptive sequence length mechanism (\cref{sec:pe_design}).  Note that the finetuning baseline requires a fixed $\mathrm{max\_token}$ (e.g., 128 for \gpttwo{}), which leads to a hard threshold for maximal text length, which is not the case in our method with our adaptive length technique. Nevertheless, there is a peak near 30 tokens for \name{}, which is due to the $\mathrm{\min\_word}$ set in the prompt to prevent empty generation. 
We defer the convergence of text length distributions over  PE iterations to \cref{app:sentence-len-converge}.

\textit{RQ 7: Can \name{}  benefit from more powerful LLMs?}
\textbf{\name{} is effective across a wide range of API-accessible LLMs.} 
We have observed from \cref{tb:main_result}  that \gptthreepointfive{} can lead to higher downstream accuracy than \gpttwo{}-series, especially on \pubmed{} and \openreview{}. 
Here we evaluate more API-accessible, non-GPT based LLMs.
\textbf{(1)} As shown in \cref{tab:effect_data_generator}, under $\epsilon=\infty, 1$, those modern LLMs 
can obtain comparable and even higher accuracy than \gptthreepointfive{} on \yelp{}, suggesting that \name{} can effectively elicit and select high-quality synthetic text from various types of LLMs. Note that \textit{DP finetuning often needs to be implemented case-by-case for LLMs and currently lacks open-source implementations for these LLMs}, whereas \name{} can easily leverage them.
{\textbf{(2)} The results on \openreview{} and \pubmed{} in \cref{tab:effect_data_generator} show that \gptthreepointfive{} leads to higher utility than opensource LLMs (e.g. \llamatwo{}), demonstrating the stronger generation power of \gptthreepointfive{} in academic/medical domains. Interestingly, Mixtral-8x7B can also generate high-quality synthetic texts for \openreview{}, but not for \pubmed{}.

\begin{table}[t]
\vspace{-1mm}
\caption{\small The next word prediction accuracy increases when using larger downstream models for \pubmed{} synthetic texts.
}
\vspace{-3mm}
\label{tab:effect_downstream_model}
\resizebox{\columnwidth}{!}{%
\begin{tabular}{l|l|l|ccccccc}\toprule
$\epsilon$ & Method & Generator  & bert-tiny   & bert-mini   & bert-small & Llama2-7b-chat-hf
\\ 
& &  &  4.4M & 11.2M & 28.8M & 7B \\
\midrule
 & \ftgenerator{} & \gpttwol{} & \textbf{24.6}  &  \textbf{31.0} & \textbf{33.1} & 53.1  \\
\rowcolor{lightgray}
 \cellcolor{white} \multirow{-2}{*}{$\infty$} &  \name{} & \gptthreepointfive  & 23.0  & 30.3 & 32.7 &  \textbf{56.5} \\
\midrule
 & \ftgenerator{} & \gpttwol{}   &  \textbf{23.1} & 28.9 &  31.1 & 52.0 \\
\rowcolor{lightgray}
\cellcolor{white} \multirow{-2}{*}{$1$}  & \name{} & \gptthreepointfive   & 22.9  & \textbf{30.1} &  \textbf{32.4} & \textbf{56.4} \\
  \bottomrule
\end{tabular}
}
\vspace{-4mm}
\end{table}

\textit{RQ 8: Can more powerful downstream models benefit from synthetic text generated via \name{}? }
\textbf{The high-quality synthetic text from \name{} is better utilized by larger downstream models.} 
\textbf{(1)} From each row in \cref{tab:effect_downstream_model}, we see that next-word prediction accuracy monotonically increases with the use of larger downstream models trained on \pubmed{} synthetic text.
\textbf{(2)} Under both $\epsilon=1,\infty$, the smallest model \berttiny{} favors the synthetic texts from \ftgenerator{} \gpttwol{}, while larger models such as  \llamatwo{} favor synthetic text from \name{} \gptthreepointfive{}. 
This observation underscores the importance of choosing downstream models of a suitable size; employing overly small models could under-estimate the quality of synthetic texts produced by \name{} with \gptthreepointfive{}.
{We hypothesize that this is because i)  
\gptthreepointfive{} generated texts might already be of higher quality in terms of vocabulary, syntax, semantic coherence, etc., compared to generated texts from finetuned \gpttwol{}; and ii) larger downstream LMs like \llamatwo{} can better understand and utilize the nuances in synthetic texts for improved performance than \berttiny{}.
}

\textit{RQ 9: Can we further improve downstream task accuracy with more synthetic samples generated from \name{}? }
To study the scaling law of \name{}, we use \gpttwo{}-series models to generate \{5k,10k,100k\} samples for \yelp{}, and \{2k,3k,5k\} samples for other two datasets. 
As shown in \cref{app:downstream_task_utility_full},  under $\epsilon=1,2,4,\infty$, \name{} in general achieves better performance across all datasets as the data size increases, suggesting that \textbf{\name{} scales well with the number of synthetic samples. }

\vspace{-1mm}
\subsection{Validating the Design of \name{}}
{As \name{} introduces novel sample selection and generation techniques, here we study algorithm components related to the two steps, respectively (under $\epsilon=\infty$), and compare its performance against the original \pename{}.}  

\textit{RQ 10: Can \name{} surpass original \pename{}? } {\cref{tab:gpt2_pe_augpe} shows that \textbf{\name{} achieves notable improvement over \pename{}} for \gpttwo{}, e.g., +22.6\% on \yelp{} rating classification. We observe similar conclusions for \gptthreepointfive{} in \cref{tab:pe_augpe_gpt3.5} in App.}
\begin{table}[h]
\vspace{-2mm}
\caption{\small \name{} outperforms \pename{} with \gpttwo{} on all datasets.}
\vspace{-3mm}
\label{tab:gpt2_pe_augpe}
\resizebox{\columnwidth}{!}{%
\begin{tabular}{l|ll|cc|cccc}\toprule
\multirow{2}{*}{Method}  &  \multicolumn{2}{c}{  \yelp{} }   &  \multicolumn{2}{|c|}{  \openreview{} }  &  \multicolumn{2}{c}{  \pubmed{} }    \\  
 &  \yelprating & \yelpcategory  &  \openreviewarea & \openreviewrecom  &  \bertmini & \bertsmall \\ \midrule
\pename{} $\gets$ \name{} ($k=6,L=1$) & 44.9 & 71.8  & 35.3 &  32.0  &  20.1 & 22.3 \\
\rowcolor{lightgray}
\name{} ($k=0,L=7$)  & \textbf{67.5} & \textbf{74.8} &  \textbf{42.4}  &  \textbf{32.1}  & \textbf{24.5} & \textbf{26.7}   \\
 \bottomrule
\end{tabular}
}
\vspace{-2mm}
\end{table}

\textit{RQ 11: How does the private data guided sample selection affect \name{} performance? } Here we aim to verify the \textbf{components related to sample selection: i) usage of private data; ii) rank-based selection; iii) embedding model} used during nearest neighbor voting.

\noindent\textbf{i) Usage of private data.} \cref{tab:effect_private_data} shows that the initial samples (generated from Random API) or their variants (generated from Random API + Variation API) exhibit limited utility without using private data. 
However, the quality of the synthetic text improves notably after just one iteration of \name{} ($t=1$) when guided by private data, and this improvement continues to amplify with $T$ iterations. We report the results under DP in \cref{app:effect_t_dp}.
 
\begin{table}[h]
\vspace{-3mm}
\caption{\small Private-data guided sample selection in \name{} improves the utility of \gptthreepointfive{} generated texts. 
}
\vspace{-2mm}
\label{tab:effect_private_data}
\resizebox{\columnwidth}{!}{%
\begin{tabular}{l|ll|ll|cccccccc}\toprule
Setting &   \multicolumn{2}{c}{ \yelp{}}  &   \multicolumn{2}{|c|}{ \openreview{}}   &   \multicolumn{2}{c}{ \pubmed{}}      \\ 
 &    \yelprating & \yelpcategory   &    \openreviewarea & \openreviewrecom  & \bertmini & \bertsmall \\\midrule
Random API & 62.3  &  73.7 &  34.4  & 42.0 & 29.7  & 31.9 \\
Random API + Variation API &62.3  &  73.7 &  36.4  & 42.0 & 29.6 & 31.9  \\
\rowcolor{lightgray}
\name{} ($t=1$) &  64.9 & 73.8 & 39.3  & 42.5 & 30.0  & 32.2 \\
\rowcolor{lightgray}
\name{} ($t=T$)  & \textbf{68.4} & \textbf{74.1}  & \textbf{45.4}&  \textbf{43.5} & \textbf{30.4} & \textbf{32.7} \\
 \bottomrule
\end{tabular}
}
\vspace{-2mm}
\end{table}

\noindent\textbf{ii) Rank-based sampling.} 
The results in \cref{app:exp_rank_ablation} indicate that our proposed rank-based sampling (\cref{line:directuse}) consistently outperforms probability-based random sampling in the original \pename{} (\cref{line:drawfromhist}), due to the elimination of sample redundancy inherent in random sampling, as rank-based sampling exclusively selects the top $N_{\syn}$ samples.

\textbf{iii) Embedding models.} 
\cref{tab:effect_embedding_model} shows that larger embedding models such as ``sentence-t5-xl'' can more accurately capture the nuances of texts in the embedding space, leading to higher utility for \gpttwo{} generated texts.

\begin{table}[h]
\vspace{-2mm}
\caption{\small More powerful embedding model leads to higher utility for \gpttwo{} generated texts via \name{}.}
\vspace{-3mm}
\label{tab:effect_embedding_model}
\resizebox{\columnwidth}{!}{%
\begin{tabular}{l|ll|ccccccc}\toprule
 \multirow{2}{*}{\shortstack{Embeddding model\\ \cite{reimers2019sentence}}} & \multicolumn{2}{c|}{ \yelp{}}  &    \multicolumn{2}{c}{ \pubmed{}}    \\ 
 &    Rating & Category   &    \bertmini & \bertsmall  \\\midrule
sentence-t5-xl & \textbf{67.6} & 75.1  & \textbf{25.1} & \textbf{27.4} \\    
sentence-t5-base & 67.2 & 75.2 &   24.5 & 26.7 \\          
stsb-roberta-base-v2 & 67.5 & 74.8 & 23.9 & 26.1 \\
all-MiniLM-L6-v2 & 62.6 & \textbf{75.3} &   24.7 & 26.7  \\   
paraphrase-MiniLM-L6-v2 & 64.7 & 75.1   & 24.3 & 26.5 \\
all-mpnet-base-v2 & 64.1 & 74.6   & 24.0 & 26.0   \\
 \bottomrule
\end{tabular}
}
\vspace{-3mm}
\end{table}

\textit{RQ 11: How to improve the generation quality through Variation API in \name{}? }
We analyze key \textbf{componenents related to generation: i) variation API prompt designs; ii) LLMs generation configuration (e.g., temperature); iii) number of variations $L-1$}. 

\vspace{-1mm}
\textbf{i) Variation API prompt designs}. We evaluate the impact of four types of Variation API prompts on \yelp{}: paraphrasing and fill-in-the-blanks prompts under zero-shot and few-shot settings. 
\textbf{(1)} Qualitatively, we observed that \gpttwo{} struggles to adhere to the fill-in-the-blanks instruction, often leaving blanks (``$\_\_$'') in the generated texts. In contrast, \gptthreepointfive{} can effectively fill in the blanks, potentially because \gptthreepointfive{}  has been instruction-tuned~\cite{wei2021finetuned} and thus follows the instructions better.
\textbf{(2)} The quantitative results in {Appendix} \cref{tab:effect_variation_api} reveal that paraphrasing  can be an effective strategy for \gpttwo{}, while fill-in-the-blanks yields better results for \gptthreepointfive{}.
\textbf{(3)} Fill-in-the-blanks offers more control over the diversity of generated content. By increasing the mask probability $p\%$, we can create more room for imaginative responses from \gptthreepointfive{}, leading to more diverse generations. As indicated in \cref{fig:openreview_mask_prob}, a higher mask probability corresponds to increased accuracy in downstream area classification tasks when using \gptthreepointfive{}.

\begin{table}[h]
\vspace{-2mm}
\caption{\small 
For \gpttwo{} generated texts, high temperatures are preferred for \yelp{} while moderate temperatures are favored for \openreview{} and \pubmed{} to balance generation diversity and quality.}
\vspace{-3mm}
\label{tab:effect_temperature}
\resizebox{\columnwidth}{!}{%
\begin{tabular}{l|ll|ll|ccccc}\toprule
\multirow{2}{*}{Temperature}  &  \multicolumn{2}{c}{  \yelp{} }   &  \multicolumn{2}{|c|}{  \openreview{} }  &  \multicolumn{2}{c}{  \pubmed{} }    \\  
 &    \yelprating & \yelpcategory  &    \openreviewarea & \openreviewrecom  &  \bertmini & \bertsmall \\ \midrule
0.8 & 66.9 & 74.2 & 42.0 & \textbf{32.2} &  24.5 & \textbf{26.8} \\
1.0 & 66.8 & 74.8  & 41.5 & 32.1 & \textbf{24.5} & 26.7  \\
1.2 & 67.0 & 74.9 & \textbf{42.4} & 32.1 & 24.4& 26.5 \\
1.4 & \textbf{67.5} & 74.8 & 40.8 & 32.0 & 23.6 & 25.6 \\
1.7 & 67.1 & \textbf{75.2} & 40.6 & 32.1 & 21.9 & 24.0 \\
 \bottomrule
\end{tabular}
}
\vspace{-5mm}
\end{table}

\textbf{ii) Temperature} is a key parameter in controlling the diversity of LLM generation. A higher temperature leads LLMs to generate less frequent tokens, thereby increasing diversity. However, an excessively high temperature may result in overly random outputs and potentially hurt generation. The impact of different temperatures for \name{} on \gpttwo{} is shown in \cref{tab:effect_temperature}. 
\textbf{(1)} On \yelp{}, a higher temperature (1.4 to 1.7) proves beneficial for  \gpttwo{}, as business reviews often encompass daily conversations with a variety of sentence formats and tones. Additional findings in \cref{fig:yelp-gpt35-temp} indicate that large temperatures can also lead to low (better) FID scores for \gptthreepointfive{}.
\textbf{(2)}  Conversely, on \openreview{} and \pubmed{}, a moderate temperature setting (around 1.0) is more suitable for \gpttwo{}, as academic and medical literature demand more precise and accurate text generation.

\begin{table}[h]
\vspace{-3mm}
\caption{\small Increasing the number of variations $L-1$ in \name{} yields higher utility for  \gpttwo{} generated texts.}
\vspace{-3mm}
\label{tab:effect_num_variations}
\resizebox{\columnwidth}{!}{%
\begin{tabular}{l|cc|cc|cccc}\toprule
\multirow{2}{*}{$L-1$}   &  \multicolumn{2}{c}{  \yelp{} }   &  \multicolumn{2}{|c|}{  \openreview{} }  &  \multicolumn{2}{c}{  \pubmed{} }    \\  
 &    \yelprating & \yelpcategory  &    \openreviewarea & \openreviewrecom  &  \bertmini & \bertsmall \\ \midrule
1 & 65.8 & 74.4 & 39.2 & \textbf{32.1} & 23.9 & 26.1 \\
3 & 66.7 & \textbf{75.1} & 41.1 & 32.0 & 24.6 & 26.8\\
6 & 67.5 & 74.8 & 42.4 & \textbf{32.1} & 24.5 & 26.7 \\
9 & \textbf{67.7} & 74.9 & \textbf{42.7} & 32.0 & \textbf{24.9} & \textbf{26.8} \\
 \bottomrule
\end{tabular}
}
\vspace{-2mm}
\end{table}

\textbf{iii) Increasing the number of variations $L-1$} generally enhances  performance of \name{} as shown in \cref{tab:effect_num_variations}, due to the expansion of the candidate synthetic sample pool, which increases the likelihood of getting high-quality texts.
However, generating more variations requires additional API calls, leading to increased computational costs as discussed in \cref{fig:gpu-hour-yelp}. To  balance the trade-off between utility and efficiency, we use $L=7$ for \gpttwo{}-series experiments.

\myparatightestn{\name{} convergence.}
We provide generation results showing the convergence of \name{} under \textit{one private sample} in \cref{app:converge_one_private_sample}, which demonstrate our sample selection and generation process in a more direct manner.

\vspace{-2mm}
\section{Conclusion}
\vspace{-1mm}
In this work, we propose \name{} for DP synthetic text generation without model training. We conduct comprehensive experiments on three datasets and show that \name{} can generate high-quality DP synthetic text with comparable privacy-utility tradeoff to DP finetuning baselines under the same data generator. Leveraging more powerful open-source LLMs or API-based LLMs as data generators,  \name{} can generate DP synthetic text with improved utility.  

\section*{Acknowledgements}
The authors thank Xiang Yue, Janardhan Kulkarni and anonymous reviewers for their valuable feedback and suggestions.
BL gratefully acknowledges the support from the National Science Foundation under grants No. 1910100, No. 2046726, No. 2229876, and the Alfred P. Sloan Fellowship.

\bibliography{ref}

\begin{thebibliography}{59}
\providecommand{\natexlab}[1]{#1}
\providecommand{\url}[1]{\texttt{#1}}
\expandafter\ifx\csname urlstyle\endcsname\relax
  \providecommand{\doi}[1]{doi: #1}\else
  \providecommand{\doi}{doi: \begingroup \urlstyle{rm}\Url}\fi

\bibitem[Abadi et~al.(2016)Abadi, Chu, Goodfellow, McMahan, Mironov, Talwar, and Zhang]{abadi2016deep}
Abadi, M., Chu, A., Goodfellow, I., McMahan, H.~B., Mironov, I., Talwar, K., and Zhang, L.
\newblock Deep learning with differential privacy.
\newblock In \emph{Proceedings of the 2016 ACM SIGSAC conference on computer and communications security}, pp.\  308--318, 2016.

\bibitem[Almazrouei et~al.(2023)Almazrouei, Alobeidli, Alshamsi, Cappelli, Cojocaru, Debbah, Goffinet, Heslow, Launay, Malartic, Noune, Pannier, and Penedo]{falcon40b}
Almazrouei, E., Alobeidli, H., Alshamsi, A., Cappelli, A., Cojocaru, R., Debbah, M., Goffinet, E., Heslow, D., Launay, J., Malartic, Q., Noune, B., Pannier, B., and Penedo, G.
\newblock {Falcon-40B}: an open large language model with state-of-the-art performance.
\newblock 2023.

\bibitem[Anil et~al.(2021)Anil, Ghazi, Gupta, Kumar, and Manurangsi]{anil2021large}
Anil, R., Ghazi, B., Gupta, V., Kumar, R., and Manurangsi, P.
\newblock Large-scale differentially private bert.
\newblock \emph{arXiv preprint arXiv:2108.01624}, 2021.

\bibitem[Balle \& Wang(2018)Balle and Wang]{balle2018improving}
Balle, B. and Wang, Y.-X.
\newblock Improving the gaussian mechanism for differential privacy: Analytical calibration and optimal denoising.
\newblock In \emph{International Conference on Machine Learning}, pp.\  394--403. PMLR, 2018.

\bibitem[Bommasani et~al.(2019)Bommasani, Wu, and Schofield]{bommasani2019towards}
Bommasani, R., Wu, S., and Schofield, X.
\newblock Towards private synthetic text generation.
\newblock In \emph{NeurIPS 2019 Machine Learning with Guarantees Workshop}, 2019.

\bibitem[Bu et~al.(2022)Bu, Wang, Zha, and Karypis]{bu2022differentially}
Bu, Z., Wang, Y.-X., Zha, S., and Karypis, G.
\newblock Differentially private bias-term only fine-tuning of foundation models.
\newblock \emph{arXiv preprint arXiv:2210.00036}, 2022.

\bibitem[Carlini et~al.(2021)Carlini, Tramer, Wallace, Jagielski, Herbert-Voss, Lee, Roberts, Brown, Song, Erlingsson, et~al.]{carlini2021extracting}
Carlini, N., Tramer, F., Wallace, E., Jagielski, M., Herbert-Voss, A., Lee, K., Roberts, A., Brown, T., Song, D., Erlingsson, U., et~al.
\newblock Extracting training data from large language models.
\newblock \emph{USENIX Security Symposium}, 2021.

\bibitem[Carlini et~al.(2022)Carlini, Chien, Nasr, Song, Terzis, and Tramer]{carlini2022membership}
Carlini, N., Chien, S., Nasr, M., Song, S., Terzis, A., and Tramer, F.
\newblock Membership inference attacks from first principles.
\newblock In \emph{2022 IEEE Symposium on Security and Privacy (SP)}, pp.\  1897--1914. IEEE, 2022.

\bibitem[Carvalho et~al.(2023)Carvalho, Vasiloudis, Feyisetan, and Wang]{carvalho2023tem}
Carvalho, R.~S., Vasiloudis, T., Feyisetan, O., and Wang, K.
\newblock Tem: High utility metric differential privacy on text.
\newblock In \emph{Proceedings of the 2023 SIAM International Conference on Data Mining (SDM)}, pp.\  883--890. SIAM, 2023.

\bibitem[Chew(2022)]{chew2022use}
Chew, H. S.~J.
\newblock The use of artificial intelligence--based conversational agents (chatbots) for weight loss: scoping review and practical recommendations.
\newblock \emph{JMIR Medical Informatics}, 10\penalty0 (4):\penalty0 e32578, 2022.

\bibitem[Chung et~al.(1989)Chung, Kannappan, Ng, and Sahoo]{chung1989measures}
Chung, J., Kannappan, P., Ng, C.~T., and Sahoo, P.
\newblock Measures of distance between probability distributions.
\newblock \emph{Journal of mathematical analysis and applications}, 138\penalty0 (1):\penalty0 280--292, 1989.

\bibitem[CNN(2023)]{microsoftprivacy2023}
CNN.
\newblock Microsoft is bringing chatgpt technology to word, excel and outlook, 2023.
\newblock URL \url{https://www.cnn.com/2023/03/16/tech/openai-gpt-microsoft-365/index.html}.

\bibitem[Dong et~al.(2019)Dong, Roth, and Su]{dong2019gaussian}
Dong, J., Roth, A., and Su, W.~J.
\newblock Gaussian differential privacy.
\newblock \emph{arXiv preprint arXiv:1905.02383}, 2019.

\bibitem[Dwork et~al.(2014)Dwork, Roth, et~al.]{dwork2014algorithmic}
Dwork, C., Roth, A., et~al.
\newblock The algorithmic foundations of differential privacy.
\newblock \emph{Foundations and Trends{\textregistered} in Theoretical Computer Science}, 9\penalty0 (3--4):\penalty0 211--407, 2014.

\bibitem[Feldman et~al.(2022)Feldman, McMillan, and Talwar]{feldman2022hiding}
Feldman, V., McMillan, A., and Talwar, K.
\newblock Hiding among the clones: A simple and nearly optimal analysis of privacy amplification by shuffling.
\newblock In \emph{2021 IEEE 62nd Annual Symposium on Foundations of Computer Science (FOCS)}, pp.\  954--964. IEEE, 2022.

\bibitem[Feyisetan et~al.(2020)Feyisetan, Balle, Drake, and Diethe]{feyisetan2020privacy}
Feyisetan, O., Balle, B., Drake, T., and Diethe, T.
\newblock Privacy-and utility-preserving textual analysis via calibrated multivariate perturbations.
\newblock In \emph{Proceedings of the 13th international conference on web search and data mining}, pp.\  178--186, 2020.

\bibitem[He et~al.(2022)He, Li, Yu, Zhang, Kulkarni, Lee, Backurs, Yu, and Bian]{he2022exploring}
He, J., Li, X., Yu, D., Zhang, H., Kulkarni, J., Lee, Y.~T., Backurs, A., Yu, N., and Bian, J.
\newblock Exploring the limits of differentially private deep learning with group-wise clipping.
\newblock \emph{arXiv preprint arXiv:2212.01539}, 2022.

\bibitem[Heusel et~al.(2017)Heusel, Ramsauer, Unterthiner, Nessler, and Hochreiter]{heusel2017gans}
Heusel, M., Ramsauer, H., Unterthiner, T., Nessler, B., and Hochreiter, S.
\newblock Gans trained by a two time-scale update rule converge to a local nash equilibrium.
\newblock \emph{Advances in neural information processing systems}, 30, 2017.

\bibitem[Ho \& Salimans(2021)Ho and Salimans]{ho2021classifier}
Ho, J. and Salimans, T.
\newblock Classifier-free diffusion guidance.
\newblock In \emph{NeurIPS 2021 Workshop on Deep Generative Models and Downstream Applications}, 2021.

\bibitem[Inc(2023)]{yelpdataset}
Inc, Y.
\newblock Yelp dataset, 2023.
\newblock URL \url{https://www.yelp.com/dataset}.

\bibitem[Kurakin et~al.(2023)Kurakin, Ponomareva, Syed, MacDermed, and Terzis]{kurakin2023harnessing}
Kurakin, A., Ponomareva, N., Syed, U., MacDermed, L., and Terzis, A.
\newblock Harnessing large-language models to generate private synthetic text.
\newblock \emph{arXiv preprint arXiv:2306.01684}, 2023.

\bibitem[Kynk{\"a}{\"a}nniemi et~al.(2019)Kynk{\"a}{\"a}nniemi, Karras, Laine, Lehtinen, and Aila]{kynkaanniemi2019improved}
Kynk{\"a}{\"a}nniemi, T., Karras, T., Laine, S., Lehtinen, J., and Aila, T.
\newblock Improved precision and recall metric for assessing generative models.
\newblock \emph{Advances in Neural Information Processing Systems}, 32, 2019.

\bibitem[Lambert et~al.(2022)Lambert, Castricato, von Werra, and Havrilla]{lambert2022illustrating}
Lambert, N., Castricato, L., von Werra, L., and Havrilla, A.
\newblock Illustrating reinforcement learning from human feedback (rlhf).
\newblock \emph{Hugging Face Blog}, 2022.
\newblock https://huggingface.co/blog/rlhf.

\bibitem[Li et~al.(2021)Li, Tramer, Liang, and Hashimoto]{li2021large}
Li, X., Tramer, F., Liang, P., and Hashimoto, T.
\newblock Large language models can be strong differentially private learners.
\newblock In \emph{International Conference on Learning Representations}, 2021.

\bibitem[Li et~al.(2022)Li, Tramer, Liang, and Hashimoto]{li2022large}
Li, X., Tramer, F., Liang, P., and Hashimoto, T.
\newblock Large language models can be strong differentially private learners.
\newblock \emph{International Conference on Learning Representations}, 2022.

\bibitem[Lin et~al.(2024)Lin, Gopi, Kulkarni, Nori, and Yekhanin]{lin2023differentially}
Lin, Z., Gopi, S., Kulkarni, J., Nori, H., and Yekhanin, S.
\newblock Differentially private synthetic data via foundation model apis 1: Images.
\newblock \emph{International Conference on Learning Representations (ICLR)}, 2024.

\bibitem[Liu et~al.(2019)Liu, Ott, Goyal, Du, Joshi, Chen, Levy, Lewis, Zettlemoyer, and Stoyanov]{liu2019roberta}
Liu, Y., Ott, M., Goyal, N., Du, J., Joshi, M., Chen, D., Levy, O., Lewis, M., Zettlemoyer, L., and Stoyanov, V.
\newblock Roberta: A robustly optimized bert pretraining approach.
\newblock \emph{arXiv preprint arXiv:1907.11692}, 2019.

\bibitem[Liu et~al.(2023)Liu, Wang, Dao, Zhou, Yuan, Song, Shrivastava, Zhang, Tian, Re, et~al.]{liu2023deja}
Liu, Z., Wang, J., Dao, T., Zhou, T., Yuan, B., Song, Z., Shrivastava, A., Zhang, C., Tian, Y., Re, C., et~al.
\newblock Deja vu: Contextual sparsity for efficient llms at inference time.
\newblock In \emph{International Conference on Machine Learning}, pp.\  22137--22176. PMLR, 2023.

\bibitem[Lukas et~al.(2023)Lukas, Salem, Sim, Tople, Wutschitz, and Zanella-B{\'e}guelin]{lukas2023analyzing}
Lukas, N., Salem, A., Sim, R., Tople, S., Wutschitz, L., and Zanella-B{\'e}guelin, S.
\newblock Analyzing leakage of personally identifiable information in language models.
\newblock In \emph{2023 IEEE Symposium on Security and Privacy (SP)}, pp.\  346--363. IEEE Computer Society, 2023.

\bibitem[Malladi et~al.(2023)Malladi, Gao, Nichani, Damian, Lee, Chen, and Arora]{malladi2023fine}
Malladi, S., Gao, T., Nichani, E., Damian, A., Lee, J.~D., Chen, D., and Arora, S.
\newblock Fine-tuning language models with just forward passes.
\newblock \emph{arXiv preprint arXiv:2305.17333}, 2023.

\bibitem[Mattern et~al.(2022{\natexlab{a}})Mattern, Jin, Weggenmann, Schoelkopf, and Sachan]{mattern2022differentially}
Mattern, J., Jin, Z., Weggenmann, B., Schoelkopf, B., and Sachan, M.
\newblock Differentially private language models for secure data sharing.
\newblock In \emph{Proceedings of the 2022 Conference on Empirical Methods in Natural Language Processing}, pp.\  4860--4873, 2022{\natexlab{a}}.

\bibitem[Mattern et~al.(2022{\natexlab{b}})Mattern, Weggenmann, and Kerschbaum]{mattern2022limits}
Mattern, J., Weggenmann, B., and Kerschbaum, F.
\newblock The limits of word level differential privacy.
\newblock In \emph{Findings of the Association for Computational Linguistics: NAACL 2022}, pp.\  867--881, 2022{\natexlab{b}}.

\bibitem[Mattern et~al.(2023)Mattern, Mireshghallah, Jin, Schoelkopf, Sachan, and Berg-Kirkpatrick]{mattern2023membership}
Mattern, J., Mireshghallah, F., Jin, Z., Schoelkopf, B., Sachan, M., and Berg-Kirkpatrick, T.
\newblock Membership inference attacks against language models via neighbourhood comparison.
\newblock In \emph{Findings of the Association for Computational Linguistics: ACL 2023}, pp.\  11330--11343, 2023.

\bibitem[MistralAI(2022)]{mistral}
MistralAI.
\newblock Mixtral of experts.
\newblock \url{https://mistral.ai/news/mixtral-of-experts/}, 2022.

\bibitem[OpenAI(2022)]{chatgpt}
OpenAI.
\newblock Chat{GPT}.
\newblock \url{https://chat.openai.com}, 2022.

\bibitem[OpenAI(2023{\natexlab{a}})]{gpt35finetune}
OpenAI.
\newblock Gpt-3.5 turbo fine-tuning and api updates, 2023{\natexlab{a}}.
\newblock URL \url{https://openai.com/blog/gpt-3-5-turbo-fine-tuning-and-api-updates}.

\bibitem[OpenAI(2023{\natexlab{b}})]{openai2023gpt4}
OpenAI.
\newblock G{PT-4} technical report.
\newblock \emph{arXiv preprint arXiv:2303.08774}, 2023{\natexlab{b}}.

\bibitem[OpenAI(2023{\natexlab{c}})]{openaitoken2word}
OpenAI.
\newblock What are tokens and how to count them?, 2023{\natexlab{c}}.
\newblock URL \url{https://help.openai.com/en/articles/4936856-what-are-tokens-and-how-to-count-them}.

\bibitem[Pillutla et~al.(2021)Pillutla, Swayamdipta, Zellers, Thickstun, Welleck, Choi, and Harchaoui]{pillutla2021mauve}
Pillutla, K., Swayamdipta, S., Zellers, R., Thickstun, J., Welleck, S., Choi, Y., and Harchaoui, Z.
\newblock Mauve: Measuring the gap between neural text and human text using divergence frontiers.
\newblock \emph{Advances in Neural Information Processing Systems}, 34:\penalty0 4816--4828, 2021.

\bibitem[Putta et~al.(2022)Putta, Steele, and Ferrara]{putta2022differentially}
Putta, P., Steele, A., and Ferrara, J.~W.
\newblock Differentially private conditional text generation for synthetic data production.
\newblock 2022.

\bibitem[Radford et~al.(2019)Radford, Wu, Child, Luan, Amodei, Sutskever, et~al.]{radford2019language}
Radford, A., Wu, J., Child, R., Luan, D., Amodei, D., Sutskever, I., et~al.
\newblock Language models are unsupervised multitask learners.
\newblock \emph{OpenAI blog}, 1\penalty0 (8):\penalty0 9, 2019.

\bibitem[Reimers \& Gurevych(2019)Reimers and Gurevych]{reimers2019sentence}
Reimers, N. and Gurevych, I.
\newblock Sentence-bert: Sentence embeddings using siamese bert-networks.
\newblock In \emph{Proceedings of the 2019 Conference on Empirical Methods in Natural Language Processing and the 9th International Joint Conference on Natural Language Processing (EMNLP-IJCNLP)}, pp.\  3982--3992, 2019.

\bibitem[Rombach et~al.(2022)Rombach, Blattmann, Lorenz, Esser, and Ommer]{rombach2022high}
Rombach, R., Blattmann, A., Lorenz, D., Esser, P., and Ommer, B.
\newblock High-resolution image synthesis with latent diffusion models.
\newblock In \emph{Proceedings of the IEEE/CVF conference on computer vision and pattern recognition}, pp.\  10684--10695, 2022.

\bibitem[Rumshisky et~al.(2016)Rumshisky, Ghassemi, Naumann, Szolovits, Castro, McCoy, and Perlis]{rumshisky2016predicting}
Rumshisky, A., Ghassemi, M., Naumann, T., Szolovits, P., Castro, V., McCoy, T., and Perlis, R.
\newblock Predicting early psychiatric readmission with natural language processing of narrative discharge summaries.
\newblock \emph{Translational psychiatry}, 6\penalty0 (10):\penalty0 e921--e921, 2016.

\bibitem[Shokri et~al.(2017)Shokri, Stronati, Song, and Shmatikov]{shokri2017membership}
Shokri, R., Stronati, M., Song, C., and Shmatikov, V.
\newblock Membership inference attacks against machine learning models.
\newblock In \emph{2017 IEEE symposium on security and privacy (SP)}, pp.\  3--18. IEEE, 2017.

\bibitem[Touvron et~al.(2023{\natexlab{a}})Touvron, Lavril, Izacard, Martinet, Lachaux, Lacroix, Rozi{\`e}re, Goyal, Hambro, Azhar, et~al.]{touvron2023llama}
Touvron, H., Lavril, T., Izacard, G., Martinet, X., Lachaux, M.-A., Lacroix, T., Rozi{\`e}re, B., Goyal, N., Hambro, E., Azhar, F., et~al.
\newblock Llama: Open and efficient foundation language models.
\newblock \emph{arXiv preprint arXiv:2302.13971}, 2023{\natexlab{a}}.

\bibitem[Touvron et~al.(2023{\natexlab{b}})Touvron, Martin, Stone, Albert, Almahairi, Babaei, Bashlykov, Batra, Bhargava, Bhosale, et~al.]{touvron2023llama2}
Touvron, H., Martin, L., Stone, K., Albert, P., Almahairi, A., Babaei, Y., Bashlykov, N., Batra, S., Bhargava, P., Bhosale, S., et~al.
\newblock Llama 2: Open foundation and fine-tuned chat models.
\newblock \emph{arXiv preprint arXiv:2307.09288}, 2023{\natexlab{b}}.

\bibitem[Tram{\`e}r et~al.(2022)Tram{\`e}r, Kamath, and Carlini]{tramer2022considerations}
Tram{\`e}r, F., Kamath, G., and Carlini, N.
\newblock Considerations for differentially private learning with large-scale public pretraining.
\newblock \emph{arXiv preprint arXiv:2212.06470}, 2022.

\bibitem[Turc et~al.(2019)Turc, Chang, Lee, and Toutanova]{turc2019well}
Turc, I., Chang, M.-W., Lee, K., and Toutanova, K.
\newblock Well-read students learn better: On the importance of pre-training compact models.
\newblock \emph{arXiv preprint arXiv:1908.08962}, 2019.

\bibitem[Utpala et~al.(2023)Utpala, Hooker, and Chen]{utpala2023locally}
Utpala, S., Hooker, S., and Chen, P.-Y.
\newblock Locally differentially private document generation using zero shot prompting.
\newblock In \emph{Findings of the Association for Computational Linguistics: EMNLP 2023}, pp.\  8442--8457, 2023.

\bibitem[Voytovich \& Greenberg(2022)Voytovich and Greenberg]{voytovich2022natural}
Voytovich, L. and Greenberg, C.
\newblock Natural language processing: practical applications in medicine and investigation of contextual autocomplete.
\newblock In \emph{Machine Learning in Clinical Neuroscience: Foundations and Applications}, pp.\  207--214. Springer, 2022.

\bibitem[Wang et~al.(2023)Wang, Chen, Pei, Xie, Kang, Zhang, Xu, Xiong, Dutta, Schaeffer, et~al.]{wang2023decodingtrust}
Wang, B., Chen, W., Pei, H., Xie, C., Kang, M., Zhang, C., Xu, C., Xiong, Z., Dutta, R., Schaeffer, R., et~al.
\newblock Decodingtrust: A comprehensive assessment of trustworthiness in gpt models.
\newblock \emph{Conference on Neural Information Processing Systems (NeurIPS)}, 2023.

\bibitem[Wei et~al.(2021)Wei, Bosma, Zhao, Guu, Yu, Lester, Du, Dai, and Le]{wei2021finetuned}
Wei, J., Bosma, M., Zhao, V., Guu, K., Yu, A.~W., Lester, B., Du, N., Dai, A.~M., and Le, Q.~V.
\newblock Finetuned language models are zero-shot learners.
\newblock In \emph{International Conference on Learning Representations}, 2021.

\bibitem[Yu et~al.(2021)Yu, Zhang, Chen, Yin, and Liu]{yu2021large}
Yu, D., Zhang, H., Chen, W., Yin, J., and Liu, T.-Y.
\newblock Large scale private learning via low-rank reparametrization.
\newblock In \emph{International Conference on Machine Learning}, pp.\  12208--12218. PMLR, 2021.

\bibitem[Yu et~al.(2022)Yu, Naik, Backurs, Gopi, Inan, Kamath, Kulkarni, Lee, Manoel, Wutschitz, et~al.]{yu2022differentially}
Yu, D., Naik, S., Backurs, A., Gopi, S., Inan, H.~A., Kamath, G., Kulkarni, J., Lee, Y.~T., Manoel, A., Wutschitz, L., et~al.
\newblock Differentially private fine-tuning of language models.
\newblock \emph{International Conference on Learning Representations}, 2022.

\bibitem[Yu et~al.(2023)Yu, Backurs, Gopi, Inan, Kulkarni, Lin, Xie, Zhang, and Zhang]{yu2023training}
Yu, D., Backurs, A., Gopi, S., Inan, H., Kulkarni, J., Lin, Z., Xie, C., Zhang, H., and Zhang, W.
\newblock Training private and efficient language models with synthetic data from llms.
\newblock In \emph{NeurIPS Workshop on Socially Responsible Language Modelling Research}, 2023.

\bibitem[Yue et~al.(2023)Yue, Inan, Li, Kumar, McAnallen, Sun, Levitan, and Sim]{yue2022synthetic}
Yue, X., Inan, H.~A., Li, X., Kumar, G., McAnallen, J., Sun, H., Levitan, D., and Sim, R.
\newblock Synthetic text generation with differential privacy: A simple and practical recipe.
\newblock \emph{ACL}, 2023.

\bibitem[Zhang et~al.(2022)Zhang, Roller, Goyal, Artetxe, Chen, Chen, Dewan, Diab, Li, Lin, et~al.]{zhang2022opt}
Zhang, S., Roller, S., Goyal, N., Artetxe, M., Chen, M., Chen, S., Dewan, C., Diab, M., Li, X., Lin, X.~V., et~al.
\newblock Opt: Open pre-trained transformer language models.
\newblock \emph{arXiv preprint arXiv:2205.01068}, 2022.

\bibitem[Zheng et~al.(2023)Zheng, Chiang, Sheng, Zhuang, Wu, Zhuang, Lin, Li, Li, Xing, et~al.]{zheng2023judging}
Zheng, L., Chiang, W.-L., Sheng, Y., Zhuang, S., Wu, Z., Zhuang, Y., Lin, Z., Li, Z., Li, D., Xing, E., et~al.
\newblock Judging llm-as-a-judge with mt-bench and chatbot arena.
\newblock \emph{arXiv preprint arXiv:2306.05685}, 2023.

\end{thebibliography}
\bibliographystyle{icml2024}
\newpage
\appendix

\onecolumn

\section{Privacy Analysis}
\label{app:privacy-analysis}

We first introduce a related theorem from \citet{balle2018improving} in \cref{thm:anlytic_gaussian}.
\begin{theorem}[Analytic Gaussian Mechanism~\cite{balle2018improving}]
\label{thm:anlytic_gaussian}
    Let $f: \mathbb{X} \rightarrow \mathbb{R}^d$ be a function with global $L_2$ sensitivity $\Delta$. For any $\varepsilon \geq 0$ and $\delta \in[0,1]$, the Gaussian output perturbation mechanism $M(x)=f(x)+Z$ with $Z \sim \mathcal{N}\left(0, \sigma^2 I\right)$ is $(\varepsilon, \delta)-D P$ if and only if
$$
\Phi\left(\frac{\Delta}{2 \sigma}-\frac{\varepsilon \sigma}{\Delta}\right)-e^{\varepsilon} \Phi\left(-\frac{\Delta}{2 \sigma}-\frac{\varepsilon \sigma}{\Delta}\right) \leq \delta .
$$
\end{theorem}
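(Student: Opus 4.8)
The statement is the Analytic Gaussian Mechanism of \citet{balle2018improving}, so in the paper it is imported verbatim and no new proof is given; what follows is how one would reconstruct the argument. The plan is to reduce the $d$-dimensional guarantee to a one-dimensional two-point hypothesis test and then evaluate the relevant divergence in closed form. First I would invoke the standard reformulation of approximate DP: $M$ is $(\varepsilon,\delta)$-DP if and only if $\sup_{x \sim x'} H_{e^\varepsilon}(M(x) \,\|\, M(x')) \le \delta$, where $H_\gamma(P\|Q) = \sup_E (P(E) - \gamma Q(E)) = \int (p - \gamma q)_+$ is the hockey-stick divergence. Thus it suffices to compute this supremum over adjacent inputs for the mechanism $M(x) = f(x) + Z$ with $Z \sim \mathcal{N}(0,\sigma^2 I_d)$.

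Next I would show that the worst adjacent pair yields output laws $P = \mathcal{N}(v, \sigma^2 I_d)$ and $Q = \mathcal{N}(0, \sigma^2 I_d)$ with $\|v\|_2 = \Delta$. Two ingredients are needed: (i) the likelihood ratio $\mathrm{d}P/\mathrm{d}Q$ at a point $o$ depends on $o$ only through the scalar $\langle o, v\rangle$, so by the data-processing inequality applied to the projection $o \mapsto \langle o, v/\|v\|\rangle$, $H_{e^\varepsilon}(P\|Q)$ equals the same divergence between the one-dimensional Gaussians $\mathcal{N}(\|v\|, \sigma^2)$ and $\mathcal{N}(0,\sigma^2)$; (ii) this one-dimensional divergence is nondecreasing in the mean shift, so the supremum is attained at $\|v\| = \Delta$ (for the ``only if'' direction one additionally uses that $\Delta$ is the tight global sensitivity bound, i.e.\ it is attained or approached by a sequence of adjacent inputs, together with continuity of $H_{e^\varepsilon}$ in the shift).

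It then remains to compute $H_{e^\varepsilon}$ for $P = \mathcal{N}(\Delta, \sigma^2)$ against $Q = \mathcal{N}(0, \sigma^2)$. The privacy loss $\mathcal{L}(o) = \ln(p_\Delta(o)/p_0(o)) = \frac{\Delta}{\sigma^2}\, o - \frac{\Delta^2}{2\sigma^2}$ is affine in $o$, hence Gaussian under each measure: $\mathcal{L} \sim \mathcal{N}\!\left(\frac{\Delta^2}{2\sigma^2}, \frac{\Delta^2}{\sigma^2}\right)$ under $P$ and $\mathcal{L} \sim \mathcal{N}\!\left(-\frac{\Delta^2}{2\sigma^2}, \frac{\Delta^2}{\sigma^2}\right)$ under $Q$. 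Using $H_{e^\varepsilon}(P\|Q) = \Pr_{o\sim P}[\mathcal{L}(o) > \varepsilon] - e^\varepsilon \Pr_{o\sim Q}[\mathcal{L}(o) > \varepsilon]$ and standardizing both Gaussians gives exactly $\Phi\!\left(\frac{\Delta}{2\sigma} - \frac{\varepsilon\sigma}{\Delta}\right) - e^\varepsilon \Phi\!\left(-\frac{\Delta}{2\sigma} - \frac{\varepsilon\sigma}{\Delta}\right)$; combined with the first paragraph, $M$ is $(\varepsilon,\delta)$-DP iff this quantity is at most $\delta$, which is the claim.

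The main obstacle I expect is the reduction in the second paragraph: the data-processing collapse to one dimension is clean, but rigorously establishing that the two-point one-dimensional instance with shift exactly $\Delta$ is genuinely worst-case — monotonicity of $H_{e^\varepsilon}$ in the Gaussian mean shift, plus the tightness/attainment of the sensitivity bound required for the converse direction — needs a dedicated argument. Everything downstream of that reduction is a routine Gaussian-tail manipulation.
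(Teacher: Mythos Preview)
You are correct that the paper imports this theorem verbatim from \citet{balle2018improving} and gives no proof of its own; the only argument in the appendix is the short proof sketch of the subsequent theorem, which simply composes $T$ Gaussian mechanisms and invokes this result. Your reconstruction of the Balle--Wang argument --- reformulating $(\varepsilon,\delta)$-DP via the hockey-stick divergence, collapsing to a one-dimensional Gaussian shift by sufficiency of the projection onto $v/\|v\|$, and then reading off the two tail probabilities from the affine privacy-loss variable --- is accurate and matches the original proof, including the point you flag about needing monotonicity in the shift and attainability of $\Delta$ for the converse direction.
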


Next, we provide the privacy guarantee for \cref{algo} in \cref{thm:our_privacy}
\begin{theorem}[Privacy Guarantee for \cref{algo}]
\label{thm:our_privacy}
    Let \cref{algo} run $T$ iterations, with noise multiplier $\sigma$ (noise is added to each bin of the histogram), the DP mechanism satisfies $(\varepsilon, \delta)$-DP if and only if
$$
\Phi\left(\frac{\sqrt{T}}{2 \sigma}-\frac{\varepsilon \sigma}{\sqrt{T}}\right)-e^{\varepsilon} \Phi\left(-\frac{\sqrt{T}}{2 \sigma}-\frac{\varepsilon \sigma}{\sqrt{T}}\right) \leq \delta .
$$
\end{theorem}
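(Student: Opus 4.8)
The plan is to reduce the multi-iteration mechanism to a single application of the analytic Gaussian mechanism (\cref{thm:anlytic_gaussian}) by tracking the overall $L_2$ sensitivity of the ``concatenated'' output across all $T$ iterations. First I would observe that, conditioned on everything that has happened in previous iterations (the synthetic datasets, the embedding models, the resampling outcomes), each iteration of \cref{algo} releases a histogram $\mathrm{Histogram}_t = h_t(S_\mathrm{pri}) + \mathcal{N}(0,\sigma^2 I_n)$, where $h_t$ is the deterministic nearest-neighbor counting map. As argued in the main text, adding or removing one private sample changes exactly one bin of $h_t$ by exactly $1$, so the $L_2$ sensitivity of each $h_t$ is $1$. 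Everything else the algorithm does (drawing from the histogram, calling \samplevariationapiname{}, ranking) is post-processing of the noised histograms and does not touch the private data again, so by the post-processing property it suffices to analyze the release of the tuple $(\mathrm{Histogram}_0,\dots,\mathrm{Histogram}_{T-1})$.

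Next I would argue that this whole tuple is itself an instance of the Gaussian output perturbation mechanism applied to the stacked function $F(S_\mathrm{pri}) = (h_0(S_\mathrm{pri}),\dots,h_{T-1}(S_\mathrm{pri}))$ with independent noise $\mathcal{N}(0,\sigma^2 I)$ on all coordinates. The key quantity is the global $L_2$ sensitivity of $F$: since the $h_t$'s are adaptively chosen (iteration $t$'s counting map depends on the noisy outputs of iterations $<t$), I would invoke the adaptive composition structure — formally, the worst-case $L_2$ distance $\|F(\calD)-F(\calD')\|_2$ over adjacent datasets is bounded by $\sqrt{\sum_{t=0}^{T-1} \Delta_t^2} = \sqrt{T\cdot 1} = \sqrt{T}$, because each per-iteration sensitivity is $1$ regardless of the history. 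This is exactly the ``adaptive DP composition for Gaussian mechanisms'' fact cited via \citet{dong2019gaussian}: composing $T$ Gaussian mechanisms each with sensitivity-to-noise ratio $1/\sigma$ is equivalent (in the $f$-DP / Gaussian-DP sense) to a single Gaussian mechanism with ratio $\sqrt{T}/\sigma$, i.e. sensitivity $\Delta=\sqrt{T}$ and noise $\sigma$.

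Finally I would plug $\Delta = \sqrt{T}$ into the iff condition of \cref{thm:anlytic_gaussian}: the mechanism is $(\varepsilon,\delta)$-DP if and only if
\[
\Phi\!\left(\frac{\sqrt{T}}{2\sigma} - \frac{\varepsilon\sigma}{\sqrt{T}}\right) - e^{\varepsilon}\,\Phi\!\left(-\frac{\sqrt{T}}{2\sigma} - \frac{\varepsilon\sigma}{\sqrt{T}}\right) \le \delta,
\]
which is precisely the claimed bound, and post-processing (the downstream generation steps, and the fact that $S_\mathrm{syn}$ is a function of the noisy histograms) preserves it. The main obstacle — and the step that deserves care rather than a one-line wave — is justifying that the adaptivity across iterations does not inflate the sensitivity: one must argue that Gaussian DP / the analytic Gaussian mechanism composes cleanly under adaptive choice of the query functions, so that the effective object really is a single Gaussian mechanism with $L_2$ sensitivity $\sqrt{T}$. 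I would handle this by appealing to the tight adaptive composition theorem for Gaussian-DP of \citet{dong2019gaussian} (each step is $(1/\sigma)$-GDP $\Rightarrow$ the composition is $(\sqrt{T}/\sigma)$-GDP), together with the standard fact that $\mu$-GDP is equivalent to the family of $(\varepsilon,\delta(\varepsilon))$ guarantees described by the analytic Gaussian mechanism condition with $\Delta/\sigma = \mu$; the remaining details (sensitivity-$1$ per bin, post-processing) are routine and exactly as sketched in the ``Privacy analysis of \name{}'' paragraph.
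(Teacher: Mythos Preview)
Your proposal is correct and follows essentially the same argument as the paper's proof sketch: per-iteration histogram sensitivity $1$, adaptive composition of $T$ Gaussian mechanisms via \citet{dong2019gaussian}, then invocation of \cref{thm:anlytic_gaussian}, with post-processing handling the remaining steps. The only cosmetic difference is that the paper phrases the composed mechanism as sensitivity $1$ with noise $\sigma/\sqrt{T}$ whereas you phrase it as sensitivity $\sqrt{T}$ with noise $\sigma$; since the analytic Gaussian condition depends only on the ratio $\Delta/\sigma$, these are identical, and your write-up is in fact more explicit about the post-processing and adaptivity steps than the paper's own sketch.
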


\begin{proof}[Proof Sketch]
The proof is very similar to the one in \citet{lin2023differentially}. So we just describe the key steps at a high level. The $L_2$ sensitivity of the histogram created in each iteration of \cref{algo} is $\Delta=1$, to which we add Gaussian noise of scale $\sigma$. Therefore $T$ iterations of the algorithm can be seen as the adaptive composition of $T$ Gaussian mechanisms with $L_2$ sensitivity $1$ and noise scale $\sigma.$ The privacy loss of the composition is equivalent to that of a single Gaussian mechanism with $L_2$ sensitivity $1$ and noise scale $\sigma/\sqrt{T}$ according to the adaptive composition theorem of Gaussian mechanisms (Corollary 3. of ~\cite{dong2019gaussian}). Therefore the privacy gaurantee follows from Theorem~\ref{thm:anlytic_gaussian}.
\end{proof}

\section{Additional Experimental Details}
\label{app:exp-details}
\subsection{Datasets and Downstream Tasks.} 
\begin{table*}[ht]
\vspace{-3mm}
\renewcommand{\arraystretch}{1.1}
\centering 
\small
\caption{Dataset details.} 
\vspace{-2mm}
\label{tb:detaset-details}
\resizebox{\linewidth}{!}{
\begin{tabular}{cccccccccccc}
\toprule
  Dataset  &  \# Train & \#  Val & \#  Test & label 1 & label 2  \\
\midrule
\yelp{}   & 1.9M & 5000 & 5000  &  business category (10 classes)  & review ratings (5 classes)  \\
\openreview{} (ICLR2023)  & 8396 & 2798 &  2798  & review area  (12 classes)  &  review recommendation (5 classes)  \\
\pubmed{} (2023/08/01-2023/08/07) & 75316 & 14423 & 4453 &  \multicolumn{2}{c}{next-word prediction} \\
\bottomrule
\end{tabular}
}
\end{table*}
We evaluate \name{} on there datasets: 
\begin{itemize}[noitemsep,leftmargin=*]
    \item \yelp{}: \yelp{} data is a public benchmark providing reviews on businesses, and we used the preprocessed \yelp{} from \cite{yue2022synthetic}. The number of train/val/test samples and label information in \cref{tb:detaset-details}.
    \item  \openreview{}: For \openreview{} ICLR2023 data, we crawl the meta-data for each review using the OpenReview Python library,\footnote{https://github.com/openreview/openreview-py} and concatenate the fields ``summary\_of\_the\_paper'',  ``strength\_and\_weaknesses'' and ``summary\_of\_the\_review'' as one sample in our dataset. 
We group the two attributes -- review area and recommendation -- together as a combination, and drop the training samples from combinations that contain fewer than 50 training samples. The number of samples after such preprocessing and label information is provided in \cref{tb:detaset-details}. The number of samples for each class is provided in  \cref{tab:openreview_label_area} and \cref{tab:openreview_label_recom}. 
    \item \pubmed{}: we use PubMed with abstracts of medical papers\footnote{https://www.ncbi.nlm.nih.gov/} crawled by \citet{yu2023training} from 2023/08/01 to 2023/08/07. The number of train/val/test samples are reported in \cref{tb:detaset-details}.
\end{itemize}

For \yelp{} and \openreview{}, we focus on conditional generation and use two attributes (i.e., labels) for each dataset: the review ratings (ranging from 1 star to 5 stars) and business category for \yelp{} data, and the review recommendation (ranging from  ``1: strong reject'' to ``8: accept, good paper'') and review area for \openreview{} ICLR2023 data. 
We then use those labels for downstream classification tasks based on synthetic texts. 

For \pubmed{}, we focus on unconditional generation and use next-word prediction as downstream tasks.  This is motivated by \cite{yu2023training}

\begin{table}[h]
\centering
\caption{\openreviewarea{} label statistics of \openreview{}.}
\vspace{-2mm}
\label{tab:openreview_label_area}
\resizebox{\linewidth}{!}{
\begin{tabular}{lcc}
\toprule
Class Name  &  \# Train Samples (Proportion) & \# Test Samples (Proportion) \\ \midrule
Deep Learning and Representational Learning & 2479 (29.53\%) & 854 (30.52\%) \\
Applications (e.g., speech processing, computer vision, NLP) & 1100 (13.10\%)  & 380 (13.58\%) \\
Reinforcement Learning (e.g., decision and control, planning, hierarchical RL, robotics) & 1016 (12.10\%)  & 344 (12.29\%) \\
Social Aspects of Machine Learning (e.g., AI safety, fairness, privacy, interpretability, human-AI interaction, ethics) & 765 (9.11\%)  & 248 (8.86\%) \\
General Machine Learning & 598 (7.12\%)  & 177 (6.33\%) \\
Theory (e.g., control theory, learning theory, algorithmic game theory) & 458 (5.45\%)  & 144 (5.15\%) \\
Unsupervised and Self-supervised Learning & 452 (5.38\%)   & 135 (4.82\%) \\
Machine Learning for Sciences (e.g., biology, physics, health sciences, social sciences, climate/sustainability) & 440 (5.24\%)   & 166 (5.93\%) \\
Generative Models & 390 (4.65\%)  & 119 (4.25\%) \\
Optimization (e.g., convex and non-convex optimization) & 318 (3.79\%)   & 96 (3.43\%) \\
Probabilistic Methods (e.g., variational inference, causal inference, Gaussian processes) & 230 (2.74\%)  & 81 (2.89\%) \\
Neuroscience and Cognitive Science (e.g., neural coding, brain-computer interfaces) & 150 (1.79\%)   & 54 (1.93\%) \\
\bottomrule
\end{tabular}
}
\end{table}

\begin{table}[h]
\centering
\vspace{-2mm}
\caption{\openreviewrecom{} label statistics of \openreview{}.}
\vspace{-2mm}
\label{tab:openreview_label_recom}
\resizebox{0.9\linewidth}{!}{
\begin{tabular}{lcc}
\toprule
Class Name                                        &  \# Train Samples (Proportion) & \# Test Samples (Proportion) \\ \midrule
Recommendation: 6: marginally above the acceptance threshold & 2870  (34.18\%) & 896  (32.02\%)  \\
Recommendation: 5: marginally below the acceptance threshold & 2144 (25.54\%) &   760 (27.16\%)  \\
Recommendation: 3: reject, not good enough & 1703 (20.28\% ) &571  (20.41\%) \\
Recommendation: 8: accept, good paper & 1629 (19.40\%) &554  (19.80\%) \\
Recommendation: 1: strong reject & 50  (0.60\%) &17  (0.61\%) \\
\bottomrule
\end{tabular}
}
\end{table}

\subsection{Implementation Details of \name{}.}
\subsubsection{Model and Hyperparameters}
We consider four LLMs as data generators in \name{} via API-access: \gpttwo{}~\cite{radford2019language}, \gpttwom{}, \gpttwol{}, and \gptthreepointfive{} (``gpt-35-turbo'' hosted on Microsft Azure\footnote{https://learn.microsoft.com/en-us/azure/ai-services/openai/concepts/models})~\cite{chatgpt}. We provide the default hyper-parameter setup for GPT-3.5 in \cref{tb:hyper-gpt35} and GPT-2 series models in \cref{tb:hyper-gpt2}.

The embedding model $\embeddingnetworkname{}$  in \name{} is instantiated by the sentence-transformer from HuggingFace. We use ``stsbroberta-base-v2'' for \openreview{} and \yelp{} and ``sentence-t5-base'' for \pubmed{}.

After generating the synthetic samples, we remove those with fewer than 100/50 tokens for \openreview{}/\pubmed{}. We noticed that samples with token lengths below those thresholds usually result from an unsuccessful API call for paper review/medical abstract generation  (e.g. \gptthreepointfive{} refuses to answer). 

In terms of downstream models,
\begin{itemize}
    \item For \yelp{} and \openreview{},  we finetune the pre-trained \robertabase{} model for all downstream text classification tasks. We set the max sequence length as 512, the batch size as 64, the learning rate as 3e-5, and the number of epochs as 5 for \yelp{} and 10 for \openreview{}. 
    \item For \pubmed{}, we leverage pre-trained \bertmini{} and \bertsmall{} released by \cite{turc2019well}, which are lightweight to meet the inference time and computational cost requirements in many real-use cases. These models employ WordPiece tokenization and were trained on Wikipedia and BookCorpus using masked language modeling. During our downstream task fine-tuning, we implement a causal language modeling mask, restricting each token to attend only to its preceding tokens~\cite{yu2023training}.
We set the max sequence length as 512, batch size as 32, learning rate as 3e-4, the weight decay as 0.01. We finetune 20 epochs for \bertmini{} and 10 for \bertsmall{} epochs.
\end{itemize}

\begin{table*}[ht]
\vspace{-2mm}
\renewcommand{\arraystretch}{1.1}
\centering 
\small
\caption{Hyperparameters for GPT-3.5.} 
\vspace{-2mm}
\label{tb:hyper-gpt35}
\resizebox{\linewidth}{!}{
\begin{tabular}{cccccccccccc}
\toprule
    & $N_\mathrm{syn}$  & $K$ &  \samplevariationapiname{} & mask prob. p\%   &  $L$ &   PE iteration & temperature &  $\mathrm{w2t\_ratio}$    & $\sigma_{word}$  & $\mathrm{\min\_word}$  &  $\mathrm{\max\_token}$ for  \randomsampleapiname{}  \\
\midrule
 \yelp{}  & 5k  &  3   & fill-in-the-blanks (3-shot)& 50\% & 1  &  20 & 1.4 & 1.2 & 40  &25   & 128  \\
  \openreview{}  & 2k  &   0   & fill-in-the-blanks (1-shot)& 50\% & 4  &  10 & 1.2 & 5 & 60  & 25   & 1000  \\
  \pubmed{}  & 2k  &   0   & fill-in-the-blanks (0-shot)& 50\% & 4  &  10 & 1.2 & 5 & 60  & 25   & 1000  \\
\bottomrule
\end{tabular}
}
\end{table*}

\begin{table*}[ht]
\renewcommand{\arraystretch}{1.1}
\centering 
\small
\caption{Hyperparameters for GPT-2, GPT-2-Medium, and GPT-2-Large.} 
\vspace{-2mm}
\label{tb:hyper-gpt2}
\resizebox{0.9\linewidth}{!}{
\begin{tabular}{cccccccccccc}
\toprule
  Model  & $N_\mathrm{syn}$  & $K$ &  \samplevariationapiname{}    &  $L$ &   PE iteration $T$ & temperature &  $\mathrm{\max\_token}$      &  \\
\midrule
\yelp{}   & 5k, 10k, 100k  &   0  & paraphrasing (zero-shot) &  7  &  20 & 1.4 & 64     \\
\openreview{}   & 2k, 3k, 5k  &   0  & paraphrasing (zero-shot) &  7  &  10 & 1.2 & 448     \\
\pubmed{}   & 2k, 3k, 5k  &   0  & paraphrasing (zero-shot) &  7  &  10 & 1.0 & 448     \\
\bottomrule
\end{tabular}
}
\end{table*}

\begin{table}[h]\small
\centering
\vspace{-2mm}
\caption{\small Prompts as \randomsampleapiname{} for \gptthreepointfive{}.
}
\label{tab:random_api}
\begin{tabular}{>{\raggedright\arraybackslash}p{1cm} >{\raggedright\arraybackslash}p{4cm} >{\raggedright\arraybackslash}p{5cm} >{\raggedright\arraybackslash}p{3cm}
}
\toprule
\textbf{Speaker} & \textbf{\yelp{}} & \textbf{\openreview{}} & \textbf{\pubmed{}} \\
\midrule
System 
& You are required to write an example of review based on the provided Business Category and Review Stars that fall within the range of 1.0-5.0. 
& Given the area and final decision of a research paper, you are required to provide an example of the review consisting of the following content: 1. briefly summarizing the paper in 3-5 sentences; 2. listing the strengths and weaknesses of the paper in details; 3. briefly summarizing the review in 3-5 sentences. 
& Please act as a sentence generator for the medical domain. Generated sentences should mimic the style of PubMed journal articles, using a variety of sentence structures. \\
\midrule
User & Business Category: \{$\mathrm{label\_1}$\} | Review Stars: \{$\mathrm{label\_2}$\}  with  keyword \{$\mathrm{subcategory}$\} 
&    Area: \{$\mathrm{label\_1}$\}	| Recommendation: \{$\mathrm{label\_2}$\} 
&  Suppose that you are a \{$\mathrm{writer}$\}. Please write an abstract for a medical research paper:
\\
\bottomrule
\end{tabular}
\end{table}

\begin{table}[h]\small
\centering
\vspace{-2mm}
\caption{\small Prompts as \randomsampleapiname{} for \gpttwo{}-series models. 
}
\label{tab:random_api_gpt2}
\begin{tabular}{>{\raggedright\arraybackslash}p{4cm} >{\raggedright\arraybackslash}p{5cm} >{\raggedright\arraybackslash}p{3cm}
}
\toprule
 \textbf{\yelp{}} & \textbf{\openreview{}} & \textbf{\pubmed{}} \\
\midrule
 Business Category: \{$\mathrm{label\_1}$\} | Review Stars: \{$\mathrm{label\_2}$\}  with  keyword \{$\mathrm{subcategory}$\} 
&   Suppose that you are a \{$\mathrm{writer}$\}. Write a paper review based on  Area: \{$\mathrm{label\_1}$\}	| Recommendation: \{$\mathrm{label\_2}$\} 
&  Using a variety of sentence structures, write an abstract for a medical research paper:
\\
\bottomrule
\end{tabular}
\end{table}

\begin{table}[h]\small
\centering
\caption{\small Prompts as \samplevariationapiname{} for \gpttwo{}-series models on \yelp{} and \openreview{}.
}
\label{tab:var_api_gpt2}
\begin{tabular}{>{\raggedright\arraybackslash}p{2cm} >{\raggedright\arraybackslash}p{14cm}}
\toprule
\textbf{Datast} & \textbf{Prompt}  \\
\midrule
\yelp{}& Based on ``Business Category: \{$\mathrm{label\_1}$\} | Review Stars: \{$\mathrm{label\_2}$\}'', please rephrase the following sentences \{in a $\mathrm{selected\_tone}$\}:\\
&\{$\mathrm{input}$\}
\\
\midrule
\openreview{} & Based on ``Area: \{$\mathrm{label\_1}$\} | Recommendation: \{$\mathrm{label\_2}$\}'', please rephrase the following sentences \{in a $\mathrm{selected\_tone}$\}:\\
&\{$\mathrm{input}$\}
\\\midrule
\pubmed{} & Please rephrase the following sentences \{in a $\mathrm{selected\_tone}$\} as an abstract for medical research paper: \\
&\{$\mathrm{input}$\}\\
\bottomrule
\end{tabular}
\end{table}

\begin{table}[htp!]\small
\centering
\caption{\small Prompts as \samplevariationapiname{} for  GPT-3.5 on \yelp{}.
}
\label{tab:var_api_gpt35_yelp}
\resizebox{\linewidth}{!}{
\begin{tabular}{>{\raggedright\arraybackslash}p{1cm} >{\raggedright\arraybackslash}p{14cm}}
\toprule
\textbf{Speaker} & \textbf{Prompt}  \\
\midrule
System &You are a helpful, pattern-following assistant. \\
\midrule
User & Based on the Business Category and Review Stars, you are required to fill in the blanks in the Input sentences. If there are no blanks, you are required to output the original Input sentences. \\
&\\
& Business Category: Restaurants | Review Stars: 2.0\\
&Input: \_ that great , terrible \_ rolls and fish \_ smelling \_ \_.\\
&Fill-in-Blanks and your answer MUST be exactly 10 words: Not that great, terrible egg rolls and fishy smelling shrimp.\\
&\\
& Business Category: Beauty \& Spas | Review Stars: 5.0 \\
& Input: Very clean! Staff are super friendly!!\\
& Fill-in-Blanks and your answer MUST be exactly 6 words: Very clean! Staff are super friendly!!\\
&\\
& Business Category: Shopping | Review Stars: 3.0 \\
& Input: I \_ in \_ and stopped in for a \_. I was \_ surprised. Good \_, nice price.\\
&Fill-in-Blanks and your answer MUST be exactly 19 words: I was in a rush and stopped in for a mani-pedi. I was pleasantly surprised. Good service, nice price.\\
&\\
& Business Category: \{$\mathrm{label\_1}$\} | Review Stars: \{$\mathrm{label\_2}$\}  \\
& Input: \{$\mathrm{masked\_input}$\} \\
& Fill-in-Blanks and your answer MUST be exactly $\{\mathrm{targeted\_word}\}$ words:
\\
\bottomrule
\end{tabular}
}
\end{table}

\begin{table}[h]
\small
\centering
\caption{\small Prompts as \samplevariationapiname{} for  GPT-3.5 on \openreview{}. 
}
\label{tab:var_api_gpt35_openreview}
\resizebox{\linewidth}{!}{
\begin{tabular}{>{\raggedright\arraybackslash}p{1cm} >{\raggedright\arraybackslash}p{15.5cm}}
\toprule
\textbf{Speaker} & \textbf{Prompt}  \\
\midrule
System & You are an AI assistant that helps people find information. \\
\midrule
User & Based on the area and final recommendation of a research paper, you are required to fill in the blanks for the input sentences \{in a $\mathrm{selected\_tone}$\}. If there is no blanks, please output the original input sentences.\\
&\\
& Area: Applications (eg, speech processing, computer vision, NLP) | Recommendation: 3: reject, not good enough \\
& Input: \_\_ proposes an\_\_ method\_ ROI detection\_\_arial\_f\_ without attention\_. The\_ map can\_ used\_\_\_\_ for\_\_ and\_\_\_\_ show\_ improvements on different medical\_\_.\_Strength\_\_ $\backslash$n--The idea using\_\_actual images\_ sali\_\_ generation\_ interesting.$\backslash$n$\backslash$n\_The improvement\_\_\_\_aks is significant. $\backslash$n$\backslash$nWeak\_\_\_\_The\_\_\_ and\_\_\_\_\_ experiments are needed\_ such as\_\_f\_\_\_the\_ method\_ interesting\_ but\_ novelty\_ limited \\
&Fill-in-Blanks and your answer MUST be exactly 85 words: This paper proposes an attention generation method for ROI detection by adversarial counterfactual without attention label. The attention map can be used to highlight useful information for disease classification and detection. The experiments show its improvements on different medical imaging tasks.  $\backslash$nStrengths: $\backslash$n--The idea using counterfactual images for saliency map generation is interesting.$\backslash$n$\backslash$n--The improvement for medical imaging taks is significant. $\backslash$n$\backslash$nWeaknesses:$\backslash$n$\backslash$n--The novelty is simple and limited. $\backslash$n$\backslash$n--More experiments are needed, such as existing counterfactual generation.$\backslash$nthe proposed method is interesting, but the novelty is limited.\\
&\\
&\\
& Area: \{$\mathrm{label\_1}$\} | Recommendation: \{$\mathrm{label\_2}$\}  \\
& Input: \{$\mathrm{masked\_input}$\} \\
& Fill-in-Blanks and your answer MUST be exactly $\{\mathrm{targeted\_word}\}$ words:
\\
\bottomrule
\end{tabular}
}
\end{table}

\begin{table}[h]
\small
\centering
\caption{\small Prompts as \samplevariationapiname{} for  GPT-3.5 on \pubmed{}. 
}
\label{tab:var_api_gpt35_pubmed}
\resizebox{\linewidth}{!}{
\begin{tabular}{>{\raggedright\arraybackslash}p{1cm} >{\raggedright\arraybackslash}p{15.5cm}}
\toprule
\textbf{Speaker} & \textbf{Prompt}  \\
\midrule
System & Please act as a sentence generator for the medical domain. Generated sentences should mimic the style of PubMed journal articles, using a variety of sentence structures.\\
\midrule
User & 
You are required to fill in the blanks with more details for the input medical abstract \{in a $\mathrm{selected\_tone}$\}. If there is no blanks, please output the original medical abstract.\\
& Please fill in the blanks in the following sentences to write an abstract of a medical research paper: \{$\mathrm{masked\_input}$\} and your answer MUST be exactly  $\{\mathrm{targeted\_word}\}$ words.
\\
\bottomrule
\end{tabular}
}
\end{table}

\subsubsection{API Prompt Designs}
In terms of \randomsampleapiname{},
\begin{itemize}[noitemsep,leftmargin=*]
    \item For \yelp{} data,  we generate 100 subcategories under each business category via ChatGPT and use them as keywords in the prompts.
    \item For \openreview{} data, we do not generate subcategories, as the review area label (e.g., ``\textit{Social Aspects of Machine Learning (eg, AI safety, fairness, privacy, interpretability, human-AI interaction, ethics)}'') already provides detailed information about the area. Instead, we generate a list of writers with their corresponing tones via ChatGPT (e.g., ``\textit{Postdoctoral Researcher: Advanced and knowledgeable insights'',
``AI Policy Maker: Concerned with regulatory and policy implications'',
``Robotics Engineer: Focus on practical applications in robotics''}) and use them as keywords in the prompt.
    \item For \pubmed{} data, we also generate a list of writers for medical abstracts via ChatGPT, such as ``\textit{Clinical Researcher, Principal Investigator, Biomedical Engineer}'', etc., and use them as keywords in the prompt.
\end{itemize} 
We provide the prompts of \randomsampleapiname{} for all datasets in \cref{tab:random_api} for \gptthreepointfive{} and \cref{tab:random_api_gpt2} for other LLMs.

In terms of \samplevariationapiname{}, (1) for GPT-3.5, we utilize fill-in-the-blanks with adaptive text lengths, providing few-shot demonstrations. To obtain \{$\mathrm{masked\_input}$\} used for fill-in-the-blanks, we calculate the tokens for  \{$\mathrm{input}$\} based on GPT-3.5 tokenizer\footnote{https://github.com/openai/tiktoken},  mask $p\%$ of them as blanks ``\_'', and decode them back to the text.
(2) In contrast, for \gpttwox{} models, we opt for zero-shot paraphrasing with fixed $\mathrm{\max\_token}$ as \samplevariationapiname{}. This choice is based on our observation that \gpttwox{} models do not follow the instructions of fill-in-the-blanks and adaptive text lengths well, as they are only pretrained on next-word-prediction tasks without further instruction tuning or reinforcement learning from human feedback (RLHF)~\cite{lambert2022illustrating} for blank filling tasks. Moreover, \gpttwox{} models do not gain much from few-shot demonstrations for paraphrasing, possibly due to their inferior instruction-following and in-context learning capabilities compared to GPT-3.5.

We provide the  prompts of \samplevariationapiname{} for \gpttwox{} models in \cref{tab:var_api_gpt2} and for GPT-3.5 in \cref{tab:var_api_gpt35_yelp},  \cref{tab:var_api_gpt35_openreview} and \cref{tab:var_api_gpt35_pubmed}.

\subsubsection{Differential Privacy.} 

Following \citet{yue2022synthetic}, we set $\delta=  \frac{1}{N_{\priv} \cdot \log(N_{\priv})}  $  for $(\epsilon,\delta)$-DP. 
As different datasets have different sizes of private training data, they require different $\delta$.  We run 10 PE iterations under DP on all datasets.
To achieve $\epsilon=\{1,2,4,\infty\}$, we use noise multiplier   $\sigma=\{15.34, 8.03, 4.24, 0\}$ for \yelp{};
$\sigma=\{11.60, 6.22, 3.38, 0\}$ for \openreview{};
$\sigma=\{13.26,7.01 ,3.75, 0\}$ for \pubmed{}.

\subsection{Implementation Details of Baselines.} 

For \ftgenerator{}, we finetune the \gpttwo{}-series models following the hyperparameters setup in Table 8 of  \cite{yue2022synthetic}.

For \ftdownstream{}, we report the hyperparameters for  \openreview{} and \yelp{} in  \cref{tb:hyper-downstream-yelp-openreview}, and \pubmed{} in \cref{tb:hyper-downstream-pubmed}.
For a target $\epsilon$, a noise multiplier is set as the smallest value such that DP-SGD can run the target number of steps.

\begin{table*}[ht]
\renewcommand{\arraystretch}{1.1}
\centering 
\small
\caption{Hyperparameters for \ftdownstream{} on \pubmed{}.} 
\vspace{-2mm}
\label{tb:hyper-downstream-pubmed}
\resizebox{0.9\linewidth}{!}{
\begin{tabular}{c|cc|ccccccccc}
\toprule
 &   \multicolumn{2}{c|}{\berttiny, \bertmini, \bertsmall{} for \pubmed{}}  &  \multicolumn{2}{c}{ \llamatwosevenb{} for \pubmed{}}   \\
 &  downstream (non-pri.)  & downstream (pri.)  &  downstream (non-pri.)  & downstream (pri.)  \\\midrule
Epoch & [5, 10, 30] & [10, 30, 50, 100] & 10 & 10 \\
Batch size & [32, 64] & [1024, 2048, 4096] & 128 & 128 \\
Clipping norm & - & [0.1, 0.5, 1, 3, 5] & - & 1 \\
Learning rate & [$3\times 10^{-5}$, \{$1$, $3$\}$\times 10^{-4}$] & [$3 \times 10^{-4}$, \{$1$, $3$\}$\times 10^{-3}$] & $1 \times 10^{-3}$ & $1 \times 10^{-3}$ \\
\bottomrule
\end{tabular}
}
\end{table*}

\begin{table*}[ht]
\renewcommand{\arraystretch}{1.1}
\centering 
\small
\caption{Hyperparameters for \ftdownstream{} on \yelp{} and \openreview{}.} 
\vspace{-2mm}
\label{tb:hyper-downstream-yelp-openreview}
\resizebox{0.9\linewidth}{!}{
\begin{tabular}{c|cc|ccccccccc}
\toprule
 &   \multicolumn{2}{c|}{\robertabase{} for \yelp{}}  &  \multicolumn{2}{c}{ \robertabase{} for \openreview{}}   \\
 &  downstream (non-pri.)  & downstream (pri.)  &  downstream (non-pri.)  & downstream (pri.)  \\\midrule
Epoch & [1,10] & [1,10] & 10 & 10 \\
Batch size & [128, 1024] & [128, 1024] & 8 & 128 \\
Clipping norm & - & 1 & - & 1 \\
Learning rate & $3\times 10^{-5}$ & $3\times 10^{-5}$ & $3\times 10^{-5}$ & $3\times 10^{-5}$ \\
\bottomrule
\end{tabular}
}
\end{table*}

\subsection{Metrics.} 
Here we provide more details about the metrics regarding embedding distribution distance.
We use sentence-transformer ``stsb-roberta-base-v2'' from HuggingFace\footnote{https://huggingface.co/models} to embed the real and synthetic datasets, and use seven evaluation metrics to measure embedding distribution distance: 
1) Fréchet Inception Distance (\textit{FID}) evaluates the feature-wise mean and covariance matrices of the embedding vectors and then computes the Fréchet distance between these two groups~\cite{heusel2017gans} ;
2) \textit{Precision} estimates the average sample quality;
3) \textit{Recall} assesses the breadth of the sample distribution;
4) \textit{F1} score is the harmonic mean of Precision and Recall, serving as a balance of the two~\cite{kynkaanniemi2019improved}; 
5) MAUVE evaluates the distributional distance of the synthetic and real data via divergence frontiers~\cite{pillutla2021mauve};
6) \textit{KL div.} measures the distance of embedding distributions based on KL divergence;
7) \textit{TV div.} quantifies the distance based on Total Variation divergence~\cite{chung1989measures}.

For downstream classification accuracy, we train downstream models \textbf{three times} and report the average accuracy.
For each metric associated with embedding distribution distance (except FID for which we use the whole dataset), we randomly draw 5000 samples (for efficiency) from the private dataset and the synthetic dataset respectively, to calculate the distance. 
We then report the averaged results based on \textbf{five} independent draws.

\section{Additional Experimental Results}
\label{app:exp-results}

\subsection{Robustness Against Membership Inference Attacks}
\label{app:mia_attack}
In this work, we focus on DP, a type of widely accepted privacy guarantee with profound theoretic backup which provides an upper bound for empirical membership privacy attacks.  To better understanding empirical risk, we perform state-of-the-art membership inference attacks (MIAs)~\cite{shokri2017membership} in the text domain.

We perform MIAs against the finetuned downstream models (which are fine-tuned on synthetic data for \name{}/ \ftgenerator{}; on real private data for \ftdownstream{}).  We randomly sample 4000 \pubmed{} real private samples as members and 4000 \pubmed{} test samples as non-members for evaluation.
We report AUC (Area Under the Curve) to evaluate the risks of MIAs. We consider three types of MIAs: 
   (1)  PPL thresholds perplexity to predict membership~\cite{carlini2021extracting}. 
    (2) REFER computes the ratio of the log perplexity of the tested model against a reference model~\cite{carlini2021extracting}. 
    (3) LIRA uses the ratio of likelihood instead of log-perplexity~\cite{carlini2022membership}. LiRA assumes the availability of high-quality data distributed similarly to the training set, which was thought to be impractical~\cite{tramer2022considerations}. Therefore, we follow \cite{mattern2023membership} to use the pre-trained model as a reference.

The results in \cref{tab:mia_auc} show that \name{} generally exhibits lower MIA AUC scores compared to both \ftgenerator{} and \ftdownstream{}  models. This indicates a higher robustness to empirical privacy attacks, potentially due to the synthetic nature of the data used for downstream model finetuning, which inherently reduces the risk of overfitting to real private data.

\subsection{Convergence of Text Length Distribution}
\label{app:sentence-len-converge}
As shown in \cref{fig:gpt35-length-conv-yelp}, \cref{fig:gpt35-length-conv-pubmed} and \cref{fig:gpt35-length-conv-openreview}\footnote{For \openreview{} in \cref{fig:gpt35-length-conv-openreview}, we use a temperature of 1.4.},  we see that over the PE iterations, the text length distribution of synthetic samples produced from \chatgpt{} through our \name{} converges, as it becomes closer to the distribution of the original data. This showcases the effectiveness of our adaptive text length mechanism.  
We note that there is a noticeable peak near 30 tokens for our synthetic texts on \yelp{}, which is attributed to the $\mathrm{\min\_word}$ used in the \samplevariationapiname{}  prompt to avoid generating blank outputs.  

\begin{figure}
    \centering
    \includegraphics[width=1.0\linewidth]{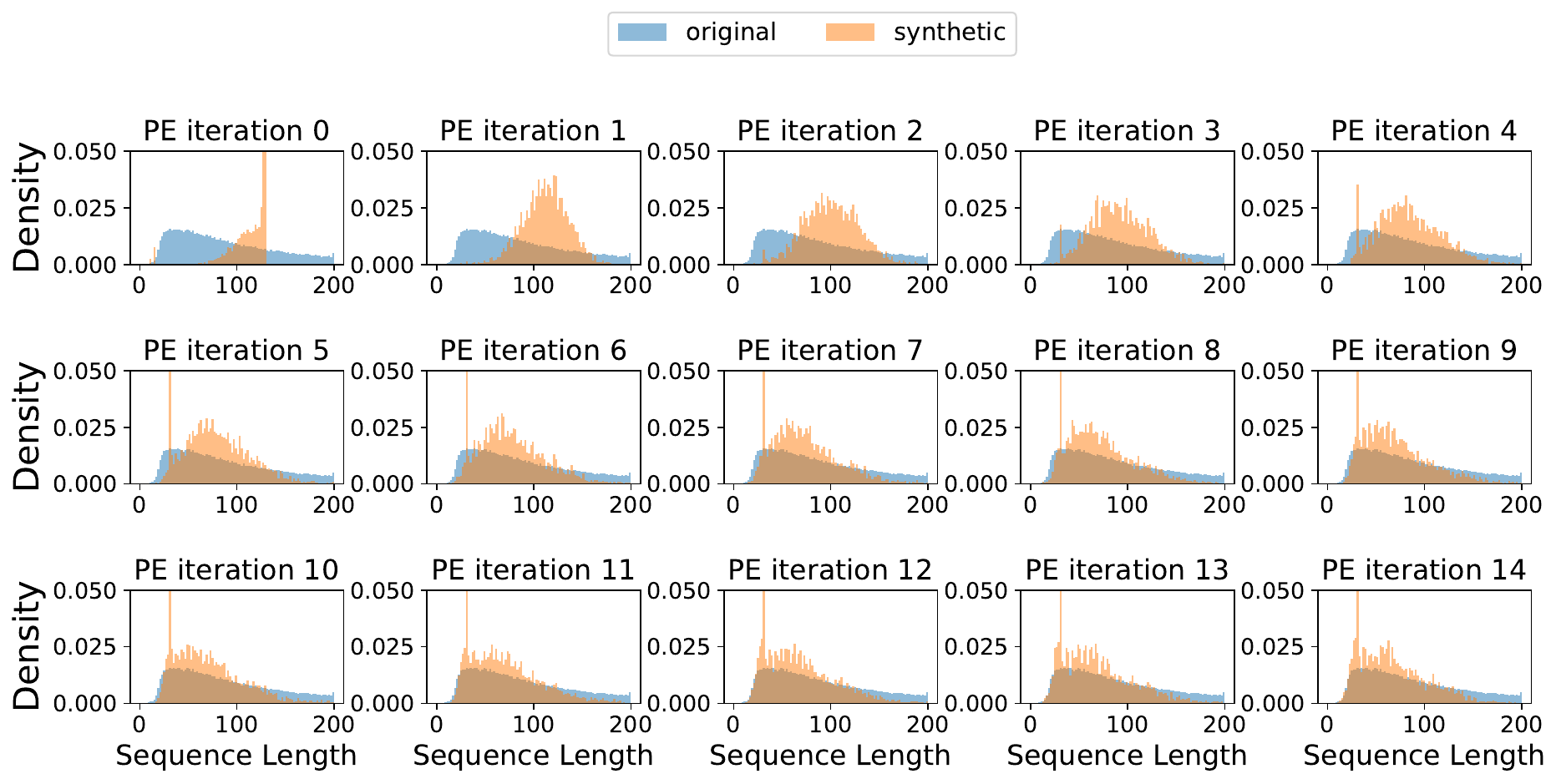} 
    \caption{\small Convergence of text length distribution over \name{} iterations on \yelp{} synthetic text generated from \gptthreepointfive{}.
    }
    \label{fig:gpt35-length-conv-yelp}
\end{figure}

\begin{figure}
    \centering
    \includegraphics[width=1.0\linewidth]{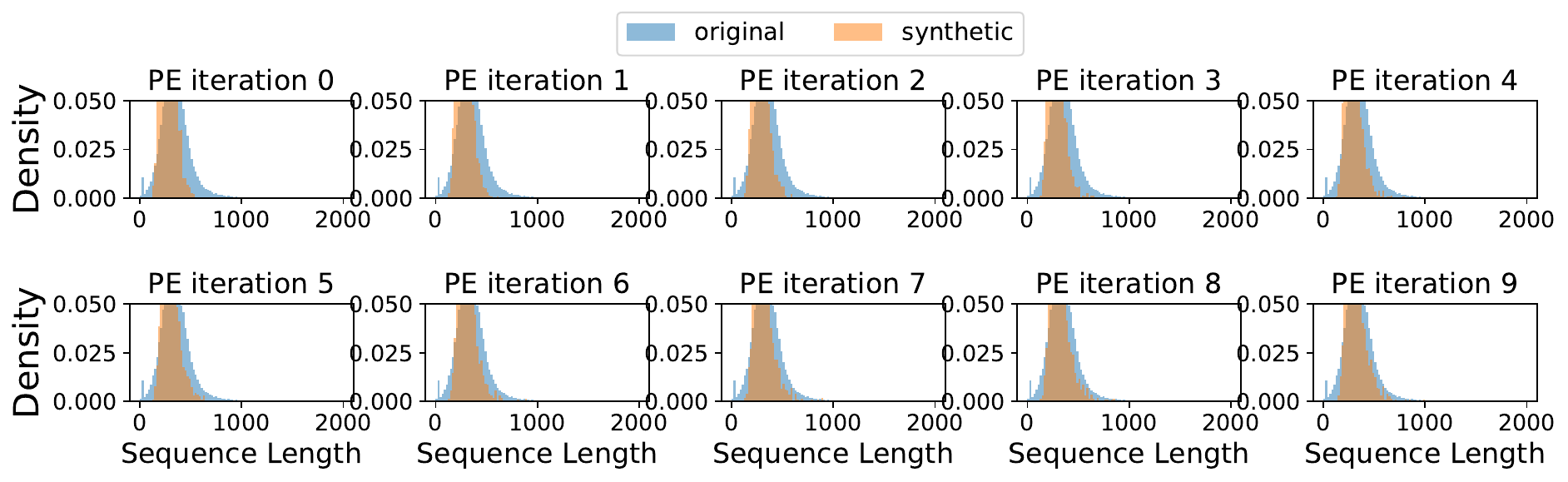} 
    \caption{\small Convergence of text length distribution over \name{} iterations on \pubmed{} synthetic text generated from \gptthreepointfive{}.
    }
    \label{fig:gpt35-length-conv-pubmed}
\end{figure}

\begin{figure}
    \centering
    \includegraphics[width=1.0\linewidth]{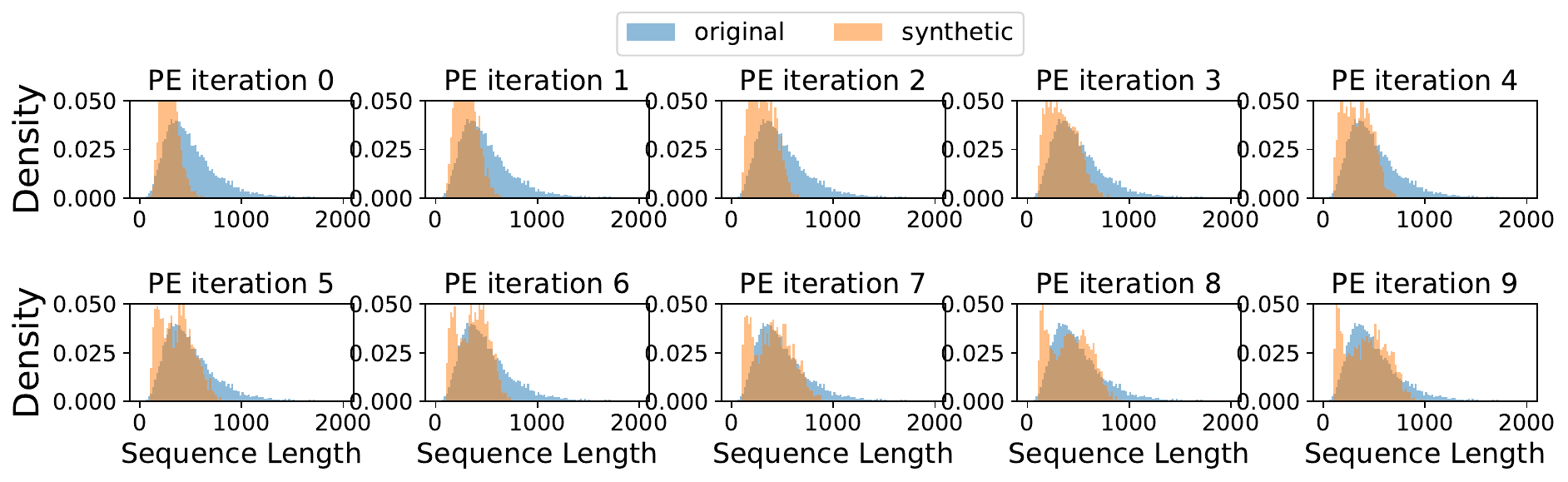} 
    \caption{\small Convergence of text length distribution over \name{} iterations on \openreview{} synthetic text generated from \gptthreepointfive{}.
    }
    \label{fig:gpt35-length-conv-openreview}
\end{figure}

\subsection{Efficiency in Terms of GPU Hours}
\begin{table}[h]
\caption{GPU hours on one 32G NVIDIA V100 for \name{} \ftgenerator{}  on \yelp{} under $\epsilon=1$. \name{} is more efficient with fewer total GPU hours. }
\label{tab:gpu_hours}
\centering
\resizebox{0.8\columnwidth}{!}{%
\begin{tabular}{ll|c|cccc}\toprule
&     & \multirow{2}{*}{DP-SGD finetune}  &  \multicolumn{3}{c}{Generation} \\
&     &    &  5k samples &  10k samples &  100k samples \\\midrule
\multirow{3}{*}{\begin{tabular}[c]{@{}l@{}} \ftgenerator{}  \end{tabular}} & GPT2& 456.71 & 0.22 & 0.45  & 4.47    \\
& GPT2-Medium & 709.50 & 0.25 & 0.50  & 5.03   \\
& GPT2-large  & 1764.42& 0.35 & 0.70  & 6.96   \\\midrule
\multirow{3}{*}{\name{}  ($L=2$)}     & GPT2& /      & 1.76 & 2.48 & 13.35 \\
& GPT2-Medium & /      & 2.30 & 2.89 & 18.68   \\
& GPT2-large  & /      & 2.68 & 3.83 & 26.98 \\\midrule
\multirow{3}{*}{\name{} ($L=7$) }     & GPT2& /      & 6.04 & 9.07  & 66.66  \\
& GPT2-Medium & /      & 6.94 & 11.55 & 91.07  \\
& GPT2-large  & /      & 9.62 & 16.77 & 139.35\\
 \bottomrule
\end{tabular}
}
\end{table}

In \cref{tab:gpu_hours}, we provide a detailed breakdown of the GPU hours shown in \cref{fig:gpu-hour-yelp}. 
We consider the process of generating DP synthetic data given a private dataset. \ftgenerator{} \cite{yue2022synthetic} requires two steps: (1) finetuning a pretrained data generator with DP-SGD, and (2) generating samples from the finetuned data generator, whereas \name{} requires only one step (\cref{algo}). In \cref{tab:gpu_hours}, we list the GPU hours of each step of each method. For \citet{yue2022synthetic}, we use the hyper-parameters in their Table 8.

We can see that the majority of the time spent by \ftgenerator{} is the DP-fine-tuning stage, which is already much more costly than the total cost of \name{}. This results from two factors: (1) Training is costly due to the backpropagation, especially for large models; (2) DP-SGD requires per-sample gradients, which further increases the memory and computation cost. In contrast, \name{} only requires model inference and does not require model training, and is thus more efficient.

It is also worth noting that once the model is DP finetuned, \ftgenerator{} can efficiently generate many samples with only model inference. It is illustrated by the small GPU hours in the ``Generation'' step of \ftgenerator{}. In contrast, in \name{}, the required GPU hour is positively correlated with the number of samples. Therefore, \ftgenerator{} can become more efficient than \name{} when the number of generated samples is large enough. However, the original \pename{} paper \cite{lin2023differentially} proposed an efficient way to generate more DP samples after \pename{} is done, by passing the generated samples through \samplevariationapiname{}. In the context of text generation with LLMs, this approach is expected to have a similar overhead as generating more samples from the DP-finetuned generator in \ftgenerator{}. We defer the study of this approach to future work.

\subsection{Comparison Between \name{} and \pename{}}
\label{app:compare_pe_augpe}

We compare \name{} against \pename{} when using \gptthreepointfive{} as the generator on three datasets.
The results in  \cref{tab:pe_augpe_gpt3.5} show that \name{} is always better than \pename{} on \pubmed{} for \gptthreepointfive{}. Moreover, \name{} is better for  \openreview{} \openreviewrecom{} classification task and \yelp{} \yelprating{} classification task.
As \name{} supports \pename{} as a special case by changing the hyperparameters of $L$ and $K$, the practitioner can adjust those hyperparameters for a specific downstream task and find the best settings to generate synthetic data.

\begin{table}[h]
\caption{Comparision between \name{} and \pename{} when using \gptthreepointfive{} as generator on three datasets.}
\label{tab:pe_augpe_gpt3.5}
\resizebox{\columnwidth}{!}{%
\begin{tabular}{ll|cc|cc|cc|ccc}\toprule
\multirow{1}{*}{Data Type (Size)} & \multirow{1}{*}{Method}  & \multicolumn{2}{c}{$\epsilon=\infty$} & \multicolumn{2}{|c|}{$\epsilon=4$} & \multicolumn{2}{c|}{$\epsilon=2$} & \multicolumn{2}{c}{$\epsilon=1$}    \\ \midrule
\yelp{}  &  &    \yelprating & \yelpcategory &  \yelprating & \yelpcategory   &  \yelprating & \yelpcategory   &  \yelprating & \yelpcategory \\\midrule
Synthetic (5000) & \pename{} $\gets$ \name{} ($k=3,L=1$)  & 67.9 & \textbf{74.7}  & 67.1 & \textbf{74.6} &  67.2 & \textbf{74.6}  &  67.6 & \textbf{74.7}  \\
\rowcolor{lightgray}
Synthetic (5000) &  \name{} ($k=0,L=4$) &  \textbf{68.4} &  74.1  & \textbf{68.1} & 74.0 & \textbf{67.8} & 74.3 & \textbf{67.9} & 74.0  \\ \midrule\midrule
\openreview{} & &   \openreviewarea & \openreviewrecom &  \openreviewarea & \openreviewrecom &  \openreviewarea & \openreviewrecom &  \openreviewarea & \openreviewrecom \\\midrule
 Synthetic (2000) & \pename{} $\gets$ \name{} ($k=3,L=1$)    &  43.6 & 42.4 & \textbf{43.6} & 43.5  & \textbf{44.6} &  43.7  &  \textbf{42.0} &  42.9 \\
 \rowcolor{lightgray} 
 Synthetic (2000) &  \name{} ($k=0,L=4$)    &  \textbf{45.4}  & \textbf{43.5} & 43.5 & \textbf{44.6} & 42.8  & \textbf{44.5} & 41.9 & \textbf{43.1}  \\\midrule\midrule
\pubmed{} & &  \bertmini & \bertsmall & \bertmini & \bertsmall & \bertmini & \bertsmall & \bertmini & \bertsmall\\\midrule
Synthetic (2000) & \pename{} $\gets$ \name{} ($k=3,L=1$) & 29.7& 31.8 & 29.6  & 31.8 & 29.7 & 31.9 & 29.8 & 31.9  \\
 \rowcolor{lightgray}
 Synthetic (2000) & \name{} ($k=0,L=4$)  & \textbf{30.4} & \textbf{32.7} & \textbf{30.3} & \textbf{32.5} & \textbf{30.2} & \textbf{32.5}   &\textbf{ 30.1} & \textbf{32.4} \\
 \bottomrule
\end{tabular}
}
\end{table}

\subsection{Ablation Study on Variation API Prompt Design}

\begin{table}[h]
\caption{Evaluation on Variation API designs for \gpttwo{} and \gptthreepointfive{} on \yelp{}. Fill-in-the-blanks is prefered for \gptthreepointfive.}
\centering
\label{tab:effect_variation_api}
\resizebox{0.6\columnwidth}{!}{%
\begin{tabular}{l|ll|ccccccccc}\toprule
\multirow{2}{*}{Variation API prompt} &  \multicolumn{2}{c}{GPT-2} &  \multicolumn{2}{|c}{GPT-3.5}    \\ 
&    Rating & Category &    Rating & Category  \\\midrule
paraphrasing & 67.5 & 74.8 &  67.5 & 74.3\\
paraphrasing  w/ few-shot demos & \textbf{67.8} & 73.6 & 65.7 & 74.2 \\
fill-in-the-blanks & 66.3  & \textbf{74.6} & \textbf{67.9} &  74.6 \\
 fill-in-the-blanks w/ few-shot demos & 67.6  & 74.8    & \textbf{67.9} &  \textbf{74.7}\\
 \bottomrule
\end{tabular}
}
\end{table}

The results in \cref{tab:effect_variation_api} show that fill-in-the-blanks prompt (with few-shot demonstrations) yields better results for \gptthreepointfive{}.
For \gpttwo{},  paraphrasing can be an effective strategy. Although fill-in-blanks leads to high accuracy on \yelp{} \yelpcategory{} classification task, we find that the generated texts have many unfilled blanks ``\_\_'' upon inspection.

\subsection{Leveraging Open-source LLMs as Generator for \name{}}
We use opensource LLMs from Huggingface as data generators in \name{}. We find that \llamatwosevenb{} does not follow the fill-in-the-blank prompts well and often
leaves blanks (``\_\_'') in the generated texts. Also, it struggles to adhere to the word prompt and the length of synthetic sequences exhibits a large gap from the targeted word  specified in the prompt.  It might be because they are not explicitly instruction/RLHF-tuned for those blank-filling and word count tasks, and have inferior instruction-following and in-context learning capabilities compared to GPT-3.5.
Therefore, we turn to use the same hyperparameter setup as \gpttwox{} models for those open-source LLMs. 
The results in \cref{tab:effect_data_generator} show that \gptthreepointfive{} outperforms most of the  models on \pubmed{} tasks and  \openreview{} \openreviewrecom{} classification task by a large margin. For \openreview{} \openreviewarea{} task, Mixtral-8x7B-v0.1 is better than \gptthreepointfive{}, demonstrating the competitive generation power of Mixtral-8x7B-v0.1 for academic reviews in machine learning domains.

\subsection{Effect of Rank-based Sampling}
\label{app:exp_rank_ablation}

\begin{table}[h]
\caption{Comparing rank-based sampling against probability-based random sampling for \name{} with \gpttwo{}-series models on three datasets.}
\label{tab:rank_full}
\centering
\resizebox{0.6\columnwidth}{!}{%
\begin{tabular}{ll|cc|cc}
\toprule
\multirow{1}{*}{Data Type (Synthetic Data Size)} & \multirow{1}{*}{Data Generator}   & Rank & Prob & Rank & Prob \\ \midrule
&    & \multicolumn{2}{c|}{ \yelprating }   & \multicolumn{2}{c}{\yelpcategory}  \\
\cmidrule(lr){3-4} \cmidrule(lr){5-6}
\multirow{3}{*}{Yelp (5000)} & GPT-2 & \textbf{67.5} & 66.7 & \textbf{74.8} & 74.7 \\
 & GPT-2 Medium & 67.5 & \textbf{67.7} & \textbf{74.9 }& 74.6 \\
 & GPT-2 Large & \textbf{67.5} & 67.1 & \textbf{74.5 }& 74.4 \\
\midrule
 &   &  \multicolumn{2}{c|}{\openreviewarea} & \multicolumn{2}{c}{\openreviewrecom} \\
\cmidrule(lr){3-4} \cmidrule(lr){5-6}
\multirow{3}{*}{OpenReview (2000)} & GPT-2 & \textbf{42.4 }& 39.8 & 32.1 & 32.1 \\
 & GPT-2 Medium & \textbf{41.0} & 37.1 & \textbf{32.3} & 32.0 \\
 & GPT-2 Large & \textbf{42.1} & 40.1 & \textbf{32.1 }& 32.0 \\
\midrule
 &   &  \multicolumn{2}{c|}{\bertmini} & \multicolumn{2}{c}{\bertsmall} \\
\cmidrule(lr){3-4} \cmidrule(lr){5-6}
\multirow{3}{*}{PubMed (2000)} & GPT-2 & \textbf{24.5} & 23.4 & \textbf{26.7} & 25.4 \\
 & GPT-2 Medium & \textbf{25.5} & 23.9 & \textbf{27.7} & 25.9 \\
 & GPT-2 Large & \textbf{25.7} & 24.1 & \textbf{28.0} & 26.0 \\
\bottomrule
\end{tabular}
}
\end{table}
We compare our proposed rank-based sampling (\cref{line:directuse}) against probability-based random sampling in the original \pename{} (\cref{line:drawfromhist}) across \gpttwo{}, \gpttwom{} and \gpttwol{} on three datasets. 
The results in \cref{tab:rank_full} indicate that our proposed rank-based sampling (\cref{line:directuse}) consistently outperforms probability-based random sampling in the original \pename{} (\cref{line:drawfromhist}), due to the elimination of sample redundancy inherent in random sampling, as rank-based sampling exclusively selects the top $N_{\syn}$ samples.

\subsection{Effect of Iteration $T$ on DP Utility}
\label{app:effect_t_dp}
\cref{tab:effect_private_data} presents the results on the non-DP setting, serving as an ablation study to underscore the role of private data in \name{}. In the DP setting, given a fixed privacy budget, a larger $T$ requires more noise, which may compromise the utility of the DP histogram. On the other hand, a larger $T$ 
allows for more iterations of sample improvement. %
We want to study the joint effect of these two factors.

We conducted experiments comparing the utility of the algorithm at  $t=1$ and $t=10$ on three datasets under $\epsilon=4,2,1$.  The results in \cref{tab:effect_t_epsilon4,tab:effect_t_epsilon2,tab:effect_t_epsilon1} show that $t=10$ consistently yields better utility than $t=1$ across all three privacy budget levels, underscoring the effectiveness of \name{}'s iterative improvement mechanism. This finding suggests that, despite the increased noise, the algorithm can robustly preserve useful statistical properties in the DP histogram and generate high-quality texts under $t=10$.

\begin{table}[ht]
\centering
\caption{Effect of \name{} iteration $t$ on the DP utility under $\epsilon=4$.}
\begin{tabular}{l|cc|cc|cc}
\toprule
 & \multicolumn{2}{c|}{\yelp{}} & \multicolumn{2}{c|}{\openreview{}} & \multicolumn{2}{c}{\pubmed{}} \\
\midrule
 & \yelprating & \yelpcategory & \openreviewarea & \openreviewrecom & \bertmini & \bertsmall \\
\midrule
$t=1$ & 64.7 & 73.5 & 36.5 & 42.1 & 29.9 & 32.3 \\
$t=10$ & 67.8 & 74.6 & 43.5 & 44.6 & 30.3 & 32.5 \\
\bottomrule
\end{tabular}
\label{tab:effect_t_epsilon4}
\end{table}

\begin{table}[ht]
\centering
\caption{Effect of \name{} iteration $t$ on the DP utility under $\epsilon=2$.}
\begin{tabular}{l|cc|cc|cc}
\toprule
 & \multicolumn{2}{c|}{\yelp{}} & \multicolumn{2}{c|}{\openreview{}} & \multicolumn{2}{c}{\pubmed{}} \\
\midrule
 & \yelprating & \yelpcategory & \openreviewarea & \openreviewrecom & \bertmini & \bertsmall \\
\midrule
$t=1$ & 63.9 & 73.6 & 37.2 & 42.0 & 29.9 & 32.3 \\
$t=10$ & 67.4 & 74.3 & 42.8 & 44.5 & 30.2 & 32.5 \\
\bottomrule
\end{tabular}
\label{tab:effect_t_epsilon2}
\end{table}

\begin{table}[ht]
\centering
\caption{Effect of \name{} iteration $t$ on the DP utility under $\epsilon=1$.}
\begin{tabular}{l|cc|cc|cc}
\toprule
 & \multicolumn{2}{c|}{\yelp{}} & \multicolumn{2}{c|}{\openreview{}} & \multicolumn{2}{c}{\pubmed{}} \\
\midrule
 & \yelprating & \yelpcategory & \openreviewarea & \openreviewrecom & \bertmini & \bertsmall \\
\midrule
$t=1$ & 63.8 & 73.1 & 37.4 & 41.7 & 30.0 & 32.3 \\
$t=10$ & 66.8 & 74.7 & 41.9 & 43.1 & 30.1 & 32.4 \\
\bottomrule
\end{tabular}
\label{tab:effect_t_epsilon1}
\end{table}

\subsection{Embedding Distribution Distance Between Real and Synthetic data}
\label{app:emb_distance}

We report the results of embedding distribution distance between real and synthetic data on \yelp{} in \cref{fig:emd-dist-gpt2-compare-yelp-10k}, and on \pubmed{} in \cref{fig:emd-dist-gpt2-compare-pubmed-2k}. 
When using the same base model \gpttwo{} for a fair comparison,  we observe that under DP and non-DP settings, 
\name{} can obtain similar and even lower embedding distribution distances between real and synthetic samples for certain metrics compared to fine-tuning.
For example, on \yelp{} dataset, under DP, \name{} yields better FID, precision, recall, F1 than \ftgenerator{} and achieves comparable  MAUVE scores. 
On \pubmed{} dataset, under DP, \name{} yields better FID, MAUVE scores, KL divergence, and TV divergence than \ftgenerator{}. 
These findings highlight the promise %
of employing the API-only method for DP synthetic text generation.

\begin{figure*}[t]
    \centering
    \includegraphics[width=1\linewidth]{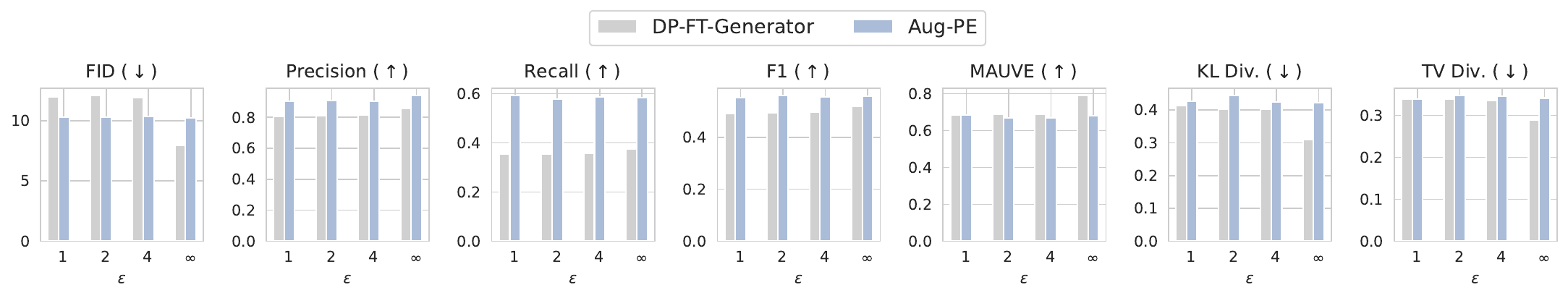} 
     \caption{Evaluation on distribution distances between \yelp{} real data and \gpttwo{} generated 10k DP synthetic samples.}
    \label{fig:emd-dist-gpt2-compare-yelp-10k}
    \vspace{-2mm}
\end{figure*}

\begin{figure*}[t]
    \centering
    \includegraphics[width=1\linewidth]{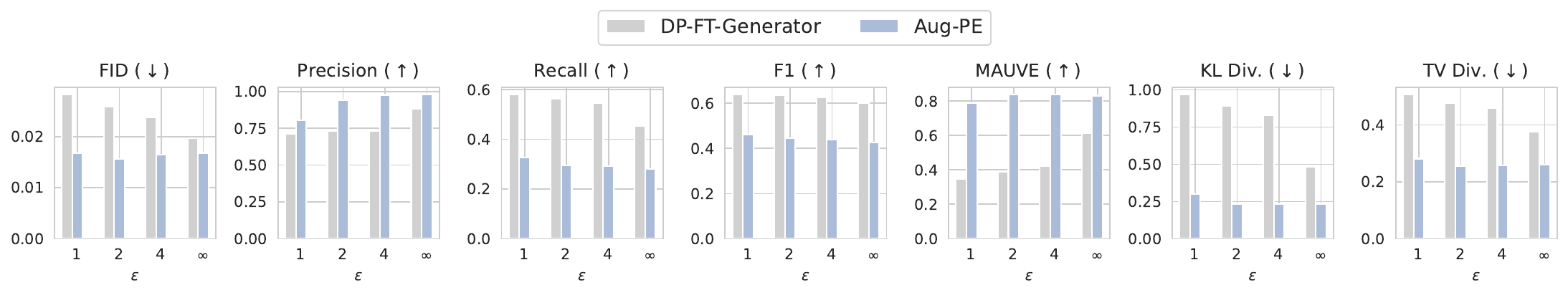} 
    \caption{Evaluation on distribution distances between \pubmed{} real data and \gpttwo{} generated 2k DP synthetic samples.
    }
    \label{fig:emd-dist-gpt2-compare-pubmed-2k}
    \vspace{-2mm}
\end{figure*}

\subsection{Downstream Task Utility Under Various Synthetic Data Size}
\label{app:downstream_task_utility_full}

\subsubsection{Utility on \yelp{}}

We report the full results of downstream accuracy  on \yelp{} in \cref{tb:yelp_utility_full}.
We find that
(1) when using the same base model %
for a fair comparison, we see that under DP settings, \name{} demonstrates competitive (or even better) utility on downstream classification tasks compared to fine-tuning.  The scores are also close to that of the downstream algorithms trained on the real data under DP directly, demonstrating the promise of DP synthetic text as a tool for DP machine learning.
(2) For large models like \gpttwol{} and \gpttwom{}, more synthetic samples (e.g., 100k) from \name{} can enhance downstream utility.
However, for \gpttwo{}, sometimes 10k synthetic samples can lead to better downstream utility than 100k samples, which might be due to the low-quality data generated from the small model that hurts the performance.

\begin{table}[h]
\caption{Classification accuracy of downstream \robertabase{} model under $\epsilon=\infty, 4, 2, 1$ on \yelp{} for two downstream tasks: review rating and business category classification.
(i) Compared to \ftgenerator{}, 
    in some cases, downstream accuracy of \name{} is higher  (\ua{})  under the same synthetic data size and the same \gpttwo-series data generator. Leveraging the inherent knowledge within stronger LLM, \gptthreepointfive, \name{} can achieve higher accuracy. (ii) Compared to traditional method \ftdownstream{}, \name{} can also obtain higher accuracy under DP with the same synthetic data size.
}
\label{tb:yelp_utility_full}
\resizebox{\columnwidth}{!}{%
\begin{tabular}{llllllllllll}\toprule
\multirow{2}{*}{Data Type (Size)} & \multirow{2}{*}{Method} & \multirow{2}{*}{Data Generator} & \multicolumn{2}{c}{$\epsilon=\infty$} & \multicolumn{2}{c}{$\epsilon=4$} & \multicolumn{2}{c}{$\epsilon=2$} & \multicolumn{2}{c}{$\epsilon=1$}    \\ 
 &  &  & Rating & Category & Rating & Category   & Rating & Category   & Rating & Category \\\midrule
  Original (1,939,290) & \ftdownstream  & - & 76.0 & 81.6 & 67.5 & 72.8 & 67.2 & 72.0 & 66.8 & 71.8 \\
Original (100,000)   & \ftdownstream   &  - &   72.7 & 75.5  &    65.0  & 71.2 & 64.1 & 70.0 & 62.9 & 68.7  \\
Original (10,000)   & \ftdownstream  & -    & 70.9 & 76.2 & 44.8 & 61.8 & 44.8 & 61.8  & 44.8 & 61.8    \\
Original (5,000)   & \ftdownstream   &  - &  70.5 & 75.1 & 44.8 & 61.8 & 44.8 & 61.8 & 44.8 & 61.8 \\
\midrule

Synthetic (5000) & \ftgenerator{} & \gpttwo & 70.3 & 75.9 & 68.2 & 74.1 & 67.2 & 73.1 & 66.4 & 73.9 \\

Synthetic (10000) & \ftgenerator{} & \gpttwo & 71.1 & 75.8 & 68.2 & 73.0 & 67.7 & 73.2 & 66.7 & 73.7 \\

Synthetic (100000) & \ftgenerator{} & \gpttwo & 71.0 & 75.6 & 66.8 & 72.6 & 67.0 & 72.3 & 65.5 & 71.8 \\

\rowcolor{lightgray} Synthetic (5000) & \name{} & \gpttwo & 67.5 & 74.8 & 66.4 & 74.9 \ua{} & 67.1 & 74.7 \ua{} & 66.9 \ua{} & 74.4 \ua{} \\
\rowcolor{lightgray} Synthetic (10000) & \name{} & \gpttwo & 67.2 & 75.1 & 66.6 & 75.3 \ua{} & 66.2 & 74.9 \ua{} & 66.0 & 74.6 \ua{} \\
\rowcolor{lightgray} Synthetic (100000) & \name{} & \gpttwo & 67.1 & 76.0 \ua{} & 66.3 & 75.1 \ua{} & 66.1 & 75.0 \ua{} & 65.7 \ua{} & 74.5 \ua{} \\
\midrule

Synthetic (5000) & \ftgenerator{} & \gpttwom & 70.0 & 75.0 & 69.1 & 74.6 & 67.8 & 74.3 & 67.4 & 74.1 \\

Synthetic (10000) & \ftgenerator{} & \gpttwom & 70.7 & 75.6 & 68.8 & 74.4 & 68.2 & 73.8 & 67.5 & 73.9 \\

Synthetic (100000) & \ftgenerator{} & \gpttwom & 71.9 & 76.3 & 68.1 & 73.9 & 67.8 & 74.3 & 67.9 & 73.3 \\
\rowcolor{lightgray} Synthetic (5000) & \name{} & \gpttwom & 67.5 & 74.9 & 66.8 & 74.6 & 67.8 & 74.7 \ua{} & 67.4 & 74.6 \ua{} \\
\rowcolor{lightgray} Synthetic (10000) & \name{} & \gpttwom & 67.5 & 74.9 & 67.4 & 74.9 \ua{} & 67.6 & 75.1 \ua{} & 67.1 & 74.7 \ua{} \\
\rowcolor{lightgray} Synthetic (100000) & \name{} & \gpttwom & 68.2 & 75.8 & 67.4 & 75.5 \ua{} & 66.6 & 75.3 \ua{} & 66.2 & 74.7 \ua{} \\
\midrule

Synthetic (5000) & \ftgenerator{} & \gpttwol & 70.4 & 75.4 & 68.7 & 74.2 & 69.8 & 75.1 & 68.7 & 74.6 \\

Synthetic (10000) & \ftgenerator{} & \gpttwol & 70.7 & 74.3 & 69.2 & 74.9 & 69.7 & 75.2 & 68.9 & 74.6 \\

Synthetic (100000) & \ftgenerator{} & \gpttwol & 71.8 & 74.1 & 69.5 & 74.5 & 68.7 & 74.5 & 69.6 & 74.4 \\
\rowcolor{lightgray} Synthetic (5000) & \name{} & \gpttwol & 67.5 & 74.5 & 67.3 & 74.4 \ua{} & 65.8 & 74.1 & 66.6 & 75.0 \ua{} \\
\rowcolor{lightgray} Synthetic (10000) & \name{} & \gpttwol & 67.1 & 74.7 \ua{} & 67.1 & 74.9 & 66.6 & 74.7 & 67.0 & 74.4 \\
\rowcolor{lightgray} Synthetic (100000) & \name{} & \gpttwol & 67.3 & 75.8 \ua{} & 67.6 & 75.7 \ua{} & 66.8 & 75.4 \ua{} & 66.0 & 75.3 \ua{} \\
\midrule
\rowcolor{lightgray} Synthetic (5000) & \name{} & GPT-3.5 & 68.4 & 74.1 & 68.1 & 74.0 & 67.8 & 74.3 & 67.9 & 74.0\\ 
 \bottomrule
\end{tabular}
}
\end{table}

\subsubsection{Utility on \openreview{}}
We report the downstream accuracy on \openreview{} in \cref{tb:openreview-utility}.
The key observations are:
(1) Under DP when using the same \gpttwo{}/\gpttwom{}/\gpttwol{} as the base model, \name{} achieve similar classification accuracy and %
classification accuracy compared with \ftgenerator{}. This again demonstrates that \name{} is a promising alternative to DP fine-tuning.
(2) More synthetic samples lead to better area classification accuracy for the three \gpttwox{} models, indicating that \name{} scales well with the synthetic sample size. Note that  both \name{} and \ftgenerator{} do not perform well on review rating classification tasks across different data sizes, which shows the inherent limitation of \gpttwox{} models -- they may struggle to generate academic texts with correct sentiments. 
(3) \name{} with \chatgpt{} achieves %
better utility than \name{} with \gpttwol{} on both tasks  with or without DP. This suggests that \name{} benefits from larger and more powerful LLMs. We expect that as the capability of LLMs quickly evolves, \name{} can be even more promising in the future.
(4) However, there is still a gap between the results of \name{} under non-DP setting $\epsilon=\infty$ and the results on the original data. This suggests that even in the non-DP setting, \name{} is still not able to recover the distribution of the real data. This gap is unavoidable in the DP setting. We hypothesize that better hyper-parameter tunings (e.g., the variation degree) could lower the gap. We leave a more careful investigation of this issue to future work.

\begin{table}[h]
\caption{Classification accuracy of downstream \robertabase{} model under $\epsilon=\infty, 4, 2, 1$ on \openreview{} for two downstream tasks: review area and rating classification.
(i) Compared to \ftgenerator{}, 
    in some cases, downstream accuracy of \name{} is higher  (\ua{})  under the same synthetic data size and the same \gpttwo-series data generator. Leveraging the inherent knowledge within stronger LLM, \gptthreepointfive, \name{} can achieve higher accuracy. (ii) Compared to traditional method \ftdownstream{}, \name{} can also obtain higher accuracy under DP with the same synthetic data size.
    }
\label{tb:openreview-utility}
\resizebox{\columnwidth}{!}{%
\begin{tabular}{lllllllllllll}\toprule
\multirow{2}{*}{Data Type (Size)} & \multirow{2}{*}{Method} & \multirow{2}{*}{Data Generator} & \multicolumn{2}{c}{$\epsilon=\infty$} & \multicolumn{2}{c}{$\epsilon=4$} & \multicolumn{2}{c}{$\epsilon=2$} & \multicolumn{2}{c}{$\epsilon=1$}    \\
 &  &  & Area & Rating & Area & Rating   & Area & Rating   & Area & Rating \\\midrule
\multirow{1}{*}{Original (8396)} & \ftdownstream & - & 65.2 & 50.9 & 30.5 & 32.0 & 30.5 & 32.0 & 30.5 & 32.0 \\
\multirow{1}{*}{Original (2000)} & \ftdownstream  & - & 55.3 &  47.8 & 30.5 & 32.0 & 30.4 & 25.5 & 6.3 & 19.8 \\\midrule

Synthetic (2000) & \ftgenerator{} & \gpttwo & 47.5 & 32.0 & 32.1 & 32.0 & 31.9 & 32.0 & 32.1 & 32.0  \\

Synthetic (3000) & \ftgenerator{} & \gpttwo & 48.0 & 32.0 & 34.1 & 32.0 & 33.6 & 32.0 & 33.6 & 32.0  \\
   
Synthetic (5000) & \ftgenerator{} & \gpttwo & 48.3 & 35.8 & 32.7 & 32.0 & 30.5 & 32.0 & 35.6 & 31.1 & \\
\rowcolor{lightgray} Synthetic (2000) & \name{} & \gpttwo & 42.4 & 32.1 \ua{} & 39.9 \ua{} & 32.1 \ua{} & 38.8 \ua{} & 32.1 \ua{} & 37.6 \ua{} & 32.0  \\
\rowcolor{lightgray} Synthetic (3000) & \name{} & \gpttwo & 43.2 & 32.0 & 39.1 \ua{} & 32.0 & 38.6 \ua{} & 32.1 \ua{} & 39.5 \ua{} & 32.1 \ua{}  \\
\rowcolor{lightgray} Synthetic (5000) & \name{} & \gpttwo & 43.4 & 32.1 & 40.1 \ua{} & 32.0 & 39.2 \ua{} & 32.0 & 37.9 \ua{} & 32.0 \ua{}  \\
\midrule 

Synthetic (2000) & \ftgenerator{} & \gpttwom & 49.7 & 36.5 & 40.3 & 32.0 & 33.5 & 31.9 & 35.6 & 31.9  \\

Synthetic (3000) & \ftgenerator{} & \gpttwom & 50.6 & 38.7 & 38.4 & 32.0 & 36.5 & 31.3 & 33.1 & 30.6  \\

Synthetic (5000) & \ftgenerator{} & \gpttwom & 50.3 & 41.2 & 39.8 & 31.4 & 37.4 & 31.7 & 34.6 & 31.0  \\
\rowcolor{lightgray} Synthetic (2000) & \name{} & \gpttwom & 41.0 & 32.3 & 36.9 & 32.0 & 36.0 \ua{} & 32.0 \ua{}  & 36.6 \ua{} & 32.1  \ua{} \\
\rowcolor{lightgray} Synthetic (3000) & \name{} & \gpttwom & 42.1  & 32.1 & 38.3 & 32.1 \ua{} & 38.9 \ua{} & 32.1 \ua{} & 37.5 \ua{} & 32.1 \ua{}  \\
\rowcolor{lightgray} Synthetic (5000) & \name{} & \gpttwom & 43.5 & 32.5 & 37.5 & 32.0 \ua{} & 35.5 & 32.0 \ua{} & 36.8 \ua{} & 32.1 \ua{}  \\
\midrule
Synthetic (2000) & \ftgenerator{} & \gpttwol & 48.3 & 42.9 & 38.9 & 33.7 & 40.4 & 33.6 & 38.6 & 32.2 & \\

Synthetic (3000) & \ftgenerator{} & \gpttwol & 49.8 & 43.7 & 41.3 & 33.9 & 42.8 & 31.6 & 38.2 & 32.7 & \\

Synthetic (5000) & \ftgenerator{} & \gpttwol & 52.5 & 44.5 & 42.0 & 34.2 & 41.7 & 34.9 & 40.1 & 32.8  \\

\rowcolor{lightgray} Synthetic (2000) & \name{} & \gpttwol & 42.1 & 32.1 & 38.8 & 32.0 & 38.4 & 32.0 & 38.1 & 32.0  \\

\rowcolor{lightgray} Synthetic (3000) & \name{} & \gpttwol & 44.0 & 32.1 & 39.7 & 32.2 & 38.4 & 32.1 \ua{} & 36.4 & 32.0  \\

\rowcolor{lightgray} Synthetic (5000) & \name{} & \gpttwol & 44.1 & 32.1 & 39.3 & 32.1 & 39.5 & 32.1 & 37.4 & 32.1  \\
\midrule
\rowcolor{lightgray} Synthetic (2000) & \name{} & \gptthreepointfive{} & 45.4 & 43.5 & 43.5 & 44.6 & 42.8 & 44.5 & 41.9 & 43.1  \\
\bottomrule
\end{tabular}
}
\end{table}

\begin{table}[h]
\caption{Next word prediction accuracy of downstream \bertmini{} model under $\epsilon=\infty, 4, 2, 1$ on \pubmed{}.
(i) Compared to \ftgenerator{},   \name{} with a strong LLM  \gptthreepointfive{} can achieve higher accuracy under DP with the same synthetic data size. (ii) Compared to  \ftdownstream{}, \name{} can also obtain higher accuracy under $\epsilon=2, 1$.
}
\label{tb:pubmed-utility-bertmini}
\centering
\resizebox{0.8 \columnwidth}{!}{%
\begin{tabular}{lllrrrrrrrrrrr}\toprule
\multirow{2}{*}{Data Type (Size)} & \multirow{2}{*}{Method} & \multirow{2}{*}{Data Generator} & \multicolumn{1}{c}{$\epsilon=\infty$} & \multicolumn{1}{c}{$\epsilon=4$} & \multicolumn{1}{c}{$\epsilon=2$} & \multicolumn{1}{c}{$\epsilon=1$}    \\
 &  &  & Accuracy & Accuracy & Accuracy & Accuracy \\\midrule
\multirow{1}{*}{Original (75316)} & Fine-tune & - & 43.5 & 30.7 & 28.9 & 26.7   \\
\multirow{1}{*}{Original (2000)} & Fine-tune & - & 33.5 & 2.2 &  1.8 &  1.4 \\\midrule

Synthetic (2000) & \ftgenerator& \gpttwo & 30.2 & 27.8 & 27.6 & 27.2 \\

Synthetic (3000) & \ftgenerator& \gpttwo & 31.1 & 28.7 & 28.4 & 28.1 \\

Synthetic (5000) & \ftgenerator& \gpttwo & 32.4 & 29.7 & 29.4 & 29.2 \\
\rowcolor{lightgray} Synthetic (2000) & \name{} & \gpttwo & 24.5 & 24.7 & 24.7 & 24.3 \\
\rowcolor{lightgray} Synthetic (3000) & \name{} & \gpttwo & 25.7 & 25.6 & 25.4 & 25.0 \\
\rowcolor{lightgray} Synthetic (5000) & \name{} & \gpttwo & 26.7 & 26.6 & 26.2 & 25.7 \\
\midrule
Synthetic (2000) & \ftgenerator& \gpttwom & 31.0 & 28.4 & 28.1 & 27.8 \\
Synthetic (3000) & \ftgenerator& \gpttwom & 32.0 & 29.2 & 29.1 & 28.8 \\
Synthetic (5000) & \ftgenerator& \gpttwom & 33.4 & 30.5 & 30.4 & 29.9 \\
\rowcolor{lightgray} Synthetic (2000) & \name{} & \gpttwom & 25.5 & 25.4 & 25.1 & 24.9 \\
\rowcolor{lightgray} Synthetic (3000) & \name{} & \gpttwom & 26.4 & 26.4 & 26.1 & 25.7 \\
\rowcolor{lightgray} Synthetic (5000) & \name{} & \gpttwom & 28.0 & 27.6 & 26.9 & 26.1 \\
\midrule
Synthetic (2000) & \ftgenerator& \gpttwol & 31.0 & 29.2 & 29.2 & 28.9 \\

Synthetic (3000) & \ftgenerator& \gpttwol & 32.2 & 30.3 & 30.1 & 29.8 \\

Synthetic (5000) & \ftgenerator& \gpttwol & 33.5 & 31.5 & 31.4 & 31.1 \\
\rowcolor{lightgray} Synthetic (2000) & \name{} & \gpttwol & 25.7 & 25.8 & 25.5 & 25.1 \\
\rowcolor{lightgray} Synthetic (3000) & \name{} & \gpttwol & 26.8 & 26.8 & 26.3 & 25.7 \\
\rowcolor{lightgray} Synthetic (5000) & \name{} & \gpttwol & 28.2 & 27.8 & 27.3 & 26.1 \\
\midrule
\rowcolor{lightgray} Synthetic (2000) & \name{}  & \gptthreepointfive{} & 30.4 & 30.3 & 30.2  & 30.1   \\
  \bottomrule
\end{tabular}
}
\end{table}

\begin{table}[h]
\caption{Next word prediction accuracy of downstream \bertsmall{} model under $\epsilon=\infty, 4, 2, 1$ on \pubmed{}.
(i) Compared to \ftgenerator{},   \name{} with a strong LLM  \gptthreepointfive{} can achieve higher accuracy under DP with the same synthetic data size. (ii) Compared to  \ftdownstream{}, \name{} can also obtain higher accuracy under small privacy budget.
}
\label{tb:pubmed-utility-bertsmall}
\centering
\resizebox{0.8 \columnwidth}{!}{%
\begin{tabular}{lllrrrrrrrrrrr}\toprule
\multirow{2}{*}{Data Type (Size)} & \multirow{2}{*}{Method} & \multirow{2}{*}{Data Generator} & \multicolumn{1}{c}{$\epsilon=\infty$} & \multicolumn{1}{c}{$\epsilon=4$} & \multicolumn{1}{c}{$\epsilon=2$} & \multicolumn{1}{c}{$\epsilon=1$}    \\
 &  &  & Accuracy & Accuracy & Accuracy & Accuracy \\\midrule
\multirow{1}{*}{Original (75316)} & Fine-tune & - & 47.6 & 34.1 & 32.5 & 30.4   \\
\multirow{1}{*}{Original (2000)} & Fine-tune & - & 34.6 & 1.1 & 0.8 & 0.6 \\\midrule

Synthetic (2000) & \ftgenerator& \gpttwo & 32.4 & 29.7 & 29.4 & 29.2 \\

Synthetic (3000) & \ftgenerator& \gpttwo & 33.1 & 30.5 & 30.3 & 30.0 \\

Synthetic (5000) & \ftgenerator& \gpttwo & 34.3 & 31.4 & 31.2 & 30.9 \\
\rowcolor{lightgray} Synthetic (2000) & \name{} & \gpttwo & 26.7 & 27.0 & 26.9 & 26.5 \\
\rowcolor{lightgray} Synthetic (3000) & \name{} & \gpttwo & 27.7 & 27.6 & 27.6 & 27.3 \\
\rowcolor{lightgray} Synthetic (5000) & \name{} & \gpttwo & 28.5 & 28.5 & 28.3 & 27.9 \\
\midrule
Synthetic (2000) & \ftgenerator& \gpttwom & 33.1 & 30.2 & 30.0 & 29.8 \\

Synthetic (3000) & \ftgenerator& \gpttwom & 33.8 & 31.3 & 30.9 & 30.6 \\

Synthetic (5000) & \ftgenerator& \gpttwom & 35.2 & 32.1 & 32.1 & 31.7 \\
\rowcolor{lightgray} Synthetic (2000) & \name{} & \gpttwom & 27.7 & 27.6 & 27.4 & 27.0 \\
\rowcolor{lightgray} Synthetic (3000) & \name{} & \gpttwom & 28.5 & 28.5 & 28.3 & 27.7 \\
\rowcolor{lightgray} Synthetic (5000) & \name{} & \gpttwom & 29.8 & 29.6 & 28.9 & 28.4 \\
\midrule
Synthetic (2000) & \ftgenerator& \gpttwol & 33.1 & 31.2 & 31.1 & 31.1 \\

Synthetic (3000) & \ftgenerator& \gpttwol & 34.2 & 32.4 & 32.2 & 32.0 \\

Synthetic (5000) & \ftgenerator& \gpttwol & 35.4 & 33.5 & 33.2 & 33.0 \\
\rowcolor{lightgray} Synthetic (2000) & \name{} & \gpttwol & 27.9 & 27.9 & 27.7 & 27.2 \\
\rowcolor{lightgray} Synthetic (3000) & \name{} & \gpttwol & 28.9 & 28.8 & 28.5 & 27.7 \\
\rowcolor{lightgray} Synthetic (5000) & \name{} & \gpttwol & 30.2 & 29.8 & 29.3 & 28.3 \\
\midrule
\rowcolor{lightgray} Synthetic (2000) & \name{}  & \gptthreepointfive{} & 32.7 & 32.5 & 32.5 & 32.4 \\

  \bottomrule
\end{tabular}
}
\end{table}

\subsubsection{Utility on \pubmed{}}
We report the  next-word prediction accuracy on \openreview{}  of downstream model \bertmini{} in \cref{tb:pubmed-utility-bertmini} and \bertsmall{} in \cref{tb:pubmed-utility-bertsmall}
We find that (1) under the same \gpttwox{} model as generator,  \name{} underperforms \ftgenerator{} on \pubmed{}. This is expected because \name{} relies on the knowledge within LLMs to generate high-quality texts without domain-specific finetuning, while \gpttwo{}-series models might have limited exposure to biomedical literature~\cite{radford2019language}.
(2) With powerful LLMs like \gptthreepointfive{}, \name{} can outperform \ftgenerator{} under DP. 
(3) Additionally, more synthetic samples lead to better downstream classification accuracy for the three \gpttwox{} models on \pubmed{}.

\subsection{Comparision to Text-to-Text Privatization Approaches}
\label{app:text_to_text_privatization}

This is an active line of research on text-to-text privatization techniques for generating differentially private text. We do not directly compare these methods in our main paper due to the key distinctions in privacy definitions:
\begin{enumerate}
    \item 
\textbf{Different privacy definitions}. Our method adopts the standard $(\epsilon,\delta)$-DP defined over neighboring datasets. This contrasts with
\begin{enumerate}
    \item Word-level metric DP \cite{feyisetan2020privacy,carvalho2023tem}: a specific metric for measuring word distance needs to be written in the privacy notation, and privacy guarantee is defined over neighboring words; 
    \item Local DP \cite{mattern2022limits,utpala2023locally}: privacy guarantee is defined over neighboring samples. 
\end{enumerate}
    \item \textbf{Poor privacy-utility trade-off in existing text-to-text anonymization methods}: While innovative, \citet{feyisetan2020privacy,carvalho2023tem,mattern2022limits,utpala2023locally} encounter challenges in achieving a good privacy-utility tradeoff under practical privacy budgets. 
    \item \textbf{Absence of privacy budgets}: The absence of detailed reporting on exact privacy budgets in \cite{utpala2023locally} hinders direct comparisons with our work.
\end{enumerate}

A qualitative comparison between text-to-text privatization methods and our method is shown in \cref{tab:qualitative_comp_text_privatization}.

\begin{table}[t]
\centering
\caption{A qualitative comparison between \name{} and text-to-text privatization approaches.}
\vspace{-2mm}
\label{tab:qualitative_comp_text_privatization}
\resizebox{0.9\linewidth}{!}{
\begin{tabular}{>{\raggedright\arraybackslash}p{5cm} >{\raggedright\arraybackslash}p{3cm} >{\raggedright\arraybackslash}p{3cm} >{\raggedright\arraybackslash}p{3cm} >{\raggedright\arraybackslash}p{3cm}
}
\toprule
\textbf{Name} &  \textbf{Method} & \textbf{Source of Randomness} & \textbf{Sensitivity Control} & \textbf{Privacy Guarantee} \\ \midrule
Madib~\cite{feyisetan2020privacy}
TEM~\cite{carvalho2023tem} & Word embedding perturbation & Word Embedding Noise & N/A (metric distance is included in privacy definition) & Word-level metric-$\epsilon d$ DP where $d$ is the distance metric \\ \midrule
Paraphraser~\cite{mattern2022limits} DP Prompt~\cite{utpala2023locally} & Paraphrasing with temperature & Temperature in the next token sampling stage & Clipped logits of each token & (Sample-level) $\epsilon$ Local DP \\ \midrule
\name{} (Ours) & Private evolution & Histogram noise & Each private sample only contributes one vote in the histogram & (Dataset-level) standard $(\epsilon, \delta)$-DP \\ \bottomrule
\end{tabular}
}
\vspace{-2mm}
\end{table}

\begin{table}[t]
\centering
\small
\caption{Randomly sampled synthetic data from Madib~\cite{feyisetan2020privacy} (word level metric-DP $\epsilon=10$) and \name{} (DP $\epsilon=1$). \name{}  with data generator \gptthreepointfive{} yields higher quality texts.}
\vspace{-2mm}
\label{tab:compare_mabid}
\resizebox{0.9\linewidth}{!}{
\begin{tabular}{p{2cm}|p{1.5cm}|p{4cm}|p{4cm}|p{4cm}}
\toprule
\textbf{Method} &\textbf{Privacy Guarantee} & \textbf{Yelp} & \textbf{OpenReview} & \textbf{PubMed} \\ \midrule
Madib & Word level metric-DP 
$\epsilon=10$  & i was born including raised he hardwick create during combination school , tony jones was in 'go to ' place for the greatest pizza ever . n't do n't live completed whitley rich and crazy visit a preparing times a year with weeks night cut 6 civil us took to tony 's work dinner bogota of normal end my 'local ' strangers , tried would suggest instance visitors ? seen 'd was yet to gone …(\textbf{omitted}) & . paper proposed bringing reinforcement buddhism based approach money automatically predictions graph augmentations for a graph neural network ( gnn ) classification problem there few authors creates taken label invariance ( data augmentations that do protect risks labels ) is part rich also felonies problem dealt gnn partner with…(\textbf{omitted}) & mandibular overdentures many a selection treatment option for placed edentulous diabetes only long-term predictable outcomes , using suspension loading facilitated cone currently , could early well repatriation loading protocols same mandibular implant overdentures number prevalent in in literature details a systematic review ,... (\textbf{omitted}) \\ \midrule
\name{} (\gptthreepointfive{}) &  DP $\epsilon=1$ & The fried chicken and the collard greens were some of the best Southern fare we've ever had, not to mention the amazing gumbo. We highly recommend this restaurant if you're in the area and can't wait to try some of their other flavorful dishes. Everything filled us up and left us satisfied. & This paper presents an innovative method of deep representational learning for facial expression recognition. The method is evaluated on CIFAR-10 and ImageNet datasets and it is comprehensive encompassing all facets of saliency modeling to proposed deep representational features for representing multiple saliencies. The paper is in a well-structured and the methods are clear… (\textbf{omitted}) & In this retrospective study, we aimed to investigate the prevalence of stroke and identify the factors associated with its occurrence. Data were extracted from medical records, along with symptoms, electrocardiograms (ECGs), and syncope in a cohort of patients with a mean age of 71 years. Of the total 345 patients, 28\% had cardiac abnormalities as revealed by ECGs, significantly higher than those without [p<0.001].. (\textbf{omitted}) \\ \bottomrule
\end{tabular}
\vspace{-2mm}
}
\end{table}

Next, we compare \name{} with the text-to-text privatization frameworks in detail:  word-level metric-DP frameworks \cite{feyisetan2020privacy,carvalho2023tem} and sample-level local-DP frameworks \cite{mattern2022limits,utpala2023locally}.

\subsubsection{Comparison to word-level metric-DP frameworks}
Madib~\cite{feyisetan2020privacy} and TEM~\cite{carvalho2023tem}  employ metric differential privacy to privatize each word independently and achieve word-level $\epsilon d$-Metric DP, where $d$ is the distance metric for neighboring words. Specifically, they perturb the embedding of each word and replace the current word with a new word whose embedding is closest to the noisy embedding.  However, \name{}  focuses on generating synthetic datasets with stronger guarantees provided by standard $(\epsilon, \delta)$-DP. Due to the fundamental differences in privacy definition: \textbf{(1) metric-DP v.s. DP; (2) word-level v.s. dataset-level privacy}, directly comparing our work with word-level metric-DP frameworks~\cite{feyisetan2020privacy,carvalho2023tem} is not feasible.

To understand their privacy-utility tradeoff, we run Madib~\cite{feyisetan2020privacy} to generate samples under word level metric-DP with a high privacy budget $\epsilon=10$. 
We followed their approach of perturbing $50$-dimensional Euclidean GloVe embeddings with Laplace noise. We are unable to evaluate TEM~\cite{carvalho2023tem} given that its code is not open-sourced. 

\cref{tab:compare_mabid} shows randomly sampled generated sentences from Madib and \name{}. Even with a high metric-DP budget ($\epsilon = 10$), Madib struggles to generate meaningful sentences on Yelp, OpenReview, and PubMed datasets. In contrast, \name{}, with a low DP budget ($\epsilon = 1$), can leverage GPT-3.5 as a data generator to produce fluent sentences across all three datasets.

\subsubsection{Comparison to sample-level local-DP frameworks}

Paraphraser \cite{mattern2022limits} and DP Prompt \cite{utpala2023locally}  focus on generating paraphrases for each private sample by varying the temperature during token sampling, which is regarded as a form of noise injection under the Local DP (LDP) framework.  The sensitivity of each sample to the output can be constrained by clipping the logits of each generated token. While innovative, these methods’ privacy budget scales linearly with the output's token length, presenting a challenge for generating longer sequences under a meaningful privacy budget.

It is worth noting that the mechanism for Local DP (taking a sample as input) and the mechanism for DP (taking a dataset as input) are not directly compatible. To establish a fair comparison between the Local DP in \cite{mattern2022limits,utpala2023locally}  and DP employed by  \name{}, we leveraged the conversion methodology in \citet{feldman2022hiding}  to convert $(\epsilon_0)$-LDP mechanism to $(\epsilon,\delta)$-DP mechanism for $\epsilon\ll \epsilon_0$, which requires shuffling the LDP outputs from each sample. 

We use the code implementation provided by \citet{feldman2022hiding}.\footnote{\url{https://github.com/apple/ml-shuffling-amplification}}
Due to the constraint that $\epsilon \ll \epsilon_0$,\footnote{\url{https://github.com/apple/ml-shuffling-amplification/blob/993d285a546114bf8c70c33d053dca322a755707/computeamplification.py\#L160}} 
the maximal Local DP $\epsilon_0$ that can be used for a valid conversion on \yelp{} (with 1.9M private samples) is $\epsilon_0=8.785$, which corresponds to DP $\epsilon=1.10$.

According to the Local DP guarantee in  \cite{mattern2022limits,utpala2023locally}, 
$\epsilon_0 = 2* \texttt{n\_tokens} * (b_2-b_1)/ \texttt{temperature}$, 
where $b_2, b_1$ is the upper/lower bound for each token logit. We set $b_2=1$ and $b_1=0$ following \cite{mattern2022limits}.   
With $\texttt{temperature}=2$,  $\epsilon_0=8.785$ only allows generating  $\texttt{n\_tokens}=8$ tokens, which significantly hurts the utility of generated texts.  To generate $\texttt{n\_tokens}=64$ tokens for \yelp{}, one would need at least LDP $\epsilon_0=64$ under $\texttt{temperature}=2$, and LDP $\epsilon_0=128$ under $\texttt{temperature}=1$, which far exceeds practical limits for meaningful privacy guarantees. This contrasts with \name{}'s capability to generate over $\texttt{n\_tokens}=1000$ tokens while maintaining high quality under tight DP budgets  (e.g., $\epsilon=1$) in our experiments. 

Furthermore, our attempt to directly evaluate Paraphraser \cite{mattern2022limits} and  DP Prompt \cite{utpala2023locally} was hindered by several practical challenges. 
\begin{enumerate}
    \item 
    Paraphraser: The dataset used for finetuning in Paraphraser is not publicly available, and the implementation details necessary for replicating the exact privacy guarantees are absent.
  \item  DP Prompt does not specify the exact $\epsilon_0$ used in the paper,  focusing instead on empirical privacy attack accuracy as a proxy. The epsilon values are reported for all other baselines but not for Paraphraser and DP Prompt in Section 4.1 of \citet{utpala2023locally}. Additionally, as mentioned in Section 4.3 of \citet{utpala2023locally},  models like ChatGPT do not expose logits, so the authors do not perform a logit clipping operation in many of their experiments. This  further disables the computation of an exact $\epsilon_0$  and  renders a direct quantitative comparison between DP Prompt  and \name{} infeasible.
\end{enumerate}

\subsection{\name{} Convergence under One Private Sample}
\label{app:converge_one_private_sample}

\begin{table}[h!]
\caption{\small Generation results from \name{} under \textit{one} \yelp{} private sample. The synthetic sample generated from GPT-3.5 aligns more closely with the training sample (\yelp{} review about Mexican food) over iterations.
The text length of the synthetic sample is controlled through our adaptive length mechanism.
Keywords with semantic similarity to the private sample are \wbox{highlighted}.
}
\label{tb:one-private-sample-yelp-gpt35}
\begin{center}
\resizebox{\columnwidth}{!}{%
\begin{tabular}{  c |p{8in}}
\toprule
Private & Real deal \wbox{Mexican food} in Chadds Ford, PA. \wbox{Excellent} burritos, \wbox{tacos}, and tamales. I plan to be a regular. Meals served traditionally with sliced radishes, pumpkin seeds and lime. \wbox{Freshly prepared} fillings, onion, cilantro, and red and green sauces. Brightly colored building across from Bavarian Motors. Nothing fancy, but great \wbox{authentic} food.\\\hline
iter 0 & I recently visited this restaurant having heard that they had one of the best \wbox{taco} stands in town. I was not disappointed! The \wbox{tacos} were incredibly delicious and tasted like they were made with fresh ingredients. The portions were generous, and the prices were extremely reasonable. The service was \wbox{excellent} too, with the staff being attentive and accommodating. The ambience of the restaurant was also great, making it a perfect spot for a casual night out. Overall, I had a fantastic experience and would definitely recommend this place to all those who love \wbox{tacos}. A definite 5-star rating!\\\hline
iter  1 & Despite having heard that they had one of the best \wbox{taco} stands in town, I was not disappointed. The \wbox{tacos}  were incredibly delicious and tasted like they were made with fresh ingredients. The portions were generous, and the prices were reasonable. The staff was attentive and accommodating. The ambience was great, making it a perfect spot for a casual night out. Overall, I had a fantastic experience and would highly recommend this place to all those craving \wbox{Mexican cuisine}. A solid 5-star rating from me!\\\hline
iter  2 & Despite having heard that they had some of the best \wbox{Mexican} food in town, the crowded parking lot stands in contrast. However, that was not the case. The food tasted \wbox{authentic} and fresh, with the guacamole and salsa being particularly noteworthy. The portions were generous, and the prices were reasonable. The staff was attentive and accommodating. The ambience was great, making it a perfect spot for a date night or a family dinner. Overall, I had a fantastic experience and would highly recommend this\\\hline
iter  3 & Many had heard that they had amazing \wbox{Mexican} food in town, but the crowded parking lot stands in contrast. However, once you step inside, the experience is incredible. The food is \wbox{authentic}, and the guacamole and salsa being particularly noteworthy. Prices were as well quite reasonable. The staff was attentive and accommodating. The ambiance was great, making it perfect for a date night or family dinner. I had a fantastic experience and highly recommend the restaurant to anyone seeking quality Mexican cuisine.\\\hline
iter  4 & Many locals had heard that they had the best \wbox{Mexican} food in town and the hype stands in its truth. However, upon stepping into the experience, it was incredible. The food was fresh, flavorful, and \wbox{authentic} with the guacamole and salsa being particularly noteworthy. The portions were well-sized and satisfying. The staff was attentive and accommodating. The ambiance was cozy and intimate, making it perfect for a romantic night out or casual dinner with friends. I had a great time and highly\\\hline
iter  5 & Many locals had raved about the best \wbox{authentic Mexican} food in town and they were not exaggerating. Walking in to the restaurant was refreshing. The food was fresh, the guacamole and salsa were noteworthy. The margaritas were strong and satisfying. The staff were attentive and accommodating. The ambiance was cozy and intimate, making it perfect for a romantic dinner with loved ones. The prices were great and the portions were generous. The fajitas were sizzling and the \wbox{tacos} were packed with flavor. Overall, this\\\hline
iter  6 & Many locals have raved about the best \wbox{authentic Mexican} food in town and they were not exaggerating. Walking in the restaurant was refreshing. The decor was colorful, lively and festive. The margarita was perfectly balanced and satisfying. The waiter was attentive and accommodating. The ambiance was cozy and intimate, making it perfect for a romantic dinner. The chips and salsa were packed with flavor. Overall, the food was great and the portions were generous. The fajitas were sizzling and the chicken was tender. The guacam\\\hline
iter  7 & Many locals consistently rave about the amazing \wbox{authentic Mexican} cuisine and we were not disappointed. The flavors were refreshing, the decor colorful and lively, and the margaritas satisfying. The waiter was attentive and accommodating. The outdoor seating was perfect for a romantic dinner and the chips and salsa were packed with flavor. Overall, the food was top-notch and the portions were generous. The chicken enchiladas and guacamole were particularly noteworthy. It was a great experience and we highly recommend this restaurant to anyone looking for a delicious meal and a\\\hline
iter  8 & Many foodies rave about the amazing \wbox{authentic Mexican} cuisine and they were not wrong. The flavors were richly robust, colorful and enticing, and the margaritas were top-notch. The enchiladas and guacamole were particularly outstanding. The service was attentive and accommodating and the outdoor seating was perfect for a leisurely dinner and people-watching. From the chips and salsa to the flavorful entrees, the food was superb with generous portions. The menu offers a large selection of delectable options which makes it easy to recommend to anyone looking for a satisfying meal\\\hline
iter  9 & Many foodies have raved about the amazing \wbox{authentic Mexican} cuisine. The flavors were bold and delicious. The \wbox{tacos} were rich and flavorful, with enticing toppings and \wbox{freshly made} guacamole. The service was attentive and accommodating, and the outdoor seating was perfect for a warm summer evening. From the crispy chips and flavorful salsa to the perfectly spiced dishes, the food was generous and satisfying. The menu offers a large selection of vegetarian and meat options which makes it easy to please anyone's taste buds. Overall, the experience was \wbox{excellent} and highly recommended for people looking for an \wbox{authentic Mexican} dining experience.\\\hline
\end{tabular}
}

\end{center}
\end{table}

\begin{table}[h!]
\caption{\small Generation results from \name{} under \textit{one} \openreview{} private sample. The topic (\wbox{highlighted}) of the generated sample changes from    ``privacy in machine learning'' (iter 0) to  ``adversarial robustness'' (iter 6), which is more closer to ``adversarial detection'' topic in the private sample. This shows that the synthetic sample generated from GPT-3.5 aligns more closely with the training sample over iterations. The text length of the synthetic sample is controlled through our adaptive length mechanism. 
}
\label{tb:one-private-sample-openreview-gpt35}
\begin{center}
\resizebox{\columnwidth}{!}{%
\begin{tabular}{  c |p{8in}}
\toprule
Private & The authors propose a new statistic, called Expected Perturbation Score (EPS), for \wbox{adversarial detection}. Based on EPS, the authors develop a Maximum Mean Difference (MMD) metric to measure the difference between test samples and natural samples, and further propose an EPS-based \wbox{adversarial detection} method (EPS-AD). Sufficient theoretical analysis and extensive experiments demonstrate the correctness and effectiveness of the proposed method. Strength: 1. This paper is easy to read. 2. The authors propose a new statistical method, called Expected Perturbation Score (EPS), which is able to obtain enough information to identify \wbox{adversarial examples} with only one example after various perturbations. 3. Sufficient theoretical analysis is performed to demonstrate that EPS is able to simulate the difference between the two distributions under mild conditions. Furthermore, extensive experimental results demonstrate the superiority of the proposed EPS-AD. 4. The proposed EPS will be an effective statistic in many applications, such as out-of-distribution detection and anomaly detection. Weaknesses: This paper is easy to read and interesting but still has some minor issues, please refer to weaknesses.\\\midrule
iter  0 & Review Example: Overall, this paper provides a comprehensive discussion on \wbox{privacy issues} related to machine learning applications. The authors discuss the concept of \wbox{differential privacy} as a potential solution to protect individual privacy while still maintaining the accuracy of the model. They also provide examples of implementation and evaluation of privacy-preserving techniques using real-world datasets. Strengths: - The paper provides a thorough explanation of differential privacy and how it can be utilized in various machine learning techniques. - The use of real-world datasets to demonstrate implementation and evaluation of privacy techniques adds credibility to the work. - The paper highlights the importance of privacy in machine learning and provides readers with potential solutions to protect individual privacy. Weaknesses: - More evaluation of the privacy techniques could be provided to better understand their effectiveness. - The paper focuses primarily on differential privacy and does not explore other privacy-preserving techniques. - The practical implementation of the techniques may be complex for those without a strong understanding of machine learning and data analysis. In summary, this paper is a strong contribution to the area of social aspects of machine learning, specifically in the field of privacy. It analyzes the concept of differential privacy, its implementation, and evaluation to protect individual privacy in machine learning models. Despite some limitations, the paper contributes significantly to the field and is recommended for acceptance.\\\midrule
iter  1 & In their paper '\wbox{Privacy in Machine Learning}: An Overview', the authors provide a comprehensive discussion on privacy issues in machine learning. The authors discuss the concept of differential privacy as a potential solution for individual privacy while maintaining the utility of the data. They also provide examples and analysis of privacy-preserving techniques on real-world datasets such as the Stanford and MNIST datasets. The thorough explanation of differential privacy and how it can be utilized in various machine learning techniques adds credibility to the paper. The paper highlights the importance of privacy in machine learning and provides a potential solution to individual privacy. Weaknesses: More evaluation of the privacy techniques could be provided to better understand their effectiveness. The paper primarily focuses on differential privacy and does not explore other privacy-preserving techniques. The technical implementation of the techniques may be difficult for those without a strong understanding of machine learning and statistical analysis. Overall, this paper contributes significantly to the area of social aspects of machine learning, and it is recommended for acceptance as a good paper.\\\midrule
iter  2 & In this paper, titled '\wbox{Policy Machine Learning}: An Overview of the Discussion on \wbox{Privacy Issues} in Machine Learning', the authors discuss privacy as a potential concern for individual privacy while maintaining the integrity of the data. They also provide examples of privacy-preserving techniques on real-world data from Stanford and MNIST. The thorough exploration of differential privacy can be utilized effectively which adds credibility to the paper. The paper emphasizes the importance of privacy in machine learning and provides a valuable contribution to the field. Weaknesses include the evaluation of techniques to be used to assess their effectiveness. The paper focuses on privacy issues and does not explore fairness-preserving methods. With its contribution to the social aspects of machine learning and statistical analysis, the paper is recommended with a rating of 8 as a good paper.\\\midrule
iter  3 & In their research paper, \wbox{Inference Attack Policy} Machine Learning: An {Interpretable} and Almost True Framework for Predictive Analytics, the authors highlight potential concerns for individual privacy while discussing the importance of privacy in machine learning. They also provide examples of how sensitive data from ImageNet and MNIST datasets can be utilized effectively while ensuring thorough differential privacy which adds credibility to the paper. The research emphasizes the importance of interpretability in machine learning, making a valuable contribution to the field of social aspects of machine learning. We recommend including case studies of how interpretability can be used to assess their effectiveness. The paper also outlines how it does not explore fairness and ethics methods. With this contribution to the field of machine learning and statistical modeling, the authors provide a valuable framework for policy inference attack in machine learning.\\\midrule
iter  4 & In their research paper, '\wbox{Inference Attacks} in Machine Learning: An {Interpretability} and Almost Interpretability Framework and its Application to Privacy and Analytics', the authors highlight the need for protecting sensitive data in machine learning. They provide examples of sensitive data from ImageNet and NIST datasets, emphasizing the importance of being thorough in privacy protection to ensure credibility to their research. The paper stresses the importance of interpretability in machine learning. By making a valuable contribution to this field, it provides case studies of how interpretability can be used to assess the effectiveness of machine learning models. The paper outlines various approaches to exploring fairness, transparency, and ethics in machine learning. The results of the study contribute to the need for a comprehensive policy to prevent inference attacks in machine learning.\\\midrule
iter  5 & In our research paper, entitled '\wbox{Adversarial Attacks} in Machine Learning: An Interpretability-Almost-Explainability Framework and its Application to Private Data Analysis', the authors emphasize the need for protecting sensitive data in machine learning. They provide examples by using data from Inet and MNIST dataset, and address the importance of privacy to ensure the credibility of the results. The paper is well-written and well-structured, making a valuable contribution to the field. Additionally, it highlights the importance of interpretability to enhance the effectiveness of machine learning models. The paper also focuses on fairness, transparency, and ethics in machine learning and the study presents a comprehensive analysis in \wbox{adversarial attacks}. We highly recommend accepting this good paper.\\\midrule
iter  6 & In our research paper titled '\wbox{Adversarial Attacks} in Machine Learning: An Interpretable and \wbox{Robustness-Enhancing} Framework and Empirical Data Analysis', the authors emphasize the significance of interpretability in machine learning. They provide a comprehensive approach using Integrated Gradients and M-Taylor expansions, to address the challenges and ensure the robustness of results. The paper is well-written, making valuable contributions to the field, and emphasizes the importance of interpretability to enhance the effectiveness of machine learning. Moreover, the study presents a comprehensive approach in defending against \wbox{adversarial attacks}. Therefore, I recommend accepting this good paper.\\\midrule
\end{tabular}
}
\end{center}
\end{table}

In this section, we only use \textit{one} private example in \cref{algo} to generate \textit{one} synthetic sample.  We qualitatively examine if the synthetic sample from \name{} increasingly resembles this specific private sample over the PE iterations. This offers a clearer illustration of \name{}'s convergence behavior.
Specifically, at each iteration, we generate  $K$ variations for the current synthetic sample, use the private sample to identify and vote for its nearest synthetic sample based on their embeddings, and select the nearest synthetic sample for the next iteration. 
\cref{tb:one-private-sample-yelp-gpt35} and \cref{tb:one-private-sample-openreview-gpt35} show the generations results from GPT-3.5 under one \yelp{}  private sample and one \openreview{} private sample, respectively. 

As shown \cref{tb:one-private-sample-yelp-gpt35}, after the voting, the selected synthetic sample relates to the term ``taco'',  a word present in the private example. By the second iteration, the synthetic sample includes the term ``Mexican food'', which aligns with the central theme of the private example. By the fifth iteration, the phrase ``authentic Mexican food'' surfaces in the synthetic sample, resonating with phrases like ``real deal Mexican food'' and ``great authentic food'' from the private example. This demonstrates that the synthetic sample increasingly aligns with the private sample as the iterations progress.

In the \openreview{} example presented in \cref{tb:one-private-sample-yelp-gpt35}, we note that the initial synthetic sample at iteration 0 pertains to the privacy aspects of machine learning, whereas the private sample focuses on adversarial detection and robustness. 
As the iterations progress, by iteration 4, the topic of synthetic sample shifts to ``inference attack in machine learning'', which aligns with the robustness theme of the private sample. 
By the fifth iteration, terms like ``Adversarial Attacks in Machine Learning'' and "Robustness-Enhancing" emerge in the synthetic sample, similar to the topic of ``adversarial detection'' from the private sample. It shows that the synthetic sample shifts the topic from privacy to robustness over PE iterations,  progressively aligning more closely with the private sample.

The above two examples demonstrate that \name{} can converge, by producing diverse variations and effectively selecting ones that closely align with the private example.

\end{document}